\documentclass{article}

\usepackage{arxiv}

\usepackage[utf8]{inputenc} 
\usepackage[T1]{fontenc}    
\usepackage{hyperref}       
\usepackage{url}            
\usepackage{booktabs}       
\usepackage{amsfonts}       
\usepackage{nicefrac}       
\usepackage{microtype}      

\usepackage{subcaption}    
\usepackage{lipsum}         
\usepackage{graphicx}
\usepackage{natbib}
\usepackage{doi}

\usepackage{amsmath}
\usepackage{amssymb}
\usepackage{mathtools}
\usepackage{amsthm}
\RequirePackage{times}
\usepackage{cleveref}       

\RequirePackage{fancyhdr}
\RequirePackage{xcolor} 
\RequirePackage{algorithm}
\RequirePackage{algorithmic}
\RequirePackage{eso-pic} 
\RequirePackage{forloop}
\RequirePackage{url}
\RequirePackage{caption}

\theoremstyle{plain}
\newtheorem{theorem}{Theorem}[section]
\newtheorem{proposition}[theorem]{Proposition}
\newtheorem{lemma}[theorem]{Lemma}
\newtheorem{corollary}[theorem]{Corollary}
\theoremstyle{definition}
\newtheorem{definition}[theorem]{Definition}

\theoremstyle{remark}
\newtheorem{remark}[theorem]{Remark}

\DeclareMathOperator*{\argmax}{arg\,max}
\newcommand{\Var}{\mathrm{Var}}
\newcommand{\reals}{\mathbb{R}}
\newcommand{\df}{\mathrm{df}}
\newcommand{\wass}{\mathrm{w}}
\newcommand{\states}{\mathcal{S}}
\newcommand{\actions}{\mathcal{A}}
\newcommand{\xc}{c}
\newcommand{\xC}{C}
\newcommand{\bs}{\mathcal{\xC}}
\newcommand{\distequiv}{\overset{\mathcal{D}}{=}}
\newcommand{\pibar}{\overline{\pi}}
\newcommand{\stationary}{\Pi}
\newcommand{\E}{\mathbb{E}}
\newcommand{\Uf}{U_f}
\usepackage{multirow}
\usepackage{enumitem}

\title{Decoupling Time and Risk: Risk-Sensitive Reinforcement Learning with General Discounting}

\newif\ifuniqueAffiliation
\uniqueAffiliationtrue

\ifuniqueAffiliation 
\author{ 
	Mehrdad Moghimi \\
	Dept. of Mathematics and Statistics\\
	York University\\
	Toronto, Canada \\
	\texttt{moghimi@yorku.ca} \\
	\And
	Anthony Coache \\
	Dept. of Mathematics\\
	Imperial College London\\
	London, United Kingdom \\
	\texttt{a.coache@imperial.ac.uk} \\
	\And
	Hyejin Ku \\
	Dept. Mathematics and Statistics\\
	York University\\
	Toronto, Canada \\
	\texttt{hku@yorku.ca} \\
}
\else
\usepackage{authblk}

\newbox{\orcid}\sbox{\orcid}{\includegraphics[scale=0.06]{orcid.pdf}} 
\author[1]{%
	\href{https://orcid.org/0000-0000-0000-0000}{\usebox{\orcid}\hspace{1mm}David S.~Hippocampus\thanks{\texttt{hippo@cs.cranberry-lemon.edu}}}%
}
\author[1,2]{%
	\href{https://orcid.org/0000-0000-0000-0000}{\usebox{\orcid}\hspace{1mm}Elias D.~Striatum\thanks{\texttt{stariate@ee.mount-sheikh.edu}}}%
}
\affil[1]{Department of Computer Science, Cranberry-Lemon University, Pittsburgh, PA 15213}
\affil[2]{Department of Electrical Engineering, Mount-Sheikh University, Santa Narimana, Levand}
\fi

\renewcommand{\shorttitle}{Decoupling Time and Risk: Risk-Sensitive RL with General Discounting}

\hypersetup{
pdftitle={Decoupling Time and Risk: Risk-Sensitive RL with General Discounting},
pdfsubject={cs.LG},
pdfauthor={Mehrdad Moghimi, Anthony Coache, Hyejin Ku},
pdfkeywords={Reinforcement Learning, Discount Function, Distributional Reinforcement Learning, Risk-Sensitivity, Stock-augmentation},
}

\begin{document}
\maketitle

\begin{abstract}
	Distributional reinforcement learning (RL) is a powerful framework increasingly adopted in safety-critical domains for its ability to optimize risk-sensitive objectives. However, the role of the discount factor is often overlooked, as it is typically treated as a fixed parameter of the Markov decision process or tunable hyperparameter, with little consideration of its effect on the learned policy. In the literature, it is well-known that the discounting function plays a major role in characterizing time preferences of an agent, which an exponential discount factor cannot fully capture. Building on this insight, we propose a novel framework that supports flexible discounting of future rewards and optimization of risk measures in distributional RL. We provide a technical analysis of the optimality of our algorithms, show that our multi-horizon extension fixes issues raised with existing methodologies, and validate the robustness of our methods through extensive experiments. Our results highlight that discounting is a cornerstone in decision-making problems for capturing more expressive temporal and risk preferences profiles, with potential implications for real-world safety-critical applications.
\end{abstract}

\keywords{Reinforcement Learning \and Discount Function \and Distributional Reinforcement Learning \and Risk-Sensitivity \and Stock-augmentation}

\section{Introduction}
In the standard formulation of reinforcement learning (RL) problems, a key component is the discount factor, denoted by $\gamma$, which applies exponential discounting to future rewards. Discounting encodes time preferences of the agent, models unknown termination times, and generally reduces approximation errors and variance from distant rewards~\citep{petrik2008biasing,Amit.etal2020}, altogether leading to more stable learning algorithms. While mathematically convenient, this paradigm may create fundamental challenges in more complex settings. Choosing $\gamma\in(0,1)$ can render the agent's actions ineffective at altering return distributions over long timescales \citep[see e.g.,][]{Pires.etal2025}. On the other hand, choosing $\gamma$ close to one to extend the agent's effective horizon can be impractical, since the variance of temporal-difference updates grows sharply as $\gamma \rightarrow 1$, leading to severe training instability and slow convergence \citep{Naik.etal2019a}.

Most importantly, exponential discounting \emph{cannot} capture some preference reversals that characterize humans' and animals' behaviors for various long-horizon planning tasks. Let us revisit this hypothetical scenario from~\citet{Fedus.etal2019a}: suppose that someone offers to a risk-neutral agent, without any risk, either \$1 at time $t$ or \$1.1 at time $t+1$. Under exponential discounting, the agent would prefer the same reward for any time $t$. In contrast, for some parametrizations of hyperbolic discounting, the agent would select \$1 if $t=0$ but \$1.1 if $t=365$. This remains true for a risk-sensitive agent, as long as the reward distributions yield risk measures of the same monetary values.

\citet{green2004discounting} explained that preference inconsistencies, such as impulsivity or self-control, arise when choice options differ on many dimensions: (i) magnitude of the rewards, inherently embedded in the reward function; (ii) time elapsed before receiving the rewards; and (iii) likelihood of receiving the rewards. While some special cases of preference modeling admit consistent preferences across such dimensions, i.e., reward scaling for risk-neutral agents with an exponential discount factor, this is generally not the case and modeling time preferences through a single scalar discount factor can be overly restrictive. Therefore, to accurately describe time and risk preferences of an arbitrary agent, an RL algorithm should allow different characterizations of the discounting (second dimension) and objective function (third dimension), resulting in a wider range of preference profiles and optimal strategies.

Despite their central role in behavioral decision research, these dimensions have not been systematically integrated into RL models. On one hand, several works have considered general discounting schemes, including state-dependent~\citep{White2017,Pitis2019} and adaptive discount factors~\citep{Francois-Lavet.etal2016a,Xu.etal2018,Zahavy.etal2020}. On the other hand, a large body of research aimed to move beyond expectation-based goal functionals, leading to a plethora of risk-sensitive RL algorithms~\citep{Bauerle.Ott2011,Tamar.etal2017,DiCastro.etal2019a} including distributional RL~\citep{Morimura.etal2010a,Bellemare.etal2017a}. While these two lines of work address complementary aspects of preference modeling, they have largely been developed in isolation. We provide more discussion on related works in Appendix~\ref{sec:related-work}.

\begin{figure}[!ht]
	\centering
	\begin{subfigure}[b]{0.49\linewidth}
		\centerline{\includegraphics[width=\linewidth]{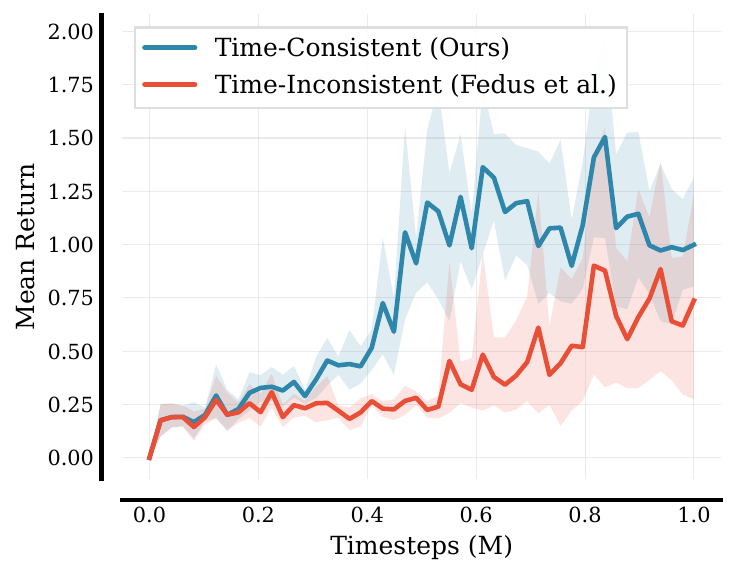}}
		\caption{Option Trading}
		\label{fig:option}
	\end{subfigure}
	\begin{subfigure}[b]{0.49\linewidth}
		\centerline{\includegraphics[width=\linewidth]{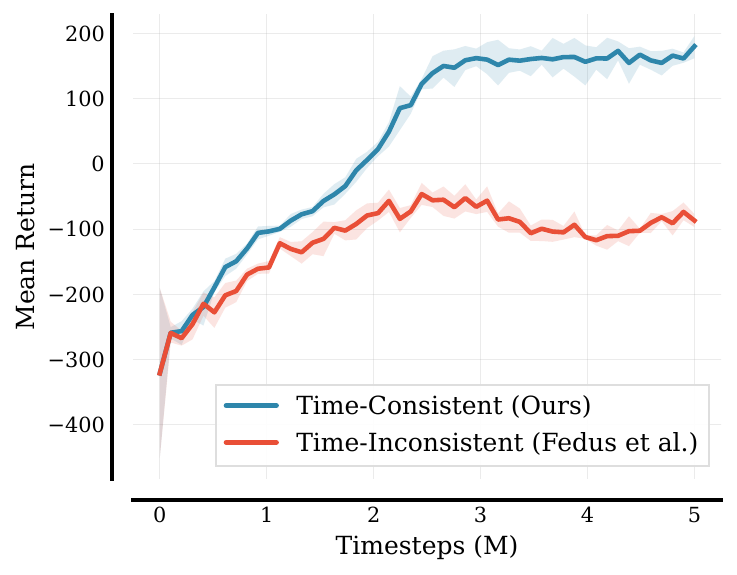}}
		\caption{Windy Lunar Lander}
		\label{fig:windylunarlander}
	\end{subfigure}
	\caption{\textbf{The impact of Time-Consistency.} Performance comparison between our proposed time-consistent framework (in blue) and the time-inconsistent approach of~\citet{Fedus.etal2019a} (in red). By correctly modeling the non-stationary optimal policy required for general discounting, our method achieves higher returns in American Put Option Trading (Fig.~\ref{fig:option}) and  Windy Lunar Lander (Fig.~ \ref{fig:windylunarlander}).}
	\label{fig:tc}
\end{figure}

A notable work by~\citet{Fedus.etal2019a} attempted to introduce hyperbolic discounting into deep RL, but overlooked the fundamental need for time-dependent policies, as noted by~\citet{Schultheis.etal2022}. By enforcing stationarity on a problem where the optimal strategy is inherently non-stationary, their approach violates time consistency, which requires that a policy determined to be optimal at the initial time remains optimal at all future decision points. Figure~\ref{fig:tc} illustrates this critical distinction: our time-consistent approach, which properly accounts for non-stationarity, significantly outperforms the time-inconsistent baseline in both financial and control tasks. We further demonstrate in our experiments that these performance gains scale beyond simple environments by validating our methodology on high-dimensional Atari games.

This exposes a critical gap: how can we enable general discount functions \emph{and} risk-sensitive reasoning without incurring instability or abandoning the intuitive framework of optimizing a utility over total return? In this work, we answer by extending the theory of stock-augmented distributional RL to allow general discounting approaches \emph{and} a richer class of objective functions. Our contributions are summarized as follows:
\begin{itemize}[noitemsep, topsep=0pt]
	\item We provide a unified framework for stock augmentation in distributional RL with general discount functions and optimized certainty equivalent (OCE) risk measures, which builds upon the stock-augmented return distribution optimization from~\citet{Pires.etal2025}.
	\item We devise RL algorithms for the finite-horizon, multi-horizon and infinite-horizon cases, and establish performance bounds for the approximation error between the algorithm's policies and optimal policy.
	\item We demonstrate the performance of our algorithms on various benchmark environments. We use a goal-based wealth management problem to investigate the effects of both the discounting and objective function in shaping the agent's preference profiles, and Atari games to show improvements when using multi-horizon learning for complex high-dimensional environments.
\end{itemize}

Incorporating stock into distributional dynamic programming (DP) as in the framework of~\citet{Pires.etal2025}, where the agent's state is augmented with a statistic that tracks the history of accumulated rewards, allows the optimization of risk-sensitive statistical functionals that would not be possible otherwise. In fact, for finite-horizon problems, we show that including general discounting and OCE risk measures may be done seamlessly within the realm of stock-augmented distributional RL. We also describe how our approach may be adapted to the multi-timescale RL paradigm~\citep[see e.g.,][]{Masset.etal2025}. In doing so, we fix the incoherence in the former approach of~\citet{Fedus.etal2019a} regarding the time representation (see Section~\ref{sec:multihorizon}), and highlight how this significantly improves the performance. For infinite-horizon problems, we borrow ideas from~\citet{Hau.etal2023b} to develop an algorithm where the agent follows a risk-sensitive policy with general discounting over a planning horizon, and approximates the problem beyond it with a risk-neutral policy with exponential discounting. The approach we use for the performance analysis of the resulting policy, however, is quite different from the original work, as it relies on stock augmentation and can be extended to Mean-CVaR risk measures.

In this paper, we first recall the RL framework from~\citet{Pires.etal2025} in Section~\ref{sec:stock-aug-DRL}. We then extend the setting to general discount functions and OCE risk measures for finite-horizon problems in Section~\ref{sec:finite}, provide a multi-horizon approach for simultaneously learning over many discount factors in Section~\ref{sec:multihorizon}, and derive an approximate algorithm with convergence guarantees for infinite-horizon problems in Section~\ref{sec:infinite}. We conclude the paper with experiments and ablation studies to demonstrate the performance of our general discounting risk-sensitive distributional RL algorithms in Section~\ref{sec:experiments}.

\section{Stock-Augmented Return Distribution Optimization}\label{sec:stock-aug-DRL}
\paragraph{Notation.}
We use $\Delta(\reals^n)$ to denote the set of probability distribution functions over $\reals^n$, and $\df(\cdot)$ denotes the distribution of a random variable. For a finite set of $n\in\mathbb{N}$ outcomes, $\Delta(n)$ denotes the corresponding $|n|$-dimensional probability simplex. We denote the (resp. supremum) 1-Wasserstein distance between distributions $\nu,\nu'\in\Delta(\reals^n)$ by $\wass(\nu, \nu')$ (resp. $\overline{\wass}(\nu,\nu')$).

The standard RL problem is typically modeled as a Markov decision process (MDP), defined by a tuple $(\states, \actions, P, R, \gamma)$. Here, $\states$ is the state space, $\actions$ is the action space, $P: \states \times \actions \to \Delta(\states)$ is the transition function, $R: \states \times \actions \to \Delta(\reals)$ is the reward distribution, and $\gamma \in (0, 1]$ is the discount factor. The agent's goal is to find a policy $\pi: \states \to \Delta(\actions)$ that maximizes the expected value of the discounted return, 
\begin{equation}
	\label{eq:G-gamma}
	G^\gamma_t \doteq \sum_{k=0}^{\infty} \gamma^k R_{t+k+1}.
\end{equation}
Here, we use the superscript $\gamma$ to emphasize the dependence on the discount factor and $R_t$ is interpreted as a random variable resulting from the agent taking action $A_t$ in state $S_t$, thus $G^\gamma$ implicitly depends on the policy $\pi$.

\citet{Pires.etal2025} generalize this problem to the optimization of arbitrary statistical functionals of a multivariate return distribution, a problem they term \emph{return distribution optimization}. In their framework, the reward signal can be a vector-valued pseudo-reward $R_{t} \in \bs \doteq \reals^m$. To solve this broader class of problems using DP, they introduce \emph{stock augmentation}, in which the environment's state $s \in \states$ is augmented with a \emph{stock} vector $\xc \in \bs$. The stock tracks a scaled history of accumulated rewards, and its recursive property is defined by the update rule $C^{\gamma}_{t+1} \doteq \gamma^{-1}\left(C^{\gamma}_{t} + R_{t+1}\right)$, which originates from the expanded form $C^{\gamma}_{t} = \gamma^{-t}(C^{\gamma}_0 + \sum_{k=0}^{t-1} \gamma^k R_{k+1})$. The agent's policy $\pi$ then becomes a function of the augmented state $(s, \xc) \in \states \times \bs$.

The optimization objective is to find a policy $\pi$ that maximizes a functional $F_K$ applied to the distribution of the total outcome from the start of the episode, i.e., the initial stock plus the total discounted return, $C^{\gamma}_0 + G^{\gamma}_0$. The functional $F_K$ is derived from a statistical functional $K: \Delta(\bs) \to \reals$ that quantifies risk preferences over return distributions. For a given return distribution function $\eta^\gamma_\pi:\states\times\bs\rightarrow\Delta(\reals)$ defined by $\eta^{\gamma}_\pi(s,c)\doteq \df(G^{\gamma}(s,c))$, the objective is defined as
\[
(F_K \eta^\gamma_\pi)(s, \xc) \doteq K\df(\xc + G(s, \xc)),
\]
where $G(s, \xc) \sim \eta^\gamma_\pi(s, \xc)$. In particular, taking $K$ as the expectation reduces to the standard risk-neutral RL problem. The return distribution optimization problem is thus to find $\sup_{\pi} F_K \eta^\gamma_\pi$ evaluated at the initial state $(s_0, c_0)$.

A key challenge for DP is that decisions are made at time $t > 0$, but the objective is defined over the total outcome from time $t=0$. \citet{Pires.etal2025} resolve this by demonstrating a crucial relationship between the total outcome and the state of the system at time $t$ (see Appendix~\ref{app:anytime_proxy} for derivation):
\begin{equation}
	\label{eq:anytime_proxy} 
	C^{\gamma}_0 + G^{\gamma}_0 \distequiv \gamma^t \left( C^{\gamma}_t +  G^{\gamma}_t \right) .
\end{equation}

This scaling relationship reveals that the quantity $C^{\gamma}_t + G^{\gamma}_t$ serves as an ``anytime proxy'' for the original objective. To make this proxy useful for optimization, we introduce two key properties for an objective functional $K$.

\begin{definition}[Indifference to Scaling]
	\label{def:indif-scaling}
	For all distributions $\nu, \nu' \in (\Delta(\bs), \wass)$ such that $K\nu \geq K\nu'$, then $K(\gamma G) \geq K(\gamma G')$, where $G \sim \nu, G' \sim \nu'$ and $K(G) = K \df (G)$.
\end{definition}

\begin{definition}[Indifference to Mixtures]
	\label{def:indif-mixture}
	For every $\eta, \eta' \in (\Delta(\bs)^{\states \times \mathcal{C}}, \overline{\wass})$ such that $K\eta(s, \xc) \geq K\eta'(s, \xc)$ for all $(s, \xc) \in \states \times \mathcal{C}$, then for any random variable $(S, \xC)$ taking values in $\states \times \mathcal{C}$ we also have $K(G(S, \xC)) \geq K(G'(S, \xC))$, where $G(s, \xc) \sim \eta(s, \xc)$ and $G'(s, \xc) \sim \eta'(s, \xc)$.
\end{definition}

The \textit{indifference to scaling} ensures that preferences are preserved under scaling by $\gamma$. This allows a policy that improves the objective over the ``anytime proxy'' to also improve the original objective over $C^{\gamma}_0 + G^{\gamma}_0$. The \textit{indifference to mixtures} property states that if a set of outcome distributions is preferred to another on a component-wise basis, then any probabilistic mixture of the former is also preferred to the same mixture of the latter. This guarantees the monotonicity of the Bellman operator.

With these theoretical underpinnings,~\citet{Pires.etal2025} develop \emph{distributional DP} methods. These methods rely on the \emph{stock-augmented distributional Bellman operator} for a stationary policy $\pi$ and return distribution function $\eta$:
\begin{equation}\label{eq:operator-gamma}
	(\mathcal{T}^{\gamma}_\pi \eta)(s, \xc)  \doteq \df\left( R_{t+1} + \gamma G(S_{t+1}, C^\gamma_{t+1}) \right),
\end{equation}
where $(S_{t}, C_{t}) = (s, \xc)$, $(S_{t+1}, R_{t+1}) \sim P(\cdot | s, A_t)$, $A_t \sim \pi(s, \xc)$, and $G \sim \eta(S_{t+1}, (\xc+R_{t+1})/\gamma)$. The true return distribution function for a policy $\pi$, denoted $\eta^\gamma_\pi$, is the unique fixed point of this operator satisfying the distributional Bellman equation: $\eta^\gamma_\pi = \mathcal{T}^{\gamma}_\pi \eta^\gamma_\pi$.

Moving from policy evaluation to control requires selecting actions that greedily improve the objective functional $F_K$. A greedy policy $\pibar$ with respect to any distribution function $\eta$ is one that satisfies $F_K \mathcal{T}^{\gamma}_{\pibar} \eta = \sup_{\pi' \in \stationary} F_K \mathcal{T}^{\gamma}_{\pi'} \eta$. This is implemented by selecting the action $a^*$ at state $(s, \xc)$ that maximizes the objective over the possible one-step outcomes:
\[
a^*(s, \xc) \in \argmax_{a \in \actions} (F_K (\mathcal{T}^\gamma_{\pi_a} \eta))(s, \xc),
\]
where $\pi_a$ is the policy that deterministically selects action $a\in\actions$. For the important special case of expected utilities, where $F_K = \Uf$ for some function $f : \bs \to \reals$, the optimal greedy policy selects the action that maximizes the expected utility of the immediate outcome:
\[
a^*(s, \xc) \in \argmax_{a \in \actions} \E f(\xc + G(s, \xc, a)),
\]
where $G(s, \xc, a)$ is a random variable drawn from the distribution $(\mathcal{T}^\gamma_{\pi_a} \eta)(s, \xc)$. These operators form the basis of distributional value and policy iteration algorithms that can find optimal policies with respect to the functional $F_K$.

\section{General Discounting in Finite Horizon}
\label{sec:finite}
We extend the stock-augmented distributional RL framework to settings with general time preferences. Let $d: \mathbb{N}_0 \to \mathbb{R}^+$ be a general discount function, and define the \emph{one-step discount factor} as $\hat{d}_t \doteq d_{t+1} / d_t$ where $d_t \doteq d(t)$. We make an intuitive assumption that $d_0=1$ and $d_t$ is non-increasing, i.e., $d_{t+1} \leq d_t$ for all $t \geq 0$. This aligns with the economic principle that future rewards are not valued more than present ones and is general enough to include exponential, hyperbolic, and quasi-hyperbolic discounting models. The non-increasing property implies that $\hat{d}_t \in (0, 1]$ for all $t$.

The discounted return from time step $t$ over a finite horizon $T \in \mathbb{N}$ is given by $G^d_t \doteq \frac{1}{d_t}\sum_{k=t}^{T-1} d_k R_{k+1}$. We also define the stock $C^d_t = \frac{1}{d_t}\left(C^d_0 + \sum_{k=0}^{t-1} d_k R_{k+1}\right)$, which evolves according to a general discount function with an update rule of $C^d_{t+1} = \left(C^d_t + R_{t+1}\right)/\hat{d}_t$. If we set $d_t=\gamma^t$, we recover the exponentially discounted stock from~\citet{Pires.etal2025}. The key benefit of the stock is that it helps finding the ``anytime proxy'' for the total outcome, in a similar manner to~\eqref{eq:anytime_proxy}. The total unscaled return from time zero, which is the object of a static utility function, may be expressed recursively using the current stock $C^d_t$ and the future return $G^d_t$ (see Appendix~\ref{app:anytime_proxy} for derivation):
\begin{equation}
	\label{eq:anytime_proxy_d}
	C^d_0 + G^d_0 \distequiv d_t \left(C^d_t + G^d_t\right) .
\end{equation}

This recursive structure allows us to define a DP operator. However, the non-stationarity of $\hat{d}_t$ implies the operator and optimal policy are time-dependent. Therefore, in this section, we assume a finite and known problem horizon $T$, which ensures the problem is well-posed and enables an exact solution via backward induction. This assumption is necessary because for general discount functions, such as hyperbolic, $\hat{d}_t \to 1$ as $t \to \infty$. This violates the contraction mapping condition required for standard infinite-horizon DP, rendering the problem potentially ill-posed. We address the infinite-horizon case and its implications in Section~\ref{sec:infinite}.

A non-stationary policy is a sequence $\pi = (\pi_0, \dots, \pi_{T-1})$, where $\pi_t: \states \times \mathcal{C} \to \Delta(\actions)$. For a return distribution function $\eta^\pi_t(s,c)\doteq \df(G^d_t(s,c))$, similarly to equation~\eqref{eq:operator-gamma}, we define the \emph{time-dependent distributional Bellman operator} for a policy $\pi_t$ at time $t$ and a next-step distribution function $\eta_{t+1}$ as
\[
(\mathcal{T}^d_{\pi_t} \eta_{t+1})(s, c) \doteq \mathrm{df}\left( R_{t+1} + \hat{d}_tG^d_{t+1}(S_{t+1}, C^d_{t+1}) \right),
\]
where the distribution is over $S_{t+1} \sim P(\cdot | s, A_t)$ and $R_{t+1} \sim R(\cdot | s, A_t)$, with $A_t \sim \pi_t(\cdot | s, c)$, and $G^d_{t+1} \sim \eta^\pi_{t+1}(S_{t+1}, (c + R_{t+1}) / \hat{d}_t)$. In this framework, an agent can use the learned distribution $\eta^\pi$ to reconstruct the distribution of the total outcome via~\eqref{eq:anytime_proxy_d} and select actions to optimize a static functional of that outcome. This requires establishing new theoretical guarantees for stock and time-augmented distributional DP.

The adoption of non-stationary return distributions allows us to relax the indifference to scaling assumption in Definition~\ref{def:indif-scaling}. We achieve this by directly optimizing the initial return distribution via the following objective:
\[
(F_K \eta^\pi_t)(s, \xc) \doteq K\df(d_t\xc + d_tG(s, \xc))/d_t.
\]
Notably, in the special case where the objective functional satisfies indifference to scaling (i.e., Definition~\ref{def:indif-scaling}), this formulation naturally recovers the objective proposed by~\citet{Pires.etal2025} for exponential discounting. This flexibility allows our stock-augmented framework to accommodate optimized certainty equivalent (OCE) risk measures, a broad class of convex risk measures originally developed for financial risk management~\citep{Ben-Tal.Teboulle2007}. For a random variable $G$, the OCE with respect to a utility function $f:\mathbb{R} \to [-\infty, \infty)$ is defined as:
\begin{equation}\label{eq:oce}
	\operatorname{OCE}_f(G) \doteq \max_{c_0 \in \mathcal{C}} \{-c_0 + \mathbb{E}[f(c_0 + G)]\}.
\end{equation}
Optimizing this metric decomposes into finding the optimal return distribution for any initial stock and then selecting the best $c_0$. Our distributional DP framework solves the former simultaneously for all $c \in \mathcal{C}$ by setting the objective functional to $(U_f \eta_t)(s, c) \doteq \mathbb{E}[f(d_t c + d_t G_t)]/d_t$. Consequently, the OCE value is obtained by maximizing the scalar function $J(c_0) \doteq -c_0 + (U_f \eta^*_0)(s_0, c_0)$ over the learned value surface. As in~\citet{Pires.etal2025}, we perform this outer optimization via grid search over a discretized set $\bar{\mathcal{C}} \subset \mathcal{C}$. Figure~\ref{fig:utilities} illustrates the utility functions $f$ associated with several common risk measures. While some instances, such as Mean-CVaR, satisfy the indifference to scaling property, others, such as Entropic Risk and Mean-Variance, do not.

\begin{figure}[!htb]
	\centerline{\includegraphics[width=0.5\linewidth]{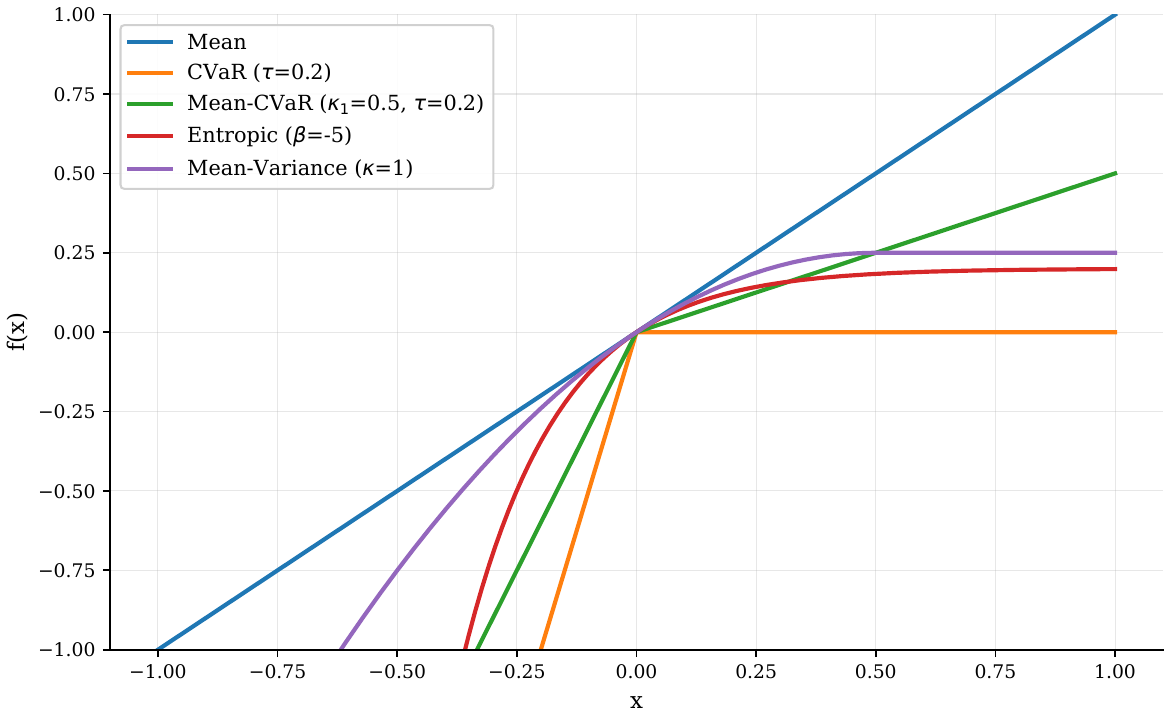}}
	\caption{\textbf{Utility functions for common OCE risk measures.} Detailed formulations are provided in Appendix~\ref{app:risk_measures}.}
	\label{fig:utilities}
\end{figure}

In the finite-horizon setting, the optimal policy can be determined via backward induction. The theoretical validity of this procedure relies on the following Time-Dependent Monotonicity Lemma. The proof is deferred to Appendix~\ref{app:finite1}.

\begin{lemma}[Time-Dependent Monotonicity]
	\label{lem:time-monotonicity}
	Let $K$ be an objective functional satisfying the indifference to mixture property. For any time $t < T$, let $\eta_{t+1}$ and $\eta'_{t+1}$ be two return distribution functions for time $t+1$. If $F_K \eta_{t+1} \geq F_K \eta'_{t+1}$, then for any time-$t$ policy $\pi_t$:
	$F_K (\mathcal{T}^d_{\pi_t} \eta_{t+1}) \geq F_K (\mathcal{T}^d_{\pi_t} \eta'_{t+1})$.
\end{lemma}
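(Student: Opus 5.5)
The plan is to reduce the claim to a single application of the indifference-to-mixtures property (Definition~\ref{def:indif-mixture}), applied to a suitably rescaled version of the return distributions. Fix $(s,\xc)$ and a time-$t$ policy $\pi_t$. By definition,
\[
(F_K \mathcal{T}^d_{\pi_t}\eta_{t+1})(s,\xc) = \tfrac{1}{d_t} K\,\df\bigl(d_t\xc + d_t R_{t+1} + d_t\hat d_t G_{t+1}(S_{t+1},C_{t+1})\bigr),
\]
where $C_{t+1} = (\xc+R_{t+1})/\hat d_t$ and $G_{t+1}(s',c')\sim\eta_{t+1}(s',c')$. The first step is to regroup the argument. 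Using $d_t\hat d_t = d_{t+1}$ and $d_t(\xc+R_{t+1}) = d_{t+1}C_{t+1}$, the random variable inside $K$ becomes
\[
d_{t+1}\bigl(C_{t+1} + G_{t+1}(S_{t+1},C_{t+1})\bigr).
\]
This is exactly the anytime-proxy identity~\eqref{eq:anytime_proxy_d} written one step ahead.

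The second step is to define auxiliary distribution functions on $\states\times\mathcal{C}$:
\[
\mu(s',c') \doteq \df\bigl(d_{t+1}c' + d_{t+1}G_{t+1}(s',c')\bigr), \qquad G_{t+1}(s',c')\sim\eta_{t+1}(s',c'),
\]
and define $\mu'$ in the same way from $\eta'_{t+1}$. By definition of $F_K$ at time $t+1$, we have $K\mu(s',c') = d_{t+1}(F_K\eta_{t+1})(s',c')$. Since $d_{t+1}>0$, the hypothesis $F_K\eta_{t+1}\ge F_K\eta'_{t+1}$ gives $K\mu \ge K\mu'$ pointwise on all of $\states\times\mathcal{C}$.

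The third step applies Definition~\ref{def:indif-mixture} with the random pair $(S,\xC) \doteq (S_{t+1},C_{t+1})$. The law of this pair is induced by $A_t\sim\pi_t(\cdot\mid s,\xc)$ and the transition and reward kernels, and it is the same whether we use $\eta_{t+1}$ or $\eta'_{t+1}$. The definition then yields
\[
K\,\df\bigl(H(S,\xC)\bigr) \ge K\,\df\bigl(H'(S,\xC)\bigr), \qquad H\sim\mu,\; H'\sim\mu'.
\]
This requires that, conditional on $(S,\xC)$, the future return be drawn independently from $\eta_{t+1}(S,\xC)$, which is exactly how $\mathcal{T}^d_{\pi_t}$ is defined. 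By the first step, $\df(H(S,\xC))$ is the distribution inside the time-$t$ objective. Dividing by $d_t>0$ then gives the claim at $(s,\xc)$, and since $(s,\xc)$ was arbitrary, the claim holds everywhere.

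The main obstacle is the first step. The argument only works if the dependence on the current stock $\xc$ and the immediate reward $R_{t+1}$ is absorbed entirely into the next stock $C_{t+1}$, so that the outcome is a pure mixture over $(S_{t+1},C_{t+1})$ with no leftover additive term. Without this, indifference to mixtures could not be invoked, since it does not by itself give translation or scaling invariance. The stock update $C_{t+1}=(\xc+R_{t+1})/\hat d_t$ together with the $d_t$-weighting in $F_K$ is precisely what makes the regrouping exact. This is also why indifference to scaling is not needed here. A minor technical check is that $\mu,\mu'$ lie in the space where Definition~\ref{def:indif-mixture} applies, meaning they have finite $\overline{\wass}$-distance. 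This follows from affine rescaling by the bounded constant $d_{t+1}\le 1$, assuming $\eta_{t+1},\eta'_{t+1}$ have suitable moments.
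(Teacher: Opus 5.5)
Your proof is correct and follows the same route as the paper's. Both rewrite the time-$t$ outcome via the one-step anytime-proxy identity $d_t c + d_t(R_{t+1}+\hat d_t G) \distequiv d_{t+1}C^d_{t+1} + d_{t+1}G$ and then apply indifference to mixtures over the next augmented state. Your explicit rescaled distribution functions $\mu,\mu'$, and the observation that $d_{t+1}>0$ preserves the pointwise ordering, make precise what the paper's proof leaves implicit.
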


Equipped with this lemma, we formally establish the optimality of our approach. We refer to Appendix~\ref{app:finite2} for the proof.

\begin{theorem}[Optimality of Distributional Backward Induction]
	\label{thm:backward-induction-optimality}
	Let the problem horizon $T\in\mathbb{N}$ be finite and known, and let the objective functional $K$ be indifferent to mixtures. Assume that the sequence of return distributions $\eta^*=(\eta^*_0, \dots, \eta^*_T)$ is generated by the following backward induction algorithm:
	\begin{itemize}[noitemsep, topsep=0pt]
		\item \textbf{Base Case:} For all $(s,c) \in \states \times \mathcal{C}$, set $\eta^*_T(s,c) = \delta_0$.
		\item \textbf{Recursive Step:} For $t = T-1$ down to $0$, compute $\eta^*_t \doteq \mathcal{T}^d_* \eta^*_{t+1}$,
		where $\mathcal{T}^d_*$ is the \emph{greedy time-dependent distributional Bellman operator}, defined such that for any function $\eta_{t+1}$ and for all $(s,c)\in \states \times \mathcal{C}$, it returns the function $\eta_t$ satisfying:
		\[
		(F_K \eta_t)(s,c) = \sup_{\pi_t} (F_K (\mathcal{T}^d_{\pi_t} \eta_{t+1}))(s,c).
		\]
	\end{itemize}
	Then, the sequence $\eta^*$ is a sequence of optimal return distributions. Furthermore, the non-stationary policy $\pi^* = (\pi^*_0, \dots, \pi^*_{T-1})$, where each $\pi^*_t$ is a \emph{greedy policy} with respect to $\eta^*_{t+1}$, is optimal for any initial augmented state.    
\end{theorem}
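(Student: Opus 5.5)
We argue by backward induction on $t$, proving the statement: for every $t\in\{0,\dots,T\}$ and every non-stationary tail policy $\pi_{t:T-1}=(\pi_t,\dots,\pi_{T-1})$, we have $F_K\eta^*_t \ge F_K\eta^{\pi}_t$ pointwise on $\states\times\mathcal{C}$, and equality holds when $\pi=\pi^*$, i.e.\ $\eta^{\pi^*}_t$ and $\eta^*_t$ have the same $F_K$-value everywhere. Here $\eta^\pi_t$ denotes the distribution of $G^d_t$ under the tail policy started from $(s,c)$ at time $t$. We use the fact that the return distributions of a tail policy satisfy the time-dependent evaluation recursion $\eta^\pi_t=\mathcal{T}^d_{\pi_t}\eta^\pi_{t+1}$ with $\eta^\pi_T=\delta_0$. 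This recursion follows directly from the definitions $G^d_t = R_{t+1}+\hat d_t G^d_{t+1}$ and $C^d_{t+1}=(C^d_t+R_{t+1})/\hat d_t$, together with the Markov property of the augmented state $(S_t,C^d_t)$. We state it as a preliminary observation.

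The base case $t=T$ is trivial: every policy yields $G^d_T=0$, so all return distributions equal $\delta_0=\eta^*_T$. For the inductive step, assume the claim at $t+1$. Fix any tail policy $\pi$. By the induction hypothesis $F_K\eta^*_{t+1}\ge F_K\eta^\pi_{t+1}$, so Lemma~\ref{lem:time-monotonicity} (time-dependent monotonicity, which uses indifference to mixtures) gives
\[
F_K\eta^\pi_t = F_K\mathcal{T}^d_{\pi_t}\eta^\pi_{t+1}\le F_K\mathcal{T}^d_{\pi_t}\eta^*_{t+1}\le \sup_{\pi_t'}F_K\mathcal{T}^d_{\pi'_t}\eta^*_{t+1} = F_K\eta^*_t .
\]
For optimality of $\pi^*$, the induction hypothesis gives $F_K\eta^{\pi^*}_{t+1}=F_K\eta^*_{t+1}$. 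Applying the lemma in both directions then yields $F_K\mathcal{T}^d_{\pi^*_t}\eta^{\pi^*}_{t+1}=F_K\mathcal{T}^d_{\pi^*_t}\eta^*_{t+1}$. By greediness of $\pi^*_t$ this equals $\sup_{\pi_t'}F_K\mathcal{T}^d_{\pi'_t}\eta^*_{t+1}=F_K\eta^*_t$. At $t=0$ this shows $F_K\eta^{\pi^*}_0=\sup_\pi F_K\eta^\pi_0$ at every initial $(s,c)$. Since $(F_K\eta_0)(s,c)=K\df(c+G)$ with $d_0=1$, this is exactly optimality of the original objective for any initial augmented state.

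The main obstacle is justifying that the quantities are well defined and the supremum is attained. Existence of a greedy $\pi^*_t$ is easy when $\actions$ is finite: deterministic per-state argmax selections suffice, since by indifference to mixtures a randomized $\pi_t$ is dominated by the best deterministic action pointwise. Otherwise, we would take attainment as part of the hypothesis "where each $\pi^*_t$ is a greedy policy", or argue with $\varepsilon$-greedy policies and let $\varepsilon\to0$. A second subtle point is the domain of Lemma~\ref{lem:time-monotonicity}. We must check that $F_K\eta^*_{t+1}\ge F_K\eta^\pi_{t+1}$ holds at \emph{all} augmented states, including every stock value reachable via $(c+R_{t+1})/\hat d_t$. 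This is why the induction is stated uniformly over $\states\times\mathcal{C}$ rather than only along trajectories from $(s_0,c_0)$. Finally, note that $\mathcal{T}^d_*$ determines $\eta^*_t$ only up to its $F_K$-values. The induction only ever uses $F_K\eta^*_t$, so any choice consistent with the definition works, and "optimal return distributions" should be read in this $F_K$-equivalence sense.
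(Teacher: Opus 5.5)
Your proposal is correct and follows the paper's proof: backward induction on $t$, chaining the induction hypothesis through the time-dependent monotonicity lemma and then the definition of the greedy supremum to get $F_K\eta^*_t \ge F_K\eta^{\pi}_t$. You also carry the equality $F_K\eta^{\pi^*}_t = F_K\eta^*_t$ through the induction by applying the lemma in both directions. This explicitly proves that $\pi^*$ attains the optimum, which the paper's proof leaves implicit.
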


\begin{remark}[On Tie-Breaking and Uniqueness]
	The greedy operator $\mathcal{T}^d_*$ may not be unique, as multiple actions (or stochastic mixtures) can yield distributions that are optimal with respect to $F_K$~\citep[see e.g. Section 7.5 of][]{Bellemare.etal2023}. Consequently, the optimal policy $\pi^*$ may not be unique. A greedy policy $\pi^*_t$ is any policy that, at each state $(s,c)$, selects a distribution over actions that achieves the supremum. All such greedy policies are optimal.   
\end{remark}

\begin{remark}[On Lipschitz Continuity]
	\label{rem:lipschitz}
	The optimality guarantee for the exact inner problem (Theorem~\ref{thm:backward-induction-optimality}) relies solely on monotonicity and does not require the objective functional $K$ to be Lipschitz continuous. This contrasts with the infinite-horizon setting of~\citet{Pires.etal2025}, where Lipschitz continuity is essential for convergence. However, for the full OCE problem, the Lipschitz property is strictly necessary: it ensures the outer optimization over $c_0$ is well-conditioned (bounding discretization error) and controls error propagation in the approximate DP.
\end{remark}

While not required for exact planning, Lipschitz continuity becomes essential for analyzing the error in any practical algorithm that must approximate the return distribution. Proposition~\ref{prop:bound-approx-backward-induction} provides a bound for the total suboptimality of the approximate algorithm, accounting for both inner approximation and outer discretization errors (see proof in Appendix~\ref{app:finite3}).

\begin{proposition}[OCE Optimality Bound]
	\label{prop:bound-approx-backward-induction}
	Let $f: \mathbb{R} \to \mathbb{R}$ be an $L$-Lipschitz utility function. Let $\pi^*$ be the optimal policy from Theorem~\ref{thm:backward-induction-optimality}, and let $\hat{\pi}$ be the policy derived from an approximate backward induction with one-step errors $(\varepsilon_0, \dots, \varepsilon_{T-1})$. Let the outer optimization be performed over a finite grid $\bar{\mathcal{C}}$ with spacing $\delta$. If $\hat{c} \in \bar{\mathcal{C}}$ denotes the initial stock selected by the algorithm, and $\mathcal{E}_{\text{DP}} \doteq L \sum_{t=0}^{T-1} d_t \varepsilon_t$ denotes the cumulative DP error term, the total suboptimality is bounded by
	\[
	\operatorname{OCE}_f(\eta^*_0) - \left( -\hat{c} + \mathbb{E}[f(\hat{c} + G^{\hat{\pi}})] \right) \leq 2 \mathcal{E}_{\text{DP}} + \frac{(1 + L)\delta}{2}.
	\]
\end{proposition}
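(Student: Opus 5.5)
Write $V^*(c)\doteq \mathbb{E}[f(c+G^{\pi^*}_0(s_0,c))]$ and $\hat V(c)\doteq \mathbb{E}[f(c+G^{\hat\pi}_0(s_0,c))]$ for the true utilities of the optimal and approximate policies started from stock $c$. Write $\tilde V(c)$ for the approximate value surface computed by the algorithm. The algorithm selects $\hat c\in\argmax_{c\in\bar{\mathcal{C}}}\{-c+\tilde V(c)\}$.

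The plan is to split the suboptimality into three parts:
\begin{enumerate}
\item the grid discretization error in the outer problem;
\item the error incurred by maximizing the approximate surface $\tilde V$ instead of $V^*$;
\item the gap between the approximate surface $\tilde V$ and the realized value $\hat V$.
\end{enumerate}
Each of the last two will be controlled by a uniform bound $\sup_c|\tilde V(c)-V^*(c)|\le \mathcal{E}_{\text{DP}}$ and $\sup_c|\hat V(c)-V^*(c)|\le \mathcal{E}_{\text{DP}}$. Together these give the factor $2\mathcal{E}_{\text{DP}}$. Concretely, let $c^*$ attain $\operatorname{OCE}_f(\eta^*_0)=-c^*+V^*(c^*)$ and let $\bar c\in\bar{\mathcal C}$ be the grid point nearest $c^*$, so that $|\bar c-c^*|\le\delta/2$. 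Then
\begin{align*}
-c^*+V^*(c^*) &\le -\bar c+V^*(\bar c)+(1+L)\delta/2\\
&\le -\bar c+\tilde V(\bar c)+\mathcal{E}_{\text{DP}}+(1+L)\delta/2\\
&\le -\hat c+\tilde V(\hat c)+\mathcal{E}_{\text{DP}}+(1+L)\delta/2.
\end{align*}
The first step uses that $c\mapsto -c+V^*(c)$ is $(1+L)$-Lipschitz. This holds because $f$ is $L$-Lipschitz and, for a fixed policy, shifting the initial stock by $\Delta$ shifts the total outcome $c+G$ by exactly $\Delta$. More carefully, $V^*(c)=\sup_\pi \mathbb{E}f(c+G^\pi)$, and a supremum of $L$-Lipschitz functions is $L$-Lipschitz. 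The stock shift also changes the policy's argument, but the supremum over all non-stationary stock-augmented policies is shift-invariant up to relabelling. The remaining step then gives $\tilde V(\hat c)\le \hat V(\hat c)+\mathcal{E}_{\text{DP}}$, which yields the claim. If $\tilde V$ instead overestimates, the same two-sided bound covers that case.

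The core of the proof is the uniform DP bound. I would prove it by a telescoping backward induction, following the standard approximate-DP argument but using the scaled objective $(U_f\eta_t)(s,c)=\mathbb{E}[f(d_tc+d_tG_t)]/d_t$. I would interpret $\varepsilon_t$ as a one-step error in $\overline{\wass}$ between the computed $\hat\eta_t$ and $\mathcal{T}^d_*\hat\eta_{t+1}$, or equivalently in greedy evaluation. By the anytime proxy~\eqref{eq:anytime_proxy_d}, a $\wass$-error $\varepsilon_t$ in the time-$t$ return distribution perturbs the total outcome by at most $d_t\varepsilon_t$. After composition with $f$, the utility therefore changes by at most $Ld_t\varepsilon_t$. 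Here the $1/d_t$ normalization is exactly what converts the error to unscaled time-0 units. Lemma~\ref{lem:time-monotonicity} (monotonicity via indifference to mixtures, which expected utility satisfies) then propagates these errors backward without amplification, because mixing over next states preserves sup-norm bounds. Summing over $t$ gives $\sum_t L d_t\varepsilon_t=\mathcal{E}_{\text{DP}}$. The same telescoping argument, comparing $\mathcal{T}^d_{\hat\pi_t}$ with the greedy operator at each step as in Theorem~\ref{thm:backward-induction-optimality}, bounds $V^*-\hat V$.

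I expect the main obstacle to be this error-propagation step. The difficulty is making precise that a per-step Wasserstein error at a shifted stock $(c+R)/\hat d_t$ translates into a utility error of exactly $Ld_t\varepsilon_t$ in time-0 units. One must track that the $\hat d_t$ scaling of the stock and the $\hat d_t$ factor in the operator cancel, so that the total outcome is affected linearly with coefficient $d_t$. It also requires verifying that the Lipschitz bound holds uniformly over the augmented state space.
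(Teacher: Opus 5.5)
\textbf{Gap: the second uniform bound is mis-targeted, so the chain does not give $2\mathcal{E}_{\text{DP}}$.} Your chain is the same four-term decomposition the paper uses: discretization at the grid point nearest $c^*$, estimation error there, grid optimality of $\hat c$, and the gap between the computed surface and the realized value at $\hat c$. The problem is that your two announced bounds do not close it. The last link needs $\tilde V(\hat c)\le \hat V(\hat c)+\mathcal{E}_{\text{DP}}$, which compares the algorithm's surface with the true value of its \emph{own} policy. From $\sup_c|\tilde V-V^*|\le\mathcal{E}_{\text{DP}}$ and $\sup_c|\hat V-V^*|\le\mathcal{E}_{\text{DP}}$ you only get $\tilde V(\hat c)-\hat V(\hat c)\le 2\mathcal{E}_{\text{DP}}$, so the total becomes $3\mathcal{E}_{\text{DP}}+(1+L)\delta/2$.

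Worse, the bound $\sup_c|\hat V-V^*|\le\mathcal{E}_{\text{DP}}$ is false in general. The greedy-comparison telescoping you describe yields the classical factor of two, not one. Here is a counterexample:
\begin{itemize}
\item Take $T=2$, $f(x)=x$, and zero reward at $t=0$.
\item Two actions lead deterministically to states $x,y$, whose deterministic time-$1$ rewards satisfy $r_x-r_y=2\varepsilon_1-\rho$ for a small $\rho>0$.
\item Let $\hat\eta_1$ shift these by $-\varepsilon_1$ and $+\varepsilon_1$ respectively, and take $\varepsilon_0=0$.
\end{itemize}
Then $\hat\pi_0$ picks $y$ and $\mathcal{E}_{\text{DP}}=\hat d_0\varepsilon_1$, while $V^*(c)-\hat V(c)=\hat d_0(2\varepsilon_1-\rho)$. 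This tends to $2\mathcal{E}_{\text{DP}}$ as $\rho\to0$.

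\textbf{The repair.} Prove directly that $\sup_c|\tilde V(c)-\hat V(c)|\le\mathcal{E}_{\text{DP}}$. This is exactly the paper's ``policy performance gap'' term, and it follows from a pure policy-evaluation recursion with no greedy comparison. Since $\hat\pi_t$ is greedy with respect to $\hat\eta_{t+1}$, we have $U_f\mathcal{T}^d_*\hat\eta_{t+1}=U_f\mathcal{T}^d_{\hat\pi_t}\hat\eta_{t+1}$, while $\eta^{\hat\pi}_t=\mathcal{T}^d_{\hat\pi_t}\eta^{\hat\pi}_{t+1}$. Hence $\|U_f\hat\eta_t-U_f\eta^{\hat\pi}_t\|_\infty\le L\varepsilon_t+\hat d_t\|U_f\hat\eta_{t+1}-U_f\eta^{\hat\pi}_{t+1}\|_\infty$, which unrolls to $\mathcal{E}_{\text{DP}}$ at $t=0$.

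What drives this, and your bound on $\tilde V-V^*$, is the exact identity $(U_f\mathcal{T}^d_{\pi}\eta_{t+1})(s,c)=\hat d_t\,\mathbb{E}[(U_f\eta_{t+1})(S',C')]$, which comes from $d_{t+1}C'=d_t(c+R)$. Lemma~\ref{lem:time-monotonicity} alone gives only order preservation, not additive error control. With this replacement, your argument coincides with the paper's.

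\textbf{Where your route is stronger than the paper's.}
\begin{itemize}
\item You propagate errors at the level of $U_f$-values, using $|\sup_\pi a_\pi-\sup_\pi b_\pi|\le\sup_\pi|a_\pi-b_\pi|$ at the greedy step. This is sounder than the paper's Step~1, which pushes $\overline{\wass}$ errors through $\mathcal{T}^d_*$. A greedy operator is not a Wasserstein contraction: a small perturbation can flip the greedy choice between two actions with nearly equal utility but far-apart outcome distributions.
\item Your relabelling argument for the $(1+L)$-Lipschitz continuity of $c\mapsto -c+V^*(c)$ is more careful than the paper's one-line justification.
\end{itemize}
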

\section{Multi-Horizon Approximation for General Discounting}
\label{sec:multihorizon}
Many discount functions, such as hyperbolic, can be expressed as a weighted integral of exponential discount functions,
\begin{equation}\label{eq:discount-function}
	d_t = \int_0^1 \gamma^t w(\gamma) d\gamma,
\end{equation}
where $w:[0,1]\rightarrow\mathbb{R}$ is a weighting function over the discount factor $\gamma$. This integral can be approximated with a discrete Riemann sum over a fixed set of $m$ exponential discount factors $\Gamma = \{\gamma_1, \dots, \gamma_m\}$ as $d_t \approx \tilde{d}_{t} \doteq \sum_{i=1}^m w_i \gamma_i^t$, where the weights $w_i = w(\gamma_i)(\gamma_i-\gamma_{i-1})$ are derived from the weighting function.
\citet{Fedus.etal2019a} uses this fact to derive a stationary value function $Q^d(s,a) = \int_0^1 Q^\gamma(s,a) w(\gamma) d\gamma$ under the general discount functions of the form~\eqref{eq:discount-function}. 

While this technique can be practically effective, it is theoretically incoherent. The resulting stationary policy, $\pi(s)$, greedily optimizes a local objective that resets at every time step, and is therefore not guaranteed to be optimal for any single, fixed global objective. As~\citet{Schultheis.etal2022} note, this stationary policy is also structurally incapable of modeling the preference reversal in hyperbolic discounting. To correctly optimize a global objective with any general discount function $d_t$, the passage of time must be an explicit part of the state representation. As formalized by~\citet{Schultheis.etal2022} for continuous-time problems, the value function must be non-stationary, which in discrete time corresponds to a time-dependent Bellman optimality equation:
\begin{equation*}
	Q_t^d(s, a) =  R(s,a) + \hat{d}_t \max_{a^\prime}Q_{t+1}^d(s^\prime, a^\prime).
\end{equation*}
This formulation is theoretically coherent, as it finds the true optimal non-stationary policy that maximizes the single, fixed objective function defined at the start of the episode $\mathbb{E}[\sum_{t=0}^T d_t R_t]$. It provides a principled ``top-down'' solution for finding an optimal plan under the assumption of pre-commitment.

Compared to the backward induction algorithm in Section~\ref{sec:finite}, the multi-horizon approach introduces an approximation error; however, simultaneously predicting returns over multiple time horizons serves as a strong auxiliary task that promotes richer state representations~\citep{Jaderberg.etal2016a}. Using the recursive structure of the stock-augmented distributional RL framework, we express the return in terms of $m$ exponential discounts. From~\eqref{eq:G-gamma}, the discounted return for $\gamma_i$ is $G^{\gamma_i}_t = \sum_{k=0}^{T-t-1} \gamma_i^k R_{t+k+1}$. The unscaled outcome under the approximate discount function $\tilde{d}_t$ can then be written recursively, analogous to~\eqref{eq:anytime_proxy_d} (see Appendix~\ref{app:anytime_proxy}):
\begin{equation}
	\label{eq:anytime_proxy_m}
	C^{\tilde{d}}_0 + G^{\tilde{d}}_0 \distequiv \tilde{d}_{t} \left(C^{\tilde{d}}_t + \sum_{i=1}^m w_{i,t} G^{\gamma_i}_t\right),
\end{equation}
where the time-dependent weights are $w_{i,t} \doteq w_i \gamma_i^t/ \tilde{d}_{t}$. Because these weights vary with time, the resulting decision problem is non-stationary. Nevertheless, unlike general discount functions for which $\hat{d}_t$ may approach 1, the multi-horizon approximation remains well-defined in the infinite-horizon setting as the effective discount factor $\hat{\tilde{d}}_t$ converges to $\gamma_m < 1$ as $t \to \infty$.

This formulation yields a practical algorithmic framework in which the agent learns a set of distributional value functions $\{\eta^{\gamma_i}\}_{i=1}^m$, each corresponding to an exponentially discounted problem. Notably, all predictions are conditioned on a single aggregate stock $C^{\tilde{d}}_t$, rather than maintaining separate stocks for each $\gamma_i$, since the policy is fully determined by the augmented state $(s_t, C^{\tilde{d}}_t)$. At decision time $t$, given state $(s_t, C^{\tilde{d}}_t)$, the agent uses the learned value distributions to reconstruct the distribution of the total outcome via~\eqref{eq:anytime_proxy_m}. Then, for each action, it forms a mixture of the $m$ exponential return distributions using the current weights $w_{i,t}$ and selects the action that maximizes the objective functional $K$.

\section{Extension to Infinite Horizon}
\label{sec:infinite}

While the theoretical guarantees for our backward induction algorithm in Section~\ref{sec:finite} rely on a finite horizon, extending them to the infinite-horizon setting requires additional structure. We therefore study a principled approximation scheme that enables performance guarantees under additional assumptions. To achieve this, we must analyze the asymptotic behavior of both the discount function and the objective function. In this section, we further assume that $d_t$ converges to zero and is summable, i.e., $\lim_{t \to \infty} d_t = 0$, and $\sum_{t=0}^{\infty} d_t < \infty$ (e.g., requiring $b > 1$ for generalized hyperbolic discounting of the form $(1+kt)^{-b}$).

\subsection{Discount Function's Asymptotic Behavior}
A primary challenge in applying general discounting to infinite-horizon problems is ensuring that the model remains well-posed. If the one-step discount factor $\hat{d}_{t}$ converges to $1$ as $t \to \infty$, as is the case with hyperbolic discounting, the Bellman operator fails to be a contraction. To address this, we require a discount function where the one-step factor asymptotically converges to a limit strictly less than one. We present one such construction method here, with an alternative approach detailed in Appendix~\ref{app:modified_integral_representation}.

A straightforward approach to prevent the one-step discount factor from approaching one is to enforce a strict upper bound $\gamma_{\text{tail}} < 1$. By capping the original $\hat{d}_{t}$, we define a modified one-step discount factor $\hat{d}_t^{\text{new}} \doteq \min(\hat{d}_{t}, \gamma_{\text{tail}})$. The corresponding full discount function is then constructed as the product of these factors, i.e., $d_t^{\text{new}} = \prod_{k=0}^{t-1} \hat{d}_k^{\text{new}}$. For instance, applying this to a hyperbolic discount function $\hat{d}_{t}$ results in a $\hat{d}_t^{\text{new}}$ that exhibits hyperbolic behavior initially before transitioning to standard exponential decay. This approach is equivalent to selecting a horizon $T$ and bounding the one-step discount function by $\hat{d}_{T}$, a strategy we adopt in our infinite-horizon algorithm.

\subsection{Objective Function's Asymptotic Behavior}

Regarding the objective function, the following lemma demonstrates that for OCE objectives with a strictly increasing utility function, the associated optimal policy becomes risk-neutral as $t\to\infty$. We delegate the proof and discussion about non-smooth OCEs (such as $\operatorname{CVaR}$) in Appendix~\ref{app:asymptotic_risk_neutrality}.

\begin{lemma}[Asymptotic Risk-Neutrality]
	\label{lem:asymptotic-neutrality}
	Let $f$ be a strictly increasing, twice continuously differentiable utility function with bounded second derivative. Then, as $t \to \infty$, the value of any optimal action for the OCE objective induced by $f$ converges to the value of an optimal action for the risk-neutral (expected value) objective.    
\end{lemma}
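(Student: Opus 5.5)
The plan is to work directly with the time-$t$ objective used in Section~\ref{sec:finite}, $(U_f\eta_t)(s,c) = \mathbb{E}[f(d_t c + d_t G_t)]/d_t$, and exploit that $d_t \to 0$ under the standing assumptions of this section. For a fixed augmented state $(s,c)$ and action $a$, let $G_t(s,c,a)$ denote the random continuation return at time $t$. Its distribution is $(\mathcal{T}^d_{\pi_a}\eta_{t+1})(s,c)$. The value of $a$ is
\[
V_t(a) \doteq \frac{1}{d_t}\,\mathbb{E}\bigl[f\bigl(d_t c + d_t G_t(s,c,a)\bigr)\bigr].
\]
The first step is a second-order Taylor expansion of $f$ around the point $x_t \doteq d_t c$, the unscaled accumulated wealth. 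By the bounded-second-derivative assumption, with $M \doteq \sup|f''|$, we get
\[
V_t(a) = \frac{f(x_t)}{d_t} + f'(x_t)\,\mathbb{E}[G_t(s,c,a)] + \rho_t(a), \qquad |\rho_t(a)| \le \tfrac{M}{2}\, d_t\, \mathbb{E}[G_t(s,c,a)^2].
\]
The first term does not depend on $a$, so it plays no role in action selection. Because $f$ is strictly increasing, $f'(x_t) > 0$, and dividing through gives
\[
\frac{V_t(a) - f(x_t)/d_t}{f'(x_t)} = \mathbb{E}[G_t(s,c,a)] + O\!\left(\frac{d_t}{f'(x_t)}\right)\mathbb{E}[G_t^2].
\]

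The second step is to show the remainder vanishes uniformly over actions. I would assume bounded rewards $|R|\le R_{\max}$, which is standard and implicit in the Wasserstein setting. By summability of $d_t$,
\[
|G_t| \le R_{\max}\,\frac{1}{d_t}\sum_{k\ge t} d_k.
\]
The right-hand side is not obviously bounded in general. It is bounded, however, for the capped discount $d^{\text{new}}$ of the previous subsection: once $\hat d_t\le\gamma_{\text{tail}}$, it is at most $R_{\max}/(1-\gamma_{\text{tail}})$. I would state the lemma for discount functions satisfying $\sup_t \frac{1}{d_t}\sum_{k\ge t}d_k<\infty$, which covers the capped construction. Then $\mathbb{E}[G_t^2]$ is uniformly bounded, and the remainder is $O(d_t/f'(x_t))$.

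The third step, and the main obstacle, is controlling $f'(x_t)$ from below. We have $x_t = d_t c$, where $c = C^d_t$ is the scaled stock. The unscaled wealth $d_tC^d_t = C^d_0 + \sum_{k<t} d_kR_{k+1}$ stays in a compact interval $[C^d_0 - R_{\max}\sum_k d_k,\ C^d_0 + R_{\max}\sum_k d_k]$. On that interval, continuity of $f'$ gives $f'>0$, and hence $f'\ge m>0$. This is exactly where I would invoke $C^2$ and strict monotonicity. I would strengthen the latter to $f'>0$ if needed, noting that strictly increasing alone allows isolated zeros of $f'$. Consequently the normalized action values converge uniformly in $a$ to $\mathbb{E}[G_t(s,c,a)]$, with error $O(d_t)\to 0$.

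Finally, I would conclude by the standard argmax argument. Let $a^*_t$ be OCE-optimal and $b_t$ risk-neutral-optimal. Write $e_t \doteq \sup_a \bigl|\tilde V_t(a) - \mathbb{E}[G_t(s,c,a)]\bigr|$, where $\tilde V_t$ denotes the normalized value. Then
\[
\mathbb{E}[G_t(s,c,b_t)] - 2e_t \le \mathbb{E}[G_t(s,c,a^*_t)] \le \mathbb{E}[G_t(s,c,b_t)].
\]
So the expected value of the OCE-optimal action converges to the risk-neutral optimum. Equivalently, any limit point of OCE-greedy actions is risk-neutral greedy.
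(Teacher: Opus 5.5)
Your proposal is correct and follows the paper's proof. Both expand $f$ to second order around the unscaled wealth $d_t c$, normalize by $d_t f'(d_t c)$, and obtain a remainder of order $d_t\,\mathbb{E}[G_t^2]/f'(d_t c)$ that vanishes as $d_t\to 0$. You also make explicit two conditions that the paper uses only tacitly when it asserts $f'(d_tc_t)>0$ and that $\mathbb{E}[G_t^2]$ is bounded: a uniform positive lower bound on $f'$, and $\sup_t d_t^{-1}\sum_{k\ge t} d_k<\infty$. The second holds under the $\gamma_{\text{tail}}$ cap but fails for, e.g., generalized hyperbolic $d_t$, where the scaled return can grow linearly in $t$. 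With these conditions and the explicit argmax comparison at the end, your version is a more careful rendering of the same argument.
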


Now, we propose an approximate algorithm for the infinite-horizon case, inspired by the work of~\citet{Hau.etal2023b} on entropic risk measures. The core idea is that as $t \to \infty$, the agent's risk preferences converge to risk-neutrality. This observation justifies considering a planning horizon with the general discounting and risk-sensitive policy, and approximating the problem beyond this point with a stationary, risk-neutral policy. The algorithmic procedure is described in Algorithm~\ref{alg:infinite-horizon-approx}.

\begin{algorithm}[!ht]
	\caption{Distributional VI for Infinite-Horizon MDPs with General Discounting}
	\label{alg:infinite-horizon-approx}
	\begin{algorithmic}
		\STATE {\bfseries Input:} Planning horizon $T' < \infty$, functional $K$
		\STATE \(\triangleright\) \textit{// 1. Solve the stationary risk-neutral tail problem}
		\STATE Set the effective discount factor $\gamma_{\text{tail}} \leftarrow d_{T'+1}/d_{T'}$
		\STATE Compute the optimal risk-neutral policy $\pi^*_{\text{RN}}$ and its return distribution $\eta^{\pi^*_{\text{RN}}}$ under discount $\gamma_{\text{tail}}$
		\STATE \(\triangleright\) \textit{// 2. Perform finite-horizon backward induction}
		\STATE Initialize terminal value distribution: $\hat{\eta}^*_{T'}(\cdot) \leftarrow \eta^{\pi^*_{\text{RN}}}(\cdot)$
		\FOR{$t=T'-1$ {\bfseries to} $0$}
		\STATE Compute $\hat{\eta}^*_t \leftarrow \mathcal{T}^d_* \hat{\eta}^*_{t+1}$ (Thm.~\ref{thm:backward-induction-optimality})
		\STATE Let $\hat{\pi}^*_t$ be a greedy policy with respect to $\hat{\eta}^*_{t+1}$
		\ENDFOR
		\STATE \(\triangleright\) \textit{// 3. Construct and return the composite policy}
		\STATE Construct the final policy $\hat{\pi}^*$ such that
		\[
		\hat{\pi}^*_t = 
		\begin{cases} 
			\hat{\pi}^*_t & \text{if } t < T' \\
			\pi^*_{\text{RN}} & \text{if } t \geq T'
		\end{cases}
		\]
		\STATE {\bfseries Output:} An approximately optimal policy $\hat{\pi}^*$
	\end{algorithmic}
\end{algorithm}

Having defined the approximation scheme, we can quantify the performance loss incurred by switching to a risk-neutral policy at the truncation horizon $T'$. While this approximation is applicable to any risk measure that exhibits asymptotic risk-neutrality, including Mean-CVaR with non-zero weight on Mean and Entropic Risk, we establish a convergence rate for the specific case of Entropic Risk. The following theorem provides an upper bound on the suboptimality of the policy $\hat{\pi}^*$ for the inner optimization objective. This bound demonstrates that the error decays quadratically with the discount factor $d_{T'}$, ensuring that the approximation error vanishes rapidly as the planning horizon increases. By combining this result with a grid search over the initial stock $c$, following the derivation in Proposition~\ref{prop:bound-approx-backward-induction}, one recovers a total suboptimality bound for the full $\operatorname{OCE}_f$ objective. We refer to Appendix~\ref{app:performance_bound} for details of the proof and a discussion regarding non-differentiable objectives.

\begin{theorem}[Performance Bound for Approximate Infinite-Horizon Policy]
	\label{thm:performance_bound}
	Let the objective functional $K$ be the \emph{Entropic Risk Measure} with risk aversion coefficient $\beta > 0$ (corresponding to the exponential utility $f(x) = \beta^{-1}(1-e^{-\beta x})$). Let the discount function $d_{t}$ follow the truncation scheme with parameter $\gamma_{\text{tail}} < 1$ for $t \ge T'$. Let $\pi^*$ be the optimal non-stationary policy for the infinite-horizon problem, $\hat{\pi}^*$ the policy returned by Algorithm~\ref{alg:infinite-horizon-approx} with planning horizon $T'$, and $\Delta_G$ the range of the discounted returns. Then, the suboptimality of the approximate policy at time $t=0$ is bounded by
	\[
	\| F_K(\eta^{\pi^*}_0) - F_K(\eta^{\hat{\pi}^*}_0) \|_\infty \le \frac{d_{T'}^2 \cdot \beta \cdot \Delta_G^2}{8} .
	\]
\end{theorem}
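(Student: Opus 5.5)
Since $\pi^*$ and $\hat{\pi}^*$ both start at the same augmented state $(s,c)$ and differ only in how they play after $T'$, I would compare them through a common reference: the composite objective that is risk-sensitive up to $T'$ and risk-neutral afterwards. Writing $C^d_0+G^d_0 \distequiv X + d_{T'}(C^d_{T'}+G^d_{T'}) - d_{T'}C^d_{T'}$ via \eqref{eq:anytime_proxy_d}, the total outcome splits into the part $Y_{T'} \doteq d_{T'}C^d_{T'}$ accumulated before $T'$ and a tail $Z \doteq d_{T'}G^d_{T'}$. Under the truncation scheme, $d_t = d_{T'}\gamma_{\text{tail}}^{t-T'}$ for $t\ge T'$, so $G^d_{T'}$ is exactly an exponentially discounted return with factor $\gamma_{\text{tail}}$. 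This matches the tail problem solved in Algorithm~\ref{alg:infinite-horizon-approx}.

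The key device is the exponential utility and the entropic certainty equivalent $\rho(X) = -\beta^{-1}\log \E e^{-\beta X}$. The objective $F_K$ is a monotone transform of $\E[e^{-\beta(Y+Z)}]$, and conditioning on $\mathcal{F}_{T'}$ gives $\E[e^{-\beta Y}\,\E[e^{-\beta Z}\mid \mathcal{F}_{T'}]]$. Given the state at $T'$, I would sandwich the conditional entropic value of the tail between its mean and the mean minus a variance penalty. Hoeffding's lemma, applied to $Z$ with range $d_{T'}\Delta_G$, gives
\[
\E[Z\mid\mathcal{F}_{T'}] - \frac{\beta d_{T'}^2\Delta_G^2}{8} \le \rho(Z\mid \mathcal{F}_{T'}) \le \E[Z\mid\mathcal{F}_{T'}],
\]
where the right inequality is Jensen. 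Hence for any tail policy, the entropic tail value lies within $\epsilon \doteq \beta d_{T'}^2\Delta_G^2/8$ of its risk-neutral value.

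Next I would run the two comparisons with the tower property at $T'$. First, for any policy $\pi$, replacing the conditional entropic tail value by the risk-neutral tail value and then optimizing the tail risk-neutrally can only increase the latter. Since $\pi^*_{\text{RN}}$ maximizes $\E[Z\mid S_{T'}]$ pointwise, with the scale $d_{T'}$ common to all tails, we get
\[
\rho(Y+Z^{\pi^*}) \le \rho\bigl(Y + \E[Z^{\pi^*}\mid\mathcal F_{T'}]\bigr) \le \rho\bigl(Y+\E[Z^{\text{RN}}\mid\mathcal{F}_{T'}]\bigr).
\]
Second, the monotonicity of $\rho$ (Lemma~\ref{lem:time-monotonicity} in its conditional form) together with the lower Hoeffding bound gives $\rho(Y+Z^{\text{RN}}) \ge \rho(Y+\E[Z^{\text{RN}}\mid\mathcal{F}_{T'}]) - \epsilon$, using translation invariance of $\rho$ to pull the constant out.

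Finally, Theorem~\ref{thm:backward-induction-optimality} applied to the horizon $T'$ problem with terminal distribution $\eta^{\pi^*_{\text{RN}}}$ shows that $\hat{\pi}^*$ is optimal among policies that use $\pi^*_{\text{RN}}$ after $T'$. Its value therefore dominates that of $\pi^*$'s pre-$T'$ part combined with the RN tail. Chaining these gives $0\le F_K\eta^{\pi^*}_0 - F_K\eta^{\hat\pi^*}_0 \le \epsilon$, after translating back through the $1/d_0 = 1$ normalization and the affine link between $U_f$ and $\rho$.

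The main obstacle is this last translation. $F_K$ is defined through the utility $f(x)=\beta^{-1}(1-e^{-\beta x})$ rather than through $\rho$, and $f$ is not translation-invariant. The $\epsilon$ bound transfers cleanly only at the certainty-equivalent level, so I would state and carry out the argument in certainty-equivalent units. If the statement is intended literally for $\E f$, the extra factor is $e^{-\beta\rho}\le 1$ for nonnegative outcomes, which would justify the same constant. A related subtlety is that $\pi^*$ may be history-dependent at $T'$. Stock augmentation handles this, because $(S_{T'},C^d_{T'})$ is a sufficient statistic, and the risk-neutral tail value does not depend on $c$.
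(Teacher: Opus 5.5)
Your proof is correct and reaches the stated constant by a more direct route than the paper.

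\textbf{The paper's route.} It works in normalized certainty-equivalent units $V_t$. It first proves a generic bound $V_t \ge \E[G_t] - \frac{d_t}{8}\beta\Delta_G^2$ using second-order Taylor expansions of $f$ and $f^{-1}$ plus Popoviciu's inequality. It then runs two backward inductions over $t=T'-1,\dots,0$. The first builds a lower bound $U_t$ on the value of $\hat\pi^*$ by shifting the terminal distribution by $-\mathcal{E}_{T'}$ and propagating the shift with the time-dependent monotonicity lemma. The second shows $V^{*,\mathrm{CE}}_t - U_t \le (d_{T'}/d_t)\mathcal{E}_{T'}$, using the greedy property of $\hat\pi^*_t$ and translation invariance of $\operatorname{CE}_f$ at every step.

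\textbf{Your route.} You collapse both inductions into the entropic tower identity $\rho(Y+Z)=\rho(Y+\rho(Z\mid\mathcal F_{T'}))$, applied once at $T'$. You then appeal once to the optimality of $\hat\pi^*$ among policies that follow $\pi^*_{\mathrm{RN}}$ after $T'$. This is the backward-induction theorem with terminal distribution $\eta^{\pi^*_{\mathrm{RN}}}$ in place of $\delta_0$, and its proof carries over verbatim. The resulting chain $\rho(Y^{\pi}+Z^{\pi}) \le \rho(Y^{\pi}+\E[Z^{\mathrm{RN}}\mid\mathcal F_{T'}]) \le \rho(Y^{\pi}+Z^{\mathrm{RN}})+\epsilon \le \rho(Y^{\hat\pi^*}+Z^{\mathrm{RN}})+\epsilon$ holds against every competitor $\pi$, not only $\pi^*$.

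\textbf{What each buys.} Hoeffding's lemma is a genuine gain. It yields the penalty $\beta d_{T'}^2\Delta_G^2/8$ with the actual entropic coefficient. The paper's lemma instead measures curvature by $|\inf_{\mathcal I} f''|/\inf_{\mathcal I} f'$, which for $f(x)=\beta^{-1}(1-e^{-\beta x})$ equals $\beta e^{\beta\Delta_G}$ rather than $\beta$. The paper's stepwise scheme is meant to be reusable for the Mean-CVaR corollary. However, its translation-invariance step is just as specific to the exponential utility as your tower identity.

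\textbf{Two small fixes.}
First, the obstacle in your last paragraph does not arise. $K$ is the entropic risk measure itself (the OCE of the exponential utility), so $F_K\eta_0(s,c)=\rho(c+G_0)$ is already in certainty-equivalent units. That is exactly how the paper reads the statement.
Second, your displayed decomposition with the undefined $X$ should simply read $C^d_0+G^d_0\distequiv Y+Z$. Take $\Delta_G$ as the range of the tail return $G^d_{T'}$, so that $Z$ has conditional range $d_{T'}\Delta_G$, which is what Hoeffding's lemma needs.
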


\section{Experiment Results}
\label{sec:experiments}
In this section, we design a series of experiments to evaluate the performance of our algorithm. Further details regarding the environments, hyperparameters, and expanded results are provided in Appendices~\ref{sec:rigor-algorithm} and~\ref{sec:additional-results}. The implementation code is available in the supplementary material.

\subsection{Goal-Based Wealth Management}\label{sec:gbwm}
Inspired by the hypothetical scenario from~\citet{Fedus.etal2019a} and the investment problem proposed by~\citet{Schultheis.etal2022}, we devise a goal-based wealth management (GBWM) application, in which the agent must invest his or her wealth and determine when fulfilling specified goals over a long-term trading horizon.~\citet{das2022dynamic} developed a DP framework to explicitly solve a class of GBWM problems, while~\citet{das2020dynamic} considered Q-learning for the one-goal GBWM problem and~\citet{Dixon.Halperin2020a} used a probabilistic extension of Q-learning.

Consider an agent who can invest funds to achieve some financial goals in the future. Each goal $\psi$ is characterized by a cost $c_t(\psi)$ and a utility $u_t(\psi)$ representing the agent's satisfaction from realizing the goal. The agent begins with an initial wealth $y_0\in\reals_+$ and, at each period $t\in\{0,\ldots,T\}$, makes two decisions: (i) which financial goal $\psi_t\in\{0,\ldots,\Psi_t\}$ to fulfill, and (ii) which investment portfolio $\ell_t\in\{1,\ldots,L\}$ to follow over the next period using the remaining wealth?\footnote{For notational convenience, we assume that the trading horizon is discretized into a sequence of time steps with equal duration, the 15 candidate portfolios are ordered by risk aversion from conservative to aggressive as in~\citet{das2022dynamic}, every period has a ``no-goal'' option with $c_t(0) = u_t(0) = 0$, and the agent can fulfil at most one goal.} At each period, the state is given by $s_t = (t, y_t)$, while the 2-tuple action $a_t = (\psi_t,\ell_t)$ corresponds to the goals and portfolio strategy. State transitions occur in two stages, where we first deduct the cost of the selected goal $y_{t+} = y_t - c_t(\psi_t)$ and then $y_{t+1}$ is obtained by investing $y_{t+}$ in the portfolio $\ell_t$. The reward is given by the goal's utility $r(s_t,a_t) = u_t(\psi_t)$. At the terminal time, one may include a final reward for the utility associated with remaining wealth $u_T(y_T)$, but it must be carefully designed to avoid hyper-sensitive policies.

\begin{figure}[!ht]
	\centering
	\begin{subfigure}[b]{0.49\linewidth}
		\centerline{\includegraphics[width=\linewidth]{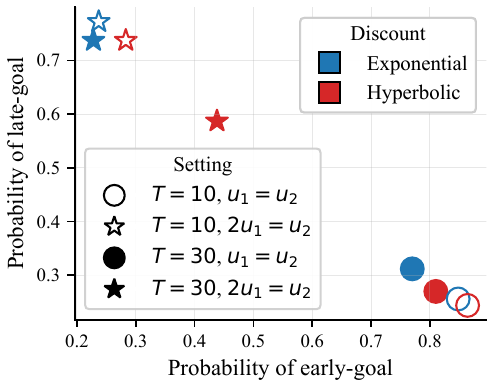}}
		\caption{$\E$}
		\label{fig:gbwm-risk-neutral}
	\end{subfigure}
	\begin{subfigure}[b]{0.49\linewidth}
		\centerline{\includegraphics[width=\linewidth]{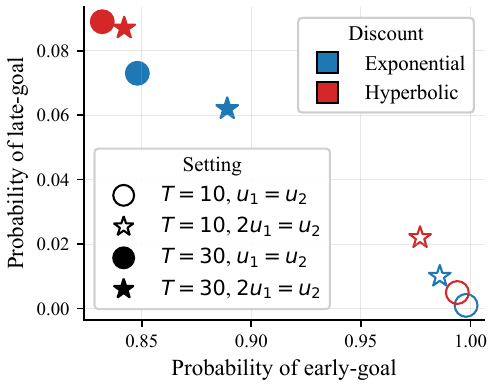}}
		\caption{$\operatorname{CVaR}_{0.1}$}
		\label{fig:gbwm-risk-sensitive}
	\end{subfigure}
	\caption{\textbf{The preference reversals in GBWM.} Monte-Carlo probabilities of achieving the two goals for a risk-neutral (Fig.~\ref{fig:gbwm-risk-neutral}) and risk-sensitive (Fig.~\ref{fig:gbwm-risk-sensitive}) agent. Colors indicate the discounting approach, and shapes represent different environment settings.}
\label{fig:gbwm}
\end{figure}

\begin{figure*}[!t]
\centering
\includegraphics[width=\linewidth]{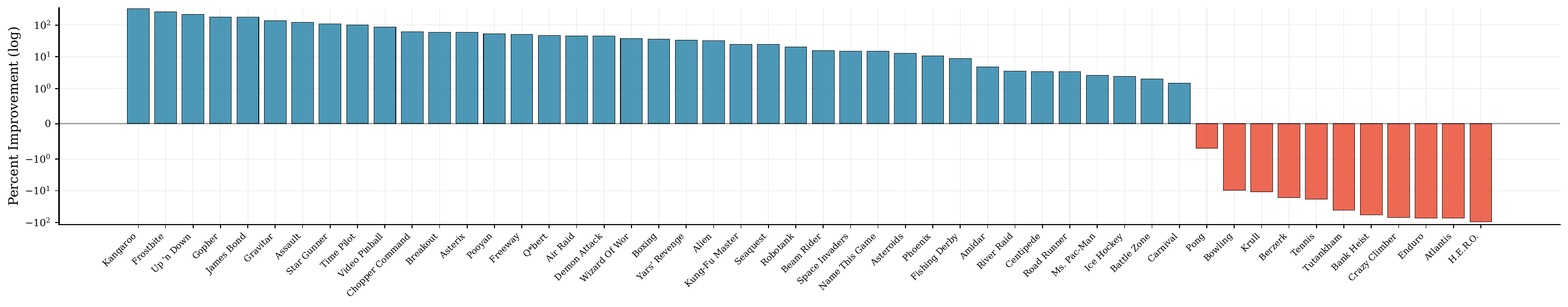}
\caption{\textbf{Relative performance improvement of our Time-Consistent algorithm across 50 Atari games.} Each bar shows the percentage improvement in mean return for the Time-Consistent policy over the Time-Inconsistent baseline, averaged over 3 seeds per game. The Time-Consistent policy outperforms in 39 out of 50 games. Across all games, it achieves a mean improvement of 39.89\% and a median improvement of 18.14\%, demonstrating the benefits of maintaining time-consistency under hyperbolic discounting.} 
\label{fig:improvement}
\end{figure*}

In Figure~\ref{fig:gbwm}, we show Monte-Carlo estimations of the probabilities of achieving the early-goal with cost $c_{T/2}(1)=100$ and late-goal with cost $c_{T}(1)=150$ under the optimal policy. Estimates are computed over 10,000 episodes and averaged across three different runs. We ran our algorithm with both exponential and hyperbolic discounting and observed that when the two goals have the same utility ($u_{T/2}(1) = u_{T}(1) = 1000$, circles), the agent accomplishes a goal if he or she has the wealth available, as there is no incentive in waiting. There is little to no difference for risk-neutral agents between a 10-period (hollow) or 30-period (filled) problem. We then compare these observations with the case where the late-goal is twice as valuable as the early-goal ($u_{T/2}(1) = 1000$, $u_{T}(1) = 2000$, stars). Then, the hyperbolic discounting displays substantially larger probabilities of achieving the first goal, visible as a shift of the red filled star relative to its exponential counterpart, reflecting the impatience and preference reversals of the agent. This shows that an exponentially discounted optimal policy fails to capture preference reversals, in a similar manner to the hypothetical scenario described in the introduction from~\citet{Fedus.etal2019a}.

For risk-sensitive agents, the chosen portfolios tend to be more conservative and typically support the purchase of a single goal, leading to smaller expected total utilities under the learned optimal policy. In fact, the preferences reversals are less pronounced compared to risk-neutral agents, where the strong risk-aversion drives the agent to secure a goal as soon as there is an opportunity, as observed by the large probabilities of achieving the early-goal in Figure~\ref{fig:gbwm-risk-sensitive}.

\subsection{Atari 2600}\label{sec:atari}
To demonstrate scalability to high-dimensional state spaces, we evaluated our algorithm on the Atari 2600 suite \citep{bellemare13arcade}. To isolate the impact of time-consistency, we compare our time-consistent agent against the time-inconsistent baseline using identical architectures. Our objective is to verify that resolving this theoretical inconsistency improves performance, rather than to claim state-of-the-art results against standard exponential baselines. Agents were trained for 40 million frames. Although this is shorter than the standard 200 million, the budget was sufficient to reveal clear differences in policy quality between the approaches.

The results, summarized in Figure~\ref{fig:improvement}, demonstrate a clear advantage of the time-consistent approach across 50 Atari games. Averaged over three seeds per game, the time-consistent agent achieves a mean improvement of 39.89\% and a median improvement of 18.14\% relative to the time-inconsistent baseline. This consistent performance gap indicates that the theoretical incoherence induced by enforcing stationarity under hyperbolic objectives is not merely conceptual, but a practical limitation that hinders learning in complex environments. To confirm that these gains stem from time-consistency rather than simply including time in the state, Appendix~\ref{app:atari} compares against a Time-Aware Time-Inconsistent agent. Our method retains a significant advantage under identical state information, confirming that the improvement arises from the time-consistent optimization objective.

\section{Conclusion}
In this paper, we introduced a unified framework for stock augmentation in distributional RL that supports agents with various time and risk preferences. Our approach accommodates general discount functions and OCE risk measures, outperforms prior precommitment RL algorithms with general discounting \citep{Fedus.etal2019a}, and captures preference reversals commonly observed in human behavior.

There are several promising directions for future work. First, agents could adaptively learn their own time preferences by optimizing multi-horizon weights, in a manner similar to meta-gradient reinforcement learning \citep{Xu.etal2018, Zahavy.etal2020}. Beyond learning scalar time preferences, the multi-horizon framework also enables the optimization of multi-dimensional utility functions that capture complex trade-offs across timescales. Such objectives operate directly on the joint distribution of the multi-horizon return vector and therefore require learning the full joint return distribution \citep[see e.g.,][]{Zhang.etal2021d, Wiltzer.etal2024a}. Together, these extensions move beyond approximating human-like discounting and open the door to agents with richer risk profiles for complex decision-making tasks.

\section*{Acknowledgements}
We thank the Digital Research Alliance of Canada (\href{https://alliancecan.ca/en}{alliancecan.ca}) for providing the computational resources used in this work. The second author also gratefully acknowledges support from the Natural Sciences and Engineering Research Council of Canada (grant PDF-2024-587484).

\bibliographystyle{unsrtnat}
\bibliography{MyLibrary}  

\newpage
\appendix
\onecolumn

\section{Related Work}\label{sec:related-work}

\paragraph{General Discounting in RL.}
Many RL frameworks primarily apply an exponential discount factor as a regularization tool to ensure convergence and stabilize training. This choice, however, imposes structural constraints on how future rewards are discounted, limiting the modeling expressiveness of the resulting algorithms. Furthermore, alternative discounting schemes naturally appear in stochastic environments and model-based settings. For instance, we show in Appendix~\ref{sec:discount-CIR} that, under some model assumptions, the risk-neutral expectation of the price of a zero-coupon bond leads to a deterministic hyperbolic discount function. These considerations motivated the development of RL frameworks that admit more general discounting mechanisms beyond the exponential form.

One major approach redefines the discount function to be explicitly time-dependent. \citet{Lattimore.Hutter2014} provide a complete characterization of time consistency within this model, proving that a policy remains consistent if and only if the agent's relative preferences for future rewards remain constant over its lifetime. Another line of research treats the discount factor as an adaptive parameter, to be tuned via meta-learning \citep{Xu.etal2018, Zahavy.etal2020} or adjusted dynamically to improve training stability \citep{Francois-Lavet.etal2016a}. A third, more foundational approach provides a formal basis for agents to have state- and action-dependent discount rates \citep{White2017,Pitis2019}.

Within this landscape, several distinct paradigms for handling general discounting have emerged. \citet{Fedus.etal2019a} proposed a practical, bottom-up approach to approximate hyperbolic discounting by aggregating the value functions of multiple agents, each with a different exponential discount factor. In contrast, \citet{Schultheis.etal2022} developed a theoretically principled, top-down model that embraces the time-dependency of the problem, leading to a non-stationary optimal policy. Taking a different path, \citet{Bayraktar.etal2024} re-framed the issue as a game-theoretic problem between an agent's present and future selves, solving for a stable (but not necessarily $t_0$-optimal) equilibrium policy. Our work builds most directly on the principled, time-dependent optimal control paradigm.

To be comprehensive, we mention that the average-reward formulation, which optimizes the long-run reward per time-step, offers another possible approach for long-horizon problems. We do not take this approach, however, because it creates a mismatch when the goal is to optimize a utility function (under the expected theory~\citep{von2007theory}) over total accumulated return. A static utility function $f(G)$, meant to evaluate the risk profile of the total return distribution, is conceptually incompatible with an objective based on per-step averages.

\paragraph{Risk-Sensitive RL.}
Properly accounting for risk preferences of the agent is as crucial as accounting for time preferences. In many real-life applications such as resource management, financial investment or healthcare decision-making, ignoring environmental uncertainty can cause catastrophic outcomes. A large body of work in risk-sensitive RL has sought to move beyond expectation-based goal functionals. Many contributions have focused on classes of static risk measures~\citep[see e.g.,][]{Bauerle.Ott2011,Tamar.etal2012a,DiCastro.etal2019a}, while others have investigated robust Markov decision processes for optimizing worst-case expectations~\citep[see e.g.,][]{Osogami2012}. 

Parallel to these developments, distributional RL~\citep{Morimura.etal2010a,Bellemare.etal2017a} provided a framework for modeling the full (discounted) return distribution, allowing agents to capture parametric uncertainty and optimize risk-sensitive objectives with more stable learning algorithms. Our work falls in the realm of these works. Alternatively, a class of dynamic risk measures has been developed to ensure time-consistency by construction~\citep[see e.g.][]{Ruszczynski2010}, for which many authors have constructed time-consistent RL algorithms~\citep[see e.g.][]{Tamar.etal2017, Coache.Jaimungal2023}. These alternative works, however, fall outside the scope of our work with their limited interpretability.

\paragraph{Time-Consistency and State Augmentation.}
An agent is considered time-consistent if the optimal plan it formulates at the beginning of an episode remains optimal at all future decision points. However, when an agent's preferences are not based on simple expected returns with exponential discounting, this property is often lost, leading to policies that are dynamically unstable. The literature has converged on state augmentation as the primary mechanism to restore tractability and consistency, and this augmentation serves two distinct but related purposes.

The first role of state augmentation is to make the problem amenable to standard DP techniques, especially when the optimal policy depends on the entire history of past rewards, making the search space of policies intractable.~\citet{Bauerle.Ott2011} first introduced the concept of a stock to solve the CVaR optimization problem, demonstrating that the search for an optimal policy could be reduced from the space of history-dependent policies to the much smaller space of Markov policies defined on an augmented state space that tracks accumulated past costs. This powerful idea was later extended to the optimization of more general expected utility functions~\citep{Bauerle.Rieder2014} and spectral risk measures~\citep{Bauerle.Glauner2021}, and related notions such as ``time-awareness'' state augmentation have been shown to improve policy stability in RL~\citep{Pardo.etal2018}. The second, more conceptual role of state augmentation is to serve as an anytime proxy for the initial return distribution~\citep{Moghimi.Ku2025a, moghimi2025risk, Pires.etal2025}. Static utility and risk measures are, by definition, functionals of the entire distribution of returns as seen from the initial time step. To make a decision at any future time that remains consistent with the initial objective, the agent requires information about how its current actions will affect that original distribution, which is exactly what the stock provides.

\section{Discounting in Financial Applications}
\label{sec:discount-CIR}

In many RL applications, the discount factor is primarily a regularization tool, introduced to ensure convergence and stabilize training, and its value is often tuned as a hyperparameter. However, in domains like finance, the discount factor has a precise economic meaning: it represents the present value of future risk-free cash flows, which is governed by the term structure of interest rates. While prior work, such as~\citet{Fedus.etal2019a}, has explored various discount functions derived from hazard priors, the direct connection of these functions to financial models is unclear.

In finance, the short-term interest rate is often modeled as a stochastic process. To derive a discount function, we consider the price of a zero-coupon bond at time $t$, denoted by $P(t,T)$, which pays \$1 at a future maturity date $T>t$. This price represents the value of a guaranteed dollar at time $T$, and is related to the path of the instantaneous short-rate, denoted by $(r_u)_{u\in(t,T)}$. Under no-arbitrage conditions, this price is given by the risk-neutral expectation of the discounted payoff:
\[
P(t,T) = \mathbb{E} \left[ e^{-\int_t^T r_u du} \right].
\]
Our deterministic discount function from the perspective of time zero is precisely the price of a zero-coupon bond that matures at time $t$, i.e., $d_{t} \equiv P(0,t)$. If the interest rate process $r_u$ is guaranteed to be non-negative, then the integral is non-negative and thus $d_{t} < 1$ for $t>0$, in line with our non-increasing assumption on the discount factor.

Seminal models from~\citet{vasicek1977equilibrium} or Cox-Ingersoll-Ross (CIR)~\citep{cox1985theory} provide analytic solutions for this bond price. The CIR model describes the evolution of $r_t$ as 
\[
dr_t = a(b - r_t)dt + \sigma \sqrt{r_t} dW_t,
\]
where $a$ is the speed of mean reversion, $b$ is the long-term mean rate, $\sigma$ is the volatility, and $dW_t$ is a Wiener process. The CIR model guarantees non-negative interest rates as long as $2ab \geq \sigma^2$. In this case, the bond price has an affine term structure solution of the form $P(t,T) = A(t,T)e^{-B(t,T)r_t}$, where the functions $A(t,T)$ and $B(t,T)$ depend only on the time to maturity $\tau \doteq T-t$ and $h \doteq \sqrt{a^2 + 2\sigma^2}$:
\begin{equation*}
	A(t,T) = \left[ \frac{2h e^{(a+h)\tau/2}}{ (a+h)(e^{h \tau}-1) + 2h } \right]^{2ab/\sigma^2} \quad \mbox{and} \quad B(t,T) = \frac{2(e^{h \tau}-1)}{(a+h)(e^{h \tau}-1) + 2h}.
\end{equation*}

The general nature of $d_{t}$ in our model provides the flexibility to choose a discount function that is well-motivated by the specific application domain. To obtain our discount function in the CIR model, we set $t=0$ and $T=t$ (and thus $\tau=t$), which yields $d_{t} = A(0,t)e^{-B(0,t)r_0}$. We provide a comparison of various discounting approaches in Figure~\ref{fig:discounts}. Besides this CIR model, other methods can be used for modeling time preference, such as deriving the discount function directly from a hazard rate model. Our framework is sufficiently general to accommodate such choices, providing a principled foundation for risk-sensitive decision-making under sophisticated and realistic models of time.

\begin{figure}[!ht]
	\centerline{\includegraphics[width=0.5\linewidth]{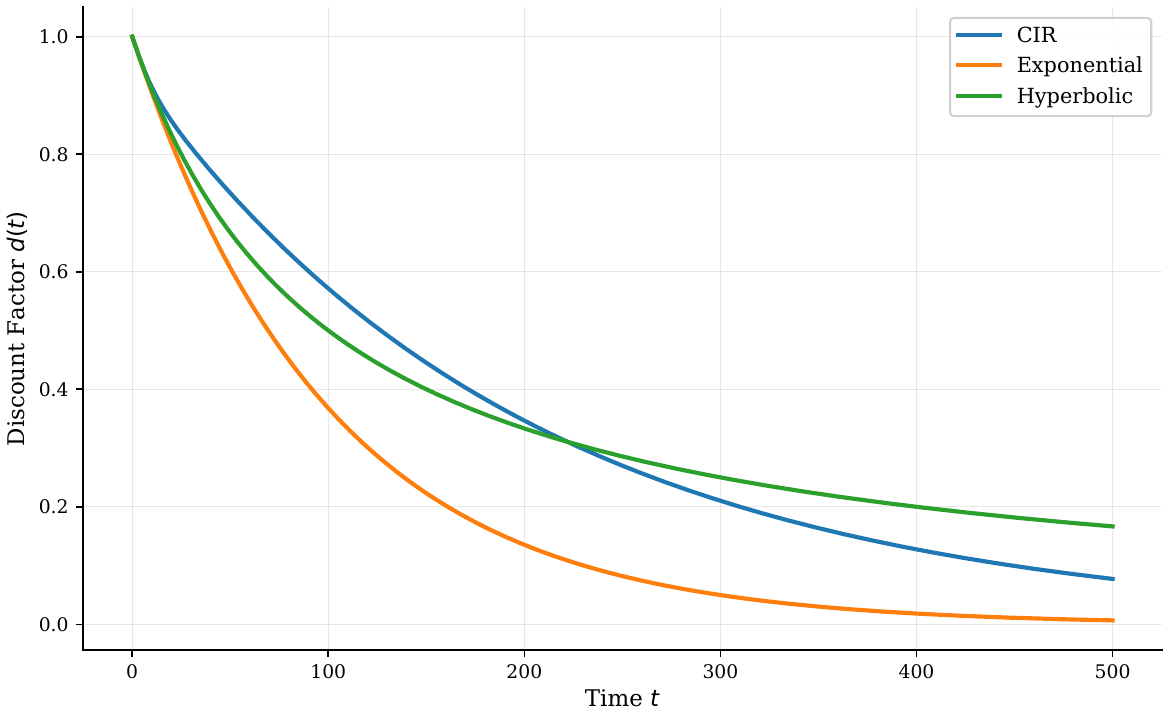}}
	\caption{\textbf{From interest rates to discount functions.} Comparison of exponential with $\gamma = 0.99$ (Orange), hyperbolic with $k = 0.01$ (Green), and CIR-based discount factors with parameters $a = 0.05$, $b = 0.01$, $\sigma = 0.1$, and $r_0 = 0.01$ (Blue)}
	\label{fig:discounts}
\end{figure}

\section{Anytime Proxy}
\label{app:anytime_proxy}
In this section, we provide the derivations for the ``anytime proxy'' relationships. In the case of exponential discounting, we recover~\eqref{eq:anytime_proxy} as follows:
\begin{align}
	C^{\gamma}_0 + G^{\gamma}_0 &= C^{\gamma}_0 + \sum_{k=0}^{\infty} \gamma^{k} R_{k+1} \nonumber\\
	& = C^{\gamma}_0 + \sum_{k=0}^{t-1} \gamma^k R_{k+1} + \sum_{k=t}^{\infty} \gamma^k R_{k+1} \nonumber\\
	& = C^{\gamma}_0 + \sum_{k=0}^{t-1} \gamma^k R_{k+1} + \gamma^t \sum_{k=0}^{\infty} \gamma^k R_{t+k+1} \nonumber\\
	& = \gamma^t \left( \frac{C^{\gamma}_0 + \sum_{k=0}^{t-1} \gamma^k R_{k+1}}{\gamma^t} +  \sum_{k=0}^{\infty} \gamma^k R_{t+k+1} \right) \nonumber\\
	& = \gamma^t \left( C^{\gamma}_t +  G^{\gamma}_t \right).
\end{align}

Equation~\eqref{eq:anytime_proxy_d} for general discounting is given by
\begin{align}
	C^d_0 + G^d_0 &= C^d_0 + \sum_{k=0}^{T-1}  d_{k} R_{k+1} \nonumber \\
	& = \left(C^d_0 + \sum_{k=0}^{t-1} d_{k} R_{k+1}\right) + \sum_{k=t}^{T-1} d_{k} R_{k+1} \nonumber \\
	& = d_{t}C^d_t + d_{t} \sum_{k=0}^{T-t-1} \frac{d(t+k)}{d_{t}} R_{t+k+1} \nonumber \\
	& = d_{t} \left(C^d_t + G^d_t\right).
\end{align}

For~\eqref{eq:anytime_proxy_m} as described in the multi-horizon setting, we have
\begin{align}
	C^{\tilde{d}}_0 + G^{\tilde{d}}_0 &= \tilde{d}_tC^{\tilde{d}}_t + \tilde{d}_t \sum_{k=0}^{T-t-1} \frac{\tilde{d}(t+k)}{\tilde{d}_t} R_{t+k+1} \nonumber \\
	& = \tilde{d}_tC^{\tilde{d}}_t + \tilde{d}_t \sum_{k=0}^{T-t-1} \frac{\sum_{i=1}^m w_i \gamma_i^{t+k}}{\tilde{d}_t} R_{t+k+1} \nonumber \\
	& = \tilde{d}_tC^{\tilde{d}}_t + \tilde{d}_t \sum_{i=1}^m  \frac{w_i \gamma_i^{t}}{\tilde{d}_t} \sum_{k=0}^{T-t-1}  \gamma_i^{k} R_{t+k+1} \nonumber \\
	& = \tilde{d}_t \left(C^{\tilde{d}}_t + \sum_{i=1}^m w_{i,t} G^{\gamma_i}_t\right).
\end{align}

\section{OCE Utility Functions}
\label{app:risk_measures}

The formulas for utility functions depicted in Figure \ref{fig:utilities} are presented in Table \ref{tab:risk_measures}.
\begin{table}[!ht]
	\caption{Common Risk Measures as OCEs.}
	\label{tab:risk_measures}
	\begin{center}
		\begin{small}
			\begin{sc}
				\begin{tabular}{l l l l}
					\toprule
					Risk Name & Parameter & Formula & Utility function $f(x)$\\
					\midrule
					Mean & None & $\mathbb{E} [G]$ & $x$ \\
					\hline CVaR & $\tau \in(0,1]$ & $\operatorname{CVaR}_\tau(G)$ & $\tau^{-1}(x)_{-}$ \\
					\hline \multirow{2}{*}{Mean-CVaR} & $\kappa_1 \in[0,1]$ & \multirow{2}{*}{$\kappa_1 \mathbb{E} [G]+\left(1-\kappa_1\right) \cdot \mathrm{CVaR}_\tau(G)$} & $\kappa_1(x)_{+}+\kappa_2(x)_{-},$ where \\
					& $\tau \in(0,1]$ &  & $\kappa_2=\tau^{-1}\left(1-\kappa_1\right)+\kappa_1$ \\
					\hline Entropic Risk & $\beta \in (-\infty, 0)$ & $\operatorname{Entr}_\beta(G)$ & $\frac{1}{\beta}(\exp (\beta x)-1)$ \\
					\hline \multirow{2}{*}{Mean-Variance} & \multirow{2}{*}{$\kappa \in (0, \infty)$} & \multirow{2}{*}{$\mathbb{E} [G] - \kappa \cdot \operatorname{Var}(G)$} & $(x-\kappa x^2)\mathbb{I}\{x \leq \frac{1}{2\kappa}\}$ \\
					& & & $\quad + \frac{1}{4\kappa}\mathbb{I}\{x > \frac{1}{2\kappa}\}$ \\
					\bottomrule
				\end{tabular}
			\end{sc}
		\end{small}
	\end{center}
\end{table}

\section{Proofs in Finite-Horizon Setting}
\label{app:finite}

\subsection{Time-Dependent Monotonicity Lemma}
\label{app:finite1}
\begin{lemma}[Time-Dependent Monotonicity (Restated)]
	Let $K$ be an objective functional satisfying the indifference to mixture property. For any time $t < T$, let $\eta_{t+1}$ and $\eta'_{t+1}$ be two return distribution functions for time $t+1$. If $F_K \eta_{t+1} \geq F_K \eta'_{t+1}$, then for any time-$t$ policy $\pi_t$:
	$F_K (\mathcal{T}^d_{\pi_t} \eta_{t+1}) \geq F_K (\mathcal{T}^d_{\pi_t} \eta'_{t+1})$.
\end{lemma}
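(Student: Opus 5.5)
The plan is to unfold $F_K$ on both sides and write the time-$t$ outcome as a mixture over the one-step transition. Then the hypothesis on $\eta_{t+1}$ versus $\eta'_{t+1}$, applied pointwise at the next augmented state, transfers through indifference to mixtures.

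Fix $(s,c)$ and the time-$t$ policy $\pi_t$. By definition of $\mathcal{T}^d_{\pi_t}$ and $F_K$,
\[
(F_K(\mathcal{T}^d_{\pi_t}\eta_{t+1}))(s,c) = K\,\df\bigl(d_t c + d_t R_{t+1} + d_t\hat{d}_t G(S_{t+1},C_{t+1})\bigr)/d_t,
\]
where $C_{t+1} = (c+R_{t+1})/\hat{d}_t$ and $G(s',c')\sim\eta_{t+1}(s',c')$. The first key step is algebraic. Since $d_t\hat{d}_t = d_{t+1}$ and $d_t(c+R_{t+1}) = d_{t+1}C_{t+1}$, the random variable inside $K$ equals
\[
d_{t+1}\bigl(C_{t+1} + G(S_{t+1},C_{t+1})\bigr).
\]
This is the one-step instance of the anytime proxy~\eqref{eq:anytime_proxy_d}. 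It lets us define, for each next augmented state $(s',c')$, the distribution $\mu(s',c') \doteq \df\bigl(d_{t+1}(c' + G(s',c'))\bigr)$, and analogously $\mu'$ built from $\eta'_{t+1}$.

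With this, the time-$t$ objective is $K$ applied to the mixture of $\mu$ under the random pair $(S_{t+1},C_{t+1})$. The law of this pair is induced by $A_t\sim\pi_t(s,c)$, the transition, and the reward, and it does not depend on $\eta$ versus $\eta'$. Dividing by the positive constant $d_t$ preserves order.

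The hypothesis $F_K\eta_{t+1}\ge F_K\eta'_{t+1}$ reads, pointwise, $K\mu(s',c')/d_{t+1} \ge K\mu'(s',c')/d_{t+1}$, hence $K\mu(s',c') \ge K\mu'(s',c')$ for all $(s',c')$. Applying Definition~\ref{def:indif-mixture} to the return distribution functions $\mu,\mu'$ and the random variable $(S_{t+1},C_{t+1})$ gives $K(G_\mu(S_{t+1},C_{t+1})) \ge K(G_{\mu'}(S_{t+1},C_{t+1}))$. Dividing by $d_t$ yields the claim at $(s,c)$. Since $(s,c)$ was arbitrary, the claim holds everywhere.

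The main subtlety is the rescaling step. Indifference to scaling is \emph{not} assumed, so the mixture property cannot be applied to $\eta_{t+1}$ directly, with $\hat{d}_t$ pulled out afterwards. Indeed, the hypothesis compares $K$ of $d_{t+1}(c'+G)$, not of $G$. The remedy is exactly to absorb both the stock and the factor $d_{t+1}$ into the mixed distributions $\mu,\mu'$ before invoking Definition~\ref{def:indif-mixture}, which is why $F_K$ was defined with the $d_t$ scaling built in. A minor technical check is that $\mu,\mu'$ lie in the space on which Definition~\ref{def:indif-mixture} is stated, e.g.\ have finite supremum Wasserstein distance. This follows from bounded rewards over a finite horizon, which I would state as a standing assumption.
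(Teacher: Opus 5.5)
Your proposal is correct and follows the paper's proof: both use the one-step identity $d_t c + d_t(R_{t+1} + \hat{d}_t G) = d_{t+1}(C^d_{t+1} + G)$ to write the time-$t$ outcome as a mixture, over $(S_{t+1}, C^d_{t+1})$, of the time-$(t+1)$ outcomes, and then invoke indifference to mixtures. Your explicit definition of the rescaled, stock-shifted distribution functions $\mu,\mu'$ makes precise the step the paper states informally, namely why Definition~\ref{def:indif-mixture} applies without indifference to scaling.
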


\begin{proof}[Proof]
	The proof adapts Lemma 14 in~\citet{Pires.etal2025}. We fix a policy $\pi_t$ and state $(s,c)\in \states \times \mathcal{C}$. The hypothesis $F_K \eta_{t+1} \geq F_K \eta'_{t+1}$ implies that for any realization of the next state and stock $(s', c')$, we have $K\df(d_{t+1}c' + d_{t+1}G) \geq K\df(d_{t+1}c' + d_{t+1}G')$, where $G \sim \eta_{t+1}(s', c')$ and $G' \sim \eta'_{t+1}(s', c')$. 
	
	Using the recursive definitions of the stock update and the Bellman operator, the total unscaled outcome at time $t$ satisfies the identity:
	\[
	d_t c + d_t (R_{t+1} + \hat{d}_t G) \distequiv d_{t+1}C^d_{t+1} +d_{t+1}G.
	\]
	This reveals that the random variable evaluated by $K$ at time $t$ is a probabilistic mixture (over $S_{t+1}, R_{t+1}$) of the random variables evaluated at time $t+1$. Since the preference holds component-wise for every realization, the Indifference to Mixtures property in Definition~\ref{def:indif-mixture} ensures the preference is preserved for the mixture, proving the lemma.
\end{proof}

\subsection{Optimality of Distributional Backward Induction}
\label{app:finite2}
\begin{theorem}[Optimality of Distributional Backward Induction (Restated)]
	Let the problem horizon $T\in\mathbb{N}$ be finite and known, and let the objective functional $K$ be indifferent to mixtures. Assume that the sequence of return distributions $\eta^*=(\eta^*_0, \dots, \eta^*_T)$ is generated by the following backward induction algorithm:
	\begin{itemize}
		\item \textbf{Base Case:} For all $(s,c) \in \states \times \mathcal{C}$, set $\eta^*_T(s,c) = \delta_0$.
		\item \textbf{Recursive Step:} For $t = T-1$ down to $0$, compute $\eta^*_t \doteq \mathcal{T}^d_* \eta^*_{t+1}$,
		where $\mathcal{T}^d_*$ is the \emph{greedy time-dependent distributional Bellman operator}, defined such that for any function $\eta_{t+1}$ and for all $(s,c)\in \states \times \mathcal{C}$, it returns the function $\eta_t$ satisfying:
		\[
		(F_K \eta_t)(s,c) = \sup_{\pi_t} (F_K (\mathcal{T}^d_{\pi_t} \eta_{t+1}))(s,c).
		\]
	\end{itemize}
	Then, the sequence $\eta^*$ is a sequence of optimal return distributions. Furthermore, the non-stationary policy $\pi^* = (\pi^*_0, \dots, \pi^*_{T-1})$, where each $\pi^*_t$ is a \emph{greedy policy} with respect to $\eta^*_{t+1}$, is optimal.    
\end{theorem}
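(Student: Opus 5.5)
We argue by backward induction on $t$, using the Time-Dependent Monotonicity Lemma (Lemma~\ref{lem:time-monotonicity}) as the engine of each inductive step. For a non-stationary policy $\pi=(\pi_0,\dots,\pi_{T-1})$, write $\eta^\pi_t$ for the return distribution function of its tail $(\pi_t,\dots,\pi_{T-1})$ from time $t$. The first step is to note that these satisfy the time-dependent distributional Bellman equation $\eta^\pi_T=\delta_0$ and $\eta^\pi_t=\mathcal{T}^d_{\pi_t}\eta^\pi_{t+1}$. This follows directly from $G^d_t = R_{t+1}+\hat d_t G^d_{t+1}$, the stock update $C^d_{t+1}=(C^d_t+R_{t+1})/\hat d_t$, and the Markov property of the augmented state $(S_t,C^d_t)$.

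The inductive claim is the following statement for every $t\in\{0,\dots,T\}$:
\begin{enumerate}[noitemsep, topsep=0pt]
	\item[(a)] $F_K\eta^*_t \ge F_K\eta^\pi_t$ pointwise on $\states\times\mathcal{C}$ for every policy $\pi$;
	\item[(b)] $F_K\eta^{\pi^*}_t = F_K\eta^*_t$.
\end{enumerate}
At $t=T$ both hold trivially, since all return distributions equal $\delta_0$.

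For the inductive step, assume (a) and (b) at $t+1$.

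\emph{Optimality.} For any $\pi$, Lemma~\ref{lem:time-monotonicity} applied to (a) at $t+1$ gives $F_K\mathcal{T}^d_{\pi_t}\eta^*_{t+1}\ge F_K\mathcal{T}^d_{\pi_t}\eta^\pi_{t+1}=F_K\eta^\pi_t$. By the defining property of $\mathcal{T}^d_*$, the left side is at most $F_K\eta^*_t$, which gives (a) at $t$.

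\emph{Attainment.} Applying Lemma~\ref{lem:time-monotonicity} in both directions to the equality (b) at $t+1$ gives $F_K\eta^{\pi^*}_t=F_K\mathcal{T}^d_{\pi^*_t}\eta^{\pi^*}_{t+1}=F_K\mathcal{T}^d_{\pi^*_t}\eta^*_{t+1}$. Greediness of $\pi^*_t$ with respect to $\eta^*_{t+1}$ then makes this equal to $\sup_{\pi_t}F_K\mathcal{T}^d_{\pi_t}\eta^*_{t+1}=F_K\eta^*_t$, which gives (b) at $t$.

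At $t=0$, (a) and (b) together say that $\eta^*_0$ attains the supremum of $F_K$ over all policies at every initial augmented state $(s,c)$, and that $\pi^*$ achieves it. More generally, $\eta^*_t$ is optimal from every time-$t$ augmented state.

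The main obstacle is the class of comparison policies. The claim should hold against history-dependent policies, not just Markov policies on $(s,c)$. Two routes handle this. The first is the standard stock-augmentation reduction of \citet{Bauerle.Ott2011} and \citet{Pires.etal2025}: by \eqref{eq:anytime_proxy_d}, the objective depends on the past only through $C^d_t$ (and through $t$, which is tracked explicitly), so it suffices to compare against Markov policies on the augmented state. The second route avoids the reduction. Condition on the history $h_t$, define $\eta^\pi_t(h_t)$, and apply the same inductive argument. Indifference to mixtures is exactly what lets the pointwise inequality survive mixing over the histories that share the same $(s,c)$.

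A second subtlety is that the supremum defining $\mathcal{T}^d_*$ may not be attained, so a greedy $\pi^*_t$ might not exist. I would either assume attainment, as the statement implicitly does (e.g., finite $\actions$, where a deterministic maximizing action exists), or run the argument with $\varepsilon$-greedy policies and let $\varepsilon\to0$.
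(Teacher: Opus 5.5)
Your proposal is correct and follows the same backward-induction argument as the paper. Part (a) is exactly the paper's chain: apply the Time-Dependent Monotonicity Lemma to the inductive hypothesis, then bound by the defining supremum of $\mathcal{T}^d_*$. Your part (b), which uses the lemma in both directions to get $F_K\eta^{\pi^*}_t = F_K\eta^*_t$, is a worthwhile addition, since the paper's proof only establishes the upper bound and never explicitly checks that the greedy policy $\pi^*$ attains it.
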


\begin{proof}[Proof]
	The proof proceeds by backward induction on the time step $t$.
	\textbf{Base Case:} The theorem holds trivially at $t=T$. At the horizon, $G^d_T=0$, so the optimal return from any state $(s,c)$ and time $T$ is deterministically zero, hence $\eta^*_T(s,c) = \delta_0$ is the optimal return distribution function.
	\textbf{Recursive Step:} Assume that for some time step $t+1 \in \{1, \dots, T\}$, $\eta^*_{t+1}$ is the optimal return distribution function for any subproblem starting at or after time $t+1$. We aim to show that $\eta^*_t \doteq \mathcal{T}^d_* \eta^*_{t+1}$ is optimal at time $t$. Let $\pi^*_t$ be a greedy policy w.r.t. $\eta^*_{t+1}$. Consider an arbitrary policy $\pi_{\geq t} = (\pi_t, \pi_{\geq t+1})$. Its return distribution is $\eta^{\pi_{\geq t}} = \mathcal{T}^d_{\pi_t} \eta^{\pi_{\geq t+1}}$. By the inductive hypothesis, $F_K \eta^*_{t+1} \geq F_K \eta^{\pi_{\geq t+1}}$. Applying the Time-Dependent Monotonicity Lemma gives $F_K (\mathcal{T}^d_{\pi_t} \eta^*_{t+1}) \geq F_K (\mathcal{T}^d_{\pi_t} \eta^{\pi_{\geq t+1}}) = F_K \eta^{\pi_{\geq t}}$. By definition of the greedy operator, $F_K \eta^*_t \geq F_K (\mathcal{T}^d_{\pi_t} \eta^*_{t+1})$. Chaining these inequalities yields $F_K \eta^*_t \geq F_K \eta^{\pi_{\geq t}}$. Since $\pi_{\geq t}$ was arbitrary, $\eta^*_t$ is optimal.
\end{proof}

\subsection{OCE Optimality Bound}
\label{app:finite3}

\begin{proposition}[OCE Optimality Bound (Restated)]
	Let $f: \mathbb{R} \to \mathbb{R}$ be an $L$-Lipschitz utility function. Let $\pi^*$ be the optimal policy from Theorem~\ref{thm:backward-induction-optimality}, and let $\hat{\pi}$ be the policy derived from an approximate backward induction with one-step errors $(\varepsilon_0, \dots, \varepsilon_{T-1})$. Let the outer optimization be performed over a finite grid $\bar{\mathcal{C}}$ with spacing $\delta$. If $\hat{c} \in \bar{\mathcal{C}}$ is the initial stock selected by the algorithm, the total suboptimality is bounded by:
	\[
	\operatorname{OCE}_f(\eta^*_0) - \left( -\hat{c} + \mathbb{E}[f(\hat{c} + G^{\hat{\pi}})] \right) \leq 2 \mathcal{E}_{\text{DP}} + \frac{(1 + L)\delta}{2},
	\]
	where $\mathcal{E}_{\text{DP}} \doteq L \sum_{t=0}^{T-1} d_t \varepsilon_t$ is the cumulative DP error term. 
\end{proposition}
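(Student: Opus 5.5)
The bound splits into an inner DP error and an outer grid error, controlled separately and then added. Define the exact value surface $J^*(c) \doteq -c + (U_f\eta^*_0)(s_0,c)$, the algorithm's estimated surface $\hat J(c) \doteq -c + (U_f\hat\eta_0)(s_0,c)$, where $\hat\eta_0$ is the approximate return distribution produced by the inexact backward induction, and the realized surface $J^{\hat\pi}(c) \doteq -c + \mathbb{E}[f(c+G^{\hat\pi})]$. Then $\operatorname{OCE}_f(\eta^*_0)=\max_c J^*(c)$ is attained at some $c^*$, and $\hat c = \arg\max_{c\in\bar{\mathcal{C}}}\hat J(c)$. The suboptimality is $J^*(c^*) - J^{\hat\pi}(\hat c)$, which I will decompose as
\[
\bigl[J^*(c^*) - J^*(\bar c)\bigr] + \bigl[J^*(\bar c) - \hat J(\bar c)\bigr] + \bigl[\hat J(\bar c) - \hat J(\hat c)\bigr] + \bigl[\hat J(\hat c) - J^{\hat\pi}(\hat c)\bigr],
\]
where $\bar c\in\bar{\mathcal{C}}$ is the grid point nearest $c^*$, so $|\bar c - c^*|\le\delta/2$. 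The third bracket is nonpositive by the choice of $\hat c$.

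For the first bracket, I use only that $c\mapsto -c+\mathbb{E}f(c+G)$ is $(1+L)$-Lipschitz, since $f$ is $L$-Lipschitz and $-c$ is 1-Lipschitz. This gives $J^*(c^*)-J^*(\bar c)\le (1+L)\delta/2$. I will not try to exploit optimality of $c^*$ to sharpen this, because the stated constant is exactly this crude bound.

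The second and fourth brackets are where $\mathcal{E}_{\text{DP}}$ enters, once each, which accounts for the factor $2$. I want to show that for every fixed $c$, both $|(U_f\eta^*_0)(s_0,c)-(U_f\hat\eta_0)(s_0,c)|\le\mathcal{E}_{\text{DP}}$ and $|(U_f\hat\eta_0)(s_0,c)-(U_f\eta^{\hat\pi}_0)(s_0,c)|\le\mathcal{E}_{\text{DP}}$. I interpret the one-step error $\varepsilon_t$ as a sup-Wasserstein error between the computed $\hat\eta_t$ and $\mathcal{T}^d_{\hat\pi_t}\hat\eta_{t+1}$ (equivalently $\mathcal{T}^d_*\hat\eta_{t+1}$ up to greedy slack), measured in time-$t$ return units.

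I then telescope over $t$. By the anytime proxy \eqref{eq:anytime_proxy_d}, an error of size $\varepsilon_t$ in the time-$t$ return distribution becomes an error of size $d_t\varepsilon_t$ in the distribution of the total outcome $d_t(C^d_t+G^d_t)$. Since $U_f$ is evaluated on that unscaled outcome and $f$ is $L$-Lipschitz, this costs at most $L d_t\varepsilon_t$. Because of the mixture identity in the proof of Lemma~\ref{lem:time-monotonicity}, propagating from $t+1$ back to $t$ through $\mathcal{T}^d_{\pi_t}$ does not amplify the Wasserstein gap: it averages over $(S_{t+1},R_{t+1})$ and rescales by $\hat d_t\le 1$. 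Summing the contributions gives $L\sum_t d_t\varepsilon_t=\mathcal{E}_{\text{DP}}$. The comparison with $\eta^*_0$ uses the same telescoping, combined with the optimality and monotonicity from Theorem~\ref{thm:backward-induction-optimality}, to control the greedy slack in one direction.

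The main obstacle is this telescoping step. I must track the $1/d_t$ normalization in $U_f$ and the rescaled stock $(c+R)/\hat d_t$ carefully, so that the errors compose additively with weights $d_t$ and not with products of $\hat d$'s that would overcount. I would first prove a one-step lemma of the form $\overline{\wass}(\mathcal{T}^d_{\pi}\eta,\mathcal{T}^d_{\pi}\eta')\le\hat d_t\,\overline{\wass}(\eta,\eta')$ and then unroll it, checking that $\prod_{k<t}\hat d_k = d_t$. Adding the four brackets then yields $2\mathcal{E}_{\text{DP}}+(1+L)\delta/2$.
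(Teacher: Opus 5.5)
Your four-term decomposition (discretization, estimation, grid optimization, policy gap), the $(1+L)\delta/2$ bound, and the unrolling via $\prod_{k<t}\hat d_k=d_t$ are exactly the paper's proof. Your fixed-policy treatment of the fourth bracket is correct, and more explicit than the paper's. Since $\eta^{\hat\pi}_t=\mathcal{T}^d_{\hat\pi_t}\eta^{\hat\pi}_{t+1}$, your one-step lemma gives $\overline{\wass}(\hat\eta_t,\eta^{\hat\pi}_t)\le\varepsilon_t+\hat d_t\,\overline{\wass}(\hat\eta_{t+1},\eta^{\hat\pi}_{t+1})$, hence $\overline{\wass}(\hat\eta_0,\eta^{\hat\pi}_0)\le\sum_t d_t\varepsilon_t$.

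A smaller point concerns the first bracket: $G$ is not fixed there, because $\eta^*_0(s_0,c)$ depends on $c$. You need that any policy started from stock $c$ can be mimicked from $c'$. This holds because the offset $C'_t-C_t=(c'-c)/d_t$ is deterministic and known at time $t$. So the set of achievable return distributions does not depend on $c$, and $c\mapsto(U_f\eta^*_0)(s_0,c)$ is a supremum of $L$-Lipschitz functions. The paper skips this step as well.

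The genuine gap is the second bracket, which you propose to handle by ``the same telescoping''. There $\eta^*_t$ and $\hat\eta_t$ are backed up with different greedy policies $\pi^*_t$ and $\hat\pi_t$, while your lemma requires a common $\pi$. The greedy step is also not a Wasserstein contraction. An arbitrarily small perturbation of $\eta_{t+1}$ can switch the greedy choice between two actions of equal $U_f$-value whose backed-up distributions stay a fixed $\wass$-distance apart (e.g.\ $\delta_0$ versus $\frac12(\delta_{-1}+\delta_1)$ with $f(x)=x$). So no recursion for $\overline{\wass}(\eta^*_t,\hat\eta_t)$ is available. The paper's Step 1 claim $\wass(\mathcal{T}^d_*\eta^*_{t+1},\mathcal{T}^d_*\hat\eta_{t+1})\le\hat d_tE_{t+1}$ is unjustified for the same reason.

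The missing idea is to run the recursion on values instead of distributions. The identity $d_tc+d_t(R_{t+1}+\hat d_tG)=d_{t+1}(C^d_{t+1}+G)$ from the proof of Lemma~\ref{lem:time-monotonicity} gives, for expected utility,
\[
(U_f\mathcal{T}^d_{\pi_t}\eta_{t+1})(s,c)=\hat d_t\,\mathbb{E}_{\pi_t}\bigl[(U_f\eta_{t+1})(S_{t+1},C^d_{t+1})\bigr],
\]
which is linear in the value surface. Set $\Delta_t\doteq\sup_{s,c}\bigl[(U_f\eta^*_t)(s,c)-(U_f\hat\eta_t)(s,c)\bigr]$. Using greediness of $\hat\pi_t$,
\begin{align*}
(U_f\eta^*_t)(s,c)&=\hat d_t\,\mathbb{E}_{\pi^*_t}\bigl[(U_f\eta^*_{t+1})(S_{t+1},C^d_{t+1})\bigr]\le\hat d_t\,\mathbb{E}_{\hat\pi_t}\bigl[(U_f\hat\eta_{t+1})(S_{t+1},C^d_{t+1})\bigr]+\hat d_t\Delta_{t+1}\\
&\le(U_f\hat\eta_t)(s,c)+L\varepsilon_t+\hat d_t\Delta_{t+1}.
\end{align*}
The last step uses $\overline{\wass}(\mathcal{T}^d_{\hat\pi_t}\hat\eta_{t+1},\hat\eta_t)\le\varepsilon_t$ together with the fact that $\nu\mapsto\mathbb{E}f(d_tc+d_tG)/d_t$ is $L$-Lipschitz in $\wass$. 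Unrolling from $\Delta_T=0$ gives $\Delta_0\le\mathcal{E}_{\text{DP}}$, which is exactly what the second bracket needs. This one-sided value recursion is the precise form of your ``greedy slack in one direction''.
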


\begin{proof}
	The proof proceeds in three steps: deriving the inner approximation error, establishing the properties of the outer objective, and combining them into a total error bound.
	
	\textbf{Step 1 -- Inner DP Error Derivation.}
	Let $\hat{\eta}_t = \mathcal{A}(\mathcal{T}^d_* \hat{\eta}_{t+1})$ be the return distribution computed by the approximate backward induction, where $\mathcal{A}$ is an approximation operator. We define the cumulative estimation error in the Wasserstein metric as $E_t \doteq \sup_{s,c} \wass(\eta^*_t(s,c), \hat{\eta}_t(s,c))$. Given the one-step approximation error $\varepsilon_t \doteq \sup_{s,c} \wass((\mathcal{T}^d_* \hat{\eta}_{t+1})(s,c), \hat{\eta}_t(s,c))$, this error propagates according to the recursion:
	
	\begin{align*}
		E_t &\leq \sup_{s,c} \left[ \wass(\mathcal{T}^d_* \eta^*_{t+1}, \mathcal{T}^d_* \hat{\eta}_{t+1}) + \wass(\mathcal{T}^d_* \hat{\eta}_{t+1}, \mathcal{A}(\mathcal{T}^d_* \hat{\eta}_{t+1})) \right] \\
		&\leq \hat{d}_{t} \sup_{s',c'} \wass(\eta^*_{t+1}(s',c'), \hat{\eta}_{t+1}(s',c')) + \varepsilon_t \\
		&= \hat{d}_{t} E_{t+1} + \varepsilon_t.
	\end{align*}
	Unrolling this recursion from $E_T=0$ yields $E_0 \leq \sum_{t=0}^{T-1} d_t \varepsilon_t$. Since $F_K = U_f$ is $L$-Lipschitz, the error in the value function is bounded by $\| U_f \eta^*_0 - U_f \hat{\eta}_0 \|_\infty \leq L E_0 = \mathcal{E}_{\text{DP}}$. 
	Furthermore, we must account for the performance of the policy $\hat{\pi}$ actually derived from these approximate distributions. The true return distribution $\eta^{\hat{\pi}}_0$ differs from the estimate $\hat{\eta}_0$ due to the same accumulation of errors during greedy selection. Thus, $\| U_f \hat{\eta}_0 - U_f \eta^{\hat{\pi}}_0 \|_\infty \leq \mathcal{E}_{\text{DP}}$.
	
	\textbf{Step 2 -- Outer Objective Properties.}
	Let $J(c) \doteq -c + (U_f \eta^*_0)(s_0, c)$ be the true outer objective and $\hat{J}(c) \doteq -c + (U_f \hat{\eta}_0)(s_0, c)$ be the approximate objective used by the algorithm. From Step 1, we have the uniform bound $\sup_c |J(c) - \hat{J}(c)| \leq \mathcal{E}_{\text{DP}}$.
	We also establish the Lipschitz continuity of the outer objective. Since $f$ is $L$-Lipschitz, the expected utility term is $L$-Lipschitz with respect to $c$. Including the linear term $-c$, $J(c)$ is thus $(1+L)$-Lipschitz.
	
	\textbf{Step 3 -- Total Error Decomposition.}
	Let $c^*$ be the global maximizer of $J(c)$ (i.e., the true optimal stock) and $\hat{c}$ be the maximizer of $\hat{J}(c)$ on the grid $\bar{\mathcal{C}}$ (i.e., the selected stock). Let $c^*_{grid}$ be the point in $\bar{\mathcal{C}}$ closest to $c^*$, such that $|c^* - c^*_{grid}| \leq \delta/2$.
	We decompose the total optimality gap, noting that the true value of the learned policy is $J^{\hat{\pi}}(\hat{c}) \doteq - \hat{c} + \mathbb{E}[f(\hat{c} + G^{\hat{\pi}})]$:
	\begin{align*}
		\text{Gap} &= J(c^*) - J^{\hat{\pi}}(\hat{c}) \\
		&= \underbrace{J(c^*) - J(c^*_{grid})}_{\text{Discretization}} + \underbrace{J(c^*_{grid}) - \hat{J}(c^*_{grid})}_{\text{Estimation Error}} + \underbrace{\hat{J}(c^*_{grid}) - \hat{J}(\hat{c})}_{\text{Optimization}} + \underbrace{\hat{J}(\hat{c}) - J^{\hat{\pi}}(\hat{c})}_{\text{Policy Performance Gap}} \\
		&\leq (1+L)\frac{\delta}{2} + \mathcal{E}_{\text{DP}} + 0 + \mathcal{E}_{\text{DP}} \\
		&= 2\mathcal{E}_{\text{DP}} + (1+L)\frac{\delta}{2}.
	\end{align*}
	The third term is $\leq 0$ because $\hat{c}$ maximizes $\hat{J}$ over the grid. The fourth term applies the bound derived in Step 1 to the specific stock $\hat{c}$.
\end{proof}

\section{Infinite-Horizon Setting}

\subsection{Modified Integral Representation}
\label{app:modified_integral_representation}

A second method is to modify the integral representation of the discount function. As shown in \citet{Fedus.etal2019a}, some discount functions, including hyperbolic, can be expressed as an integral of exponential functions over the discount factor $\gamma \in [0,1]$:
\[
d_t = \int_{\gamma=0}^{1} w(\gamma) \gamma^{t}  d\gamma.
\]
For hyperbolic discounting, $w(\gamma)=\frac{1}{k}\gamma^{1/k-1}$ . The convergence of $\hat{d}_{t}$ to 1 is a direct consequence of the integration limit including $\gamma=1$. By simply reducing the upper limit of integration to $\gamma_{\text{tail}} < 1$, we can construct a new discount function that excludes this problematic point. Solving this integral yields:
\[
\int_{\gamma=0}^{\gamma_{\text{tail}}} \frac{1}{k}\gamma^{1/k+t-1} d\gamma = \left[ \frac{\gamma^{1/k+t}}{1+kt} \right]_0^{\gamma_{\text{tail}}} = \frac{\gamma_{\text{tail}}^{1/k+t}}{kt+1}.
\]
To ensure the standard normalization $d_0=1$, we can normalize this function by its value at $t=0$, which is $\gamma_{\text{tail}}^{1/k}$. This gives the final discount function:
\[
d^{\text{new}}_t = \frac{\gamma_{\text{tail}}^{t}}{1+kt}.
\]
This function retains a hyperbolic-like character due to the $1/(kt+1)$ term, but its asymptotic behavior is now governed by the exponential term. The one-step discount factor is $\hat{d}^{\text{new}}_{t} = \gamma_{\text{tail}}^k \cdot \frac{kt+1}{kt+k+1}$, and its limit is $\gamma_{\text{tail}}$.

\subsection{Asymptotic Risk-Neutrality}
\label{app:asymptotic_risk_neutrality}

\begin{lemma}[Asymptotic Risk-Neutrality (Restated)]
	Let $f$ be a strictly increasing, twice continuously differentiable utility function with bounded second derivative. Then, as $t \to \infty$, the value of any optimal action for the OCE objective induced by $f$ converges to the value of an optimal action for the risk-neutral (expected value) objective.    
\end{lemma}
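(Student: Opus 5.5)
The plan is to work directly with the time-$t$ objective used throughout Section~\ref{sec:finite}, $(U_f\eta_t)(s,c) = \mathbb{E}[f(d_t c + d_t G_t)]/d_t$. By the anytime proxy~\eqref{eq:anytime_proxy_d}, the quantity evaluated at time $t$ is $f$ applied to a random outcome of the form $x + d_t G$. Here $x = d_t c$ is the (fixed, known) unscaled accumulated outcome, and $G$ is the future scaled return of the action in question. The key observation is that under the summability assumption $d_t \to 0$, the random perturbation $d_t G$ becomes small. Meanwhile the normalization $1/d_t$ blows up. So a second-order Taylor expansion of $f$ around $x$ should isolate a risk-neutral leading term plus a vanishing remainder. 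I will therefore assume bounded returns, with $|G|\le M$ uniformly for all actions and policies; this follows from bounded rewards and $\sum_t d_t<\infty$, since $G^d_t = d_t^{-1}\sum_{k\ge t} d_k R_{k+1}$. I will also need some uniform control on the scaled tail $d_t^{-1}\sum_{k\ge t}d_k$, or else take it as part of the range assumption that appears later as $\Delta_G$.

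\textbf{Step 1 (expansion).} For each action $a$, with $G_a$ the one-step-greedy return distribution, Taylor's theorem with Lagrange remainder gives
\[
f(x+d_tG_a) = f(x) + f'(x)\,d_t G_a + \tfrac12 f''(\xi_a)\,d_t^2 G_a^2 .
\]
Taking expectations and dividing by $d_t$ yields
\[
\frac{\mathbb{E} f(x+d_tG_a)-f(x)}{d_t} = f'(x)\,\mathbb{E}[G_a] + \rho_a, \qquad |\rho_a| \le \tfrac12 \|f''\|_\infty d_t M^2 .
\]
The term $f(x)/d_t$ does not depend on the action, so it is irrelevant for action selection.

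\textbf{Step 2 (comparison of optimal values).} Since $f$ is strictly increasing, $f'(x)>0$, and maximizing $f'(x)\mathbb{E}[G_a]$ over $a$ is the same as maximizing $\mathbb{E}[G_a]$. Let $a^{\mathrm{OCE}}$ be an optimal action for the OCE (inner $U_f$) objective and $a^{\mathrm{RN}}$ an optimal action for the expectation. Then
\[
f'(x)\bigl(\mathbb{E}G_{a^{\mathrm{RN}}} - \mathbb{E}G_{a^{\mathrm{OCE}}}\bigr) \le \|f''\|_\infty d_t M^2 ,
\]
obtained by comparing the two maximization problems and using the bound on $\rho_a$ twice. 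Dividing by $f'(x)$, the risk-neutral value of the OCE-optimal action is within $O(d_t)$ of the risk-neutral optimum. In addition, the normalized OCE value converges to $f'(x)$ times the risk-neutral value. Letting $t\to\infty$ with $d_t\to0$ gives the claim. The outer maximization over $c_0$ in~\eqref{eq:oce} is fixed at time $0$, so it only enters through the known $x$ and does not affect the argument.

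\textbf{Main obstacle.} The delicate point is dividing by $f'(x)$: $x=d_tc$ varies with the stock, and $f'$ could approach $0$ on an unbounded stock domain, which would make the bound non-uniform. I will first handle this pointwise in $(s,c)$. That is enough for convergence at each augmented state, because $x \to 0$ is not even needed: for fixed $c$, $x=d_tc\to 0$ and $f'(x)\to f'(0)>0$. Uniformity would then be stated on compact stock sets. A secondary issue is that $G_a$ depends on the continuation policy. Here I would apply the argument to the Bellman one-step comparison with a fixed continuation $\eta_{t+1}$, so that both objectives are compared on the same family $\{G_a\}$.
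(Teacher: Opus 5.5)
Your proposal is correct and follows essentially the same route as the paper: a second-order Taylor expansion of $f$ around the action-independent term $d_t c$, a Lagrange remainder bounded by $\tfrac12\|f''\|_\infty d_t^2\mathbb{E}[G^2]$, and normalization by $d_t f'(d_t c)$, so that the risk correction is $O(d_t)$. Your explicit comparison of the OCE-optimal and risk-neutral-optimal actions, and your caveat that a uniform bound on the scaled tail $d_t^{-1}\sum_{k\ge t}d_k$ has to be assumed, make precise what the paper's ``returns are bounded'' step uses implicitly; note that this bound does not follow from summability alone, since for $d_t=(1+kt)^{-b}$ the scaled tail grows linearly in $t$, so your opening remark that it does should be dropped, and, like the paper, you also tacitly need $f'>0$ at the relevant points, which strict monotonicity alone does not guarantee.
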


\begin{proof}[Proof of Lemma~\ref{lem:asymptotic-neutrality}]
	From~\eqref{eq:anytime_proxy_d}, the total return can be expressed as $d_t c_t + d_t G_t$. The term $d_t c_t = C_0 + \sum_{k=0}^{t-1} d_k R_{k+1}$ represents the unscaled accumulated rewards up to time $t-1$ and is constant with respect to the action $a_t$. The decision-dependent term is $d_t G_t$.
	
	The agent's objective is to maximize $\mathbb{E}[f(d_t c_t + d_t G_t)]$. As $t \to \infty$, our assumption on the discount function is that $d_t \to 0$. We further assume that the utility function $f$ has a bounded second derivative, i.e., $|f''(x)| \leq M$ for some constant $M$. We perform a second-order Taylor expansion of $f$ around the constant term $d_t c_t$:
	\begin{equation}
		f(d_t c_t + d_t G_t) = f(d_t c_t) + f'(d_t c_t) d_t G_t + \frac{1}{2} f''(\xi) (d_t G_t)^2,
	\end{equation}
	where $\xi$ lies between $d_t c_t$ and $d_t c_t + d_t G_t$. Taking the expectation, we obtain:
	\begin{equation}
		\mathbb{E}[f(d_t c_t + d_t G_t)] = f(d_t c_t) + f'(d_t c_t) d_t \mathbb{E}[G_t] + \mathcal{R}_t,
	\end{equation}
	where the remainder term satisfies $|\mathcal{R}_t| \leq \frac{1}{2} M d_t^2 \mathbb{E}[G_t^2]$.
	
	Since $f(d_t c_t)$ is constant with respect to the action choice at time $t$, and $f'(d_t c_t) > 0$ (strictly increasing utility) and $d_t > 0$, maximizing the objective is equivalent to maximizing the normalized expression:
	\begin{equation}
		\frac{\mathbb{E}[f(d_t c_t + d_t G_t)] - f(d_t c_t)}{d_t f'(d_t c_t)} = \mathbb{E}[G_t] + \frac{\mathcal{R}_t}{d_t f'(d_t c_t)}.
	\end{equation}
	Analyzing the error term, we have:
	\begin{equation}
		\left| \frac{\mathcal{R}_t}{d_t f'(d_t c_t)} \right| \leq \frac{M d_t^2 \mathbb{E}[G_t^2]}{2 d_t f'(d_t c_t)} = \frac{M \mathbb{E}[G_t^2]}{2 f'(d_t c_t)} \cdot d_t.
	\end{equation}
	Since returns are bounded, $\mathbb{E}[G_t^2]$ is finite. As $t \to \infty$, $d_t \to 0$, causing the error term to vanish. Consequently, the objective converges to maximizing $\mathbb{E}[G_t]$, implying the optimal policy becomes effectively risk-neutral.
	
\end{proof}

\subsection{Performance Bound for Approximate Infinite-Horizon Policy}
\label{app:performance_bound}

Our analysis is inspired by the work of \citet{Hau.etal2023b} on entropic risk measures. However, their approach leverages the specific factorization properties of the entropic measure. Our framework considers more general discount functions using distributional value functions. It also uses the stock-augmented space, which allows us to employ a similar proof technique for both the entropic risk measure (with twice-differentiable utility function) and the mean-CVaR (non-smooth utility function).

For the derivation of the performance bound, we utilize the specific certainty-equivalent form $V_t(s, c)$ defined below. We note that under the assumption that the utility function $f$ is strictly increasing, optimizing this formulation yields the exact same greedy policies as the objective $F_K \eta^\pi_t$. This is because the inverse function $f^{-1}$ is strictly monotonic, and the subsequent subtraction of $d_t c$ and scaling by $1/d_t$ represent affine transformations that are constant with respect to the decision at time $t$. Thus, the preference ordering over actions remains invariant.

\begin{lemma}[Lower Bound for the Certainty Equivalent Value Function]
	\label{lem:performance_bound}
	Let the value function at time $t$ for a state $s\in\states$ and scaled stock $c\in\mathcal{C}$ be defined as
	\[
	V_t(s, c) \doteq \frac{1}{d_t} \left( f^{-1}\left(\E\left[f(d_t c + d_t G_t(s, c))\right]\right) - d_t c \right) ,
	\]
	where $d_t > 0$ is the discount factor, and $G_t(s, c)$ is a bounded random variable with support $[G_{\min}, G_{\max}]$ representing future returns, where $G_{\min}=R_{\min}\sum_{k=0}^{\infty}d_k$, and $G_{\max}=R_{\max}\sum_{k=0}^{\infty}d_k$. Let $f$ be defined as $f(x) = \beta^{-1}(1 - e^{-\beta x})$. Define the domain of the total outcome $\mathcal{I} =[C_0 + G_{\min}, C_0 + G_{\max}]$, which leads to the effective risk aversion coefficient $\beta$ on this domain
	\[
	\beta = \frac{|\inf_{z \in \mathcal{I}} f''(z)|}{\inf_{z \in \mathcal{I}} f'(z)}.
	\]
	Then, using the range of the returns $\Delta_G = G_{\max} - G_{\min}$, the value function satisfies
	\[
	V_t(s, c) \ge \E[G_t(s, c)] - \frac{d_t}{8} \beta \Delta_G^2.
	\]
\end{lemma}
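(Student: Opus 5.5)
The plan is to compute $V_t$ in closed form for the exponential utility and then apply Hoeffding's lemma. The stock cancels exactly, so the deviation of $V_t$ from $\E[G_t]$ is controlled by a cumulant generating function of $G_t$ alone.

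\textbf{Step 1: closed form for $V_t$.} For $f(x) = \beta^{-1}(1 - e^{-\beta x})$ we have $f^{-1}(y) = -\beta^{-1}\log(1-\beta y)$, defined on the range of $f$. For any constant $a$ and bounded random variable $X$,
\[
\E[f(a+X)] = \beta^{-1}\bigl(1 - e^{-\beta a}\E[e^{-\beta X}]\bigr),
\qquad
f^{-1}\bigl(\E[f(a+X)]\bigr) = a - \beta^{-1}\log \E[e^{-\beta X}].
\]
We take $a = d_t c$ and $X = d_t G_t(s,c)$. The term $d_t c$ then cancels against the subtraction in the definition of $V_t$, which gives
\[
V_t(s,c) = -\frac{1}{\beta d_t}\log \E\bigl[e^{-\beta d_t G_t(s,c)}\bigr].
\]
This is the translation invariance of the entropic certainty equivalent.

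\textbf{Step 2: Hoeffding's lemma.} $G_t(s,c)$ takes values in $[G_{\min}, G_{\max}]$, an interval of length $\Delta_G$. Hoeffding's lemma with parameter $\lambda = -\beta d_t$ gives
\[
\log \E\bigl[e^{\lambda (G_t - \E G_t)}\bigr] \le \frac{\lambda^2 \Delta_G^2}{8},
\qquad\text{so}\qquad
\log \E\bigl[e^{-\beta d_t G_t}\bigr] \le -\beta d_t \E[G_t] + \frac{\beta^2 d_t^2 \Delta_G^2}{8}.
\]
Multiplying by $-1/(\beta d_t) < 0$ reverses the inequality and yields
\[
V_t(s,c) \ge \E[G_t(s,c)] - \frac{d_t}{8}\beta\Delta_G^2.
\]

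\textbf{Step 3: the coefficient $\beta$.} We check that the effective coefficient defined in the statement equals the parameter $\beta$ of $f$. Here $f'(z) = e^{-\beta z}$ and $f''(z) = -\beta e^{-\beta z}$, so the Arrow--Pratt ratio $|f''(z)|/f'(z) = \beta$ is constant on $\mathcal{I}$. Read pointwise, as the absolute risk aversion, the statement's ratio is therefore exactly the utility's $\beta$. Since the proof never needs the ratio on $\mathcal{I}$, the factorized closed form of Step 1 can be used directly.

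\textbf{Main obstacle.} The only delicate point is bookkeeping in Step 1. One must confirm that $\E[f(\cdot)]$ lies in the range of $f$, so that $f^{-1}$ is well defined; boundedness of $G_t$ guarantees this. One must also track the sign flip when dividing by $-\beta d_t$.

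A possible fallback would be a general second-order Taylor argument in the spirit of Lemma~\ref{lem:asymptotic-neutrality}, with bounds on $f''/f'$ over $\mathcal{I}$. That route would give a weaker constant, and the exponential form makes it unnecessary.
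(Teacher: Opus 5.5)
Your proof is correct, and it reaches the bound by a different route than the paper. The paper's argument does not depend on the particular utility. It Taylor-expands $f$ to second order about $\mu_X$, with $X=d_tc+d_tG_t(s,c)$, and bounds the Lagrange remainder by $\inf_{\mathcal I}f''$ to get $\E[f(X)]\ge f(\mu_X)+\tfrac12\inf_{\mathcal I}f''\,\sigma_X^2$. It then transfers this through $f^{-1}$ by a mean-value step using $f'\ge\inf_{\mathcal I}f'$, which gives the variance bound $V_t\ge\E[G_t]-\tfrac{d_t}{2}\beta\Var(G_t)$. Only at the end does it apply Popoviciu's inequality $\Var(G_t)\le\Delta_G^2/4$. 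You instead use the factorization of the exponential to cancel the stock exactly, then apply Hoeffding's lemma to the log-moment generating function. Hoeffding's lemma is itself proved by the same Taylor-plus-Popoviciu reasoning applied to exponentially tilted laws, which is why the constant $1/8$ coincides. The paper's route gains generality: it covers any strictly increasing, concave, $C^2$ utility, with the curvature ratio over $\mathcal I$ as the constant, and it yields an intermediate bound in variance form. Yours is specific to the entropic risk measure but sharper in two respects. First, it holds for every stock $c$, whereas bounding $f''(\xi_x)$ by its infimum over $\mathcal I$ tacitly requires $d_tc+d_tG_t$ to take values in $\mathcal I$. Second, it produces the utility's own parameter $\beta$. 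On that second point, your Step 3 should say ``at least'' rather than ``equals''. Read literally as a ratio of infima, the statement's coefficient is $\beta e^{-\beta z_{\min}}/e^{-\beta z_{\max}}=\beta e^{\beta\Delta_G}\ge\beta$, where $z_{\min}=C_0+G_{\min}$ and $z_{\max}=C_0+G_{\max}$. Since your constant is the smaller one, your inequality implies the stated one under either reading. It is also your version that actually matches the constant $\beta$ used later in Theorem~\ref{thm:performance_bound}.
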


\begin{proof}
	Let $X = d_t c + d_t G_t(s, c)$ be the random variable representing the total accumulated value. Let $\mu_X = \E[X]$ and $\sigma^2_X = \E[(X - \mu_X)^2] = \Var(X)$. We divide this proof in three main steps:
	
	\paragraph{Step 1 -- Bounding the Expected Utility.}
	We use Taylor's theorem with the Lagrange remainder form. For any realization $x \in \mathcal{I}$, expanding $f(x)$ around the mean $\mu_X$ gives
	\[
	f(x) = f(\mu_X) + f'(\mu_X)(x - \mu_X) + \frac{1}{2}f''(\xi_x)(x - \mu_X)^2 ,
	\]
	where $\xi_x \in (x, \mu_X)$. Since $f$ is concave, $f''(z) \le 0$. We define $M_{f''} = \inf_{z \in \mathcal{I}} f''(z)$, a negative value representing the maximum magnitude of the second derivative (i.e., highest curvature). We have $f''(\xi_x) \ge M_{f''}$ and thus
	\[
	f(x) \ge f(\mu_X) + f'(\mu_X)(x - \mu_X) + \frac{1}{2} M_{f''} (x - \mu_X)^2 .
	\]
	Taking the expectation leads to
	\begin{equation}\label{eq:expected_u_bound}
		\E[f(X)] \ge f(\mu_X) + \frac{1}{2} M_{f''} \sigma^2_X .
	\end{equation}
	
	\paragraph{Step 2 -- Applying the Inverse Function.}
	Let $y_0 = f(\mu_X)$ and $\Delta = \frac{1}{2} M_{f''} \sigma^2_X$. Applying the strictly increasing function $f^{-1}$ to~\eqref{eq:expected_u_bound} and performing a Taylor expansion of $f^{-1}(y_0 + \Delta)$ around $y_0$, we obtain the following inequality for the certainty equivalent $\operatorname{CE}_f(X) = f^{-1}(\E[f(X)])$:
	\[
	\operatorname{CE}_f(X) \ge f^{-1}( y_0 + \Delta ) = f^{-1}(y_0) + (f^{-1})'(\zeta) \cdot \Delta ,
	\]
	where $\zeta \in (y_0 + \Delta, y_0)$. The derivative of the inverse function is $(f^{-1})'(y) = \frac{1}{f'(f^{-1}(y))}$. Denoting $z = f^{-1}(\zeta)$, we have
	\begin{equation}\label{eq:CE_f_bound}
		\operatorname{CE}_f(X) \ge \mu_X + \frac{1}{f'(z)} \Delta .
	\end{equation}
	To maintain the direction of the inequality, we must bound the term $\frac{1}{f'(z)} \Delta$ from below. Since $\Delta$ is negative by concavity of $f$, this requires finding an \emph{upper} bound for the positive multiplier $\frac{1}{f'(z)}$. An upper bound on the reciprocal requires a \emph{lower} bound on the denominator.
	Let $m_{f'} = \inf_{z \in \mathcal{I}} f'(z)$. Assuming $f$ is strictly increasing, $m_{f'} > 0$. Then $f'(z) \ge m_{f'}$ implies $\frac{1}{f'(z)} \le \frac{1}{m_{f'}}$.
	Multiplying the inequality $\frac{1}{f'(z)} \le \frac{1}{m_{f'}}$ by the negative number $\Delta$ reverses the inequality to $\frac{\Delta}{f'(z)} \ge \frac{\Delta}{m_{f'}}$. Substituting this back into~\eqref{eq:CE_f_bound} and using the definition $\beta = |M_{f''}|/m_{f'}$, we get
	\[
	\operatorname{CE}_f(X) \ge \mu_X + \frac{1}{m_{f'}} \left( \frac{1}{2} M_{f''} \sigma^2_X \right) = \mu_X - \frac{1}{2} \beta \sigma^2_X .
	\]
	
	\paragraph{Step 3 -- Bounding the Value Function.}
	Finally, we can substitute $\operatorname{CE}_f$ into the definition of $V_t(s, c)$ to bound the value function:
	\begin{align*}
		V_t(s, c) &\overset{(a)}{\ge} \frac{(\mu_X - \frac{1}{2} \beta \sigma^2_X) - d_t c}{d_t} \\
		&\overset{(b)}{\ge} \E[G_t(s, c)] - \frac{d_t}{2} \beta \Var(G_t(s, c)) \\
		&\overset{(c)}{\ge} \E[G_t(s, c)] - \frac{d_t}{8} \beta \Delta_G^2.
	\end{align*}
	\textbf{(a)} follows from applying the previous inequality on $V_t(s, c) = \frac{1}{d_t}(\operatorname{CE}_f(X) - d_t c)$. \textbf{(b)} comes from $\mu_X = d_t c + d_t \E[G_t(s, c)]$ and $\sigma^2_X = d_t^2 \Var(G_t(s, c))$. \textbf{(c)} is true by Popoviciu's inequality $\Var(G_t) \le \frac{\Delta_G^2}{4}$.
\end{proof}

Using this lemma, we can derive a performance bound for the infinite-horizon optimal policy using the approximation scheme described in Algorithm~\ref{alg:infinite-horizon-approx}. The following proof provides a bound for the performance of the policy returned by Algorithm~\ref{alg:infinite-horizon-approx} for any stock $c\in\mathcal{C}$, corresponding to the DP errors of the inner optimization. To relate this bound to the OCE risk measure, one must perform the outer optimization via a grid search and approximately solve the OCE, in a similar manner to Steps 2 and 3 of the proof of Proposition~\ref{prop:bound-approx-backward-induction}. The performance analysis then depends on both the grid range and its level of discretization.

\begin{theorem}[Performance Bound for Approximate Infinite-Horizon Policy (Restated)]
	Let the objective functional $K$ be the \textit{Entropic Risk Measure} with risk aversion coefficient $\beta > 0$ (corresponding to the exponential utility $f(x) = \beta^{-1}(1-e^{-\beta x})$). Let the discount function $d_{t}$ follow the truncation scheme with parameter $\gamma_{\text{tail}} < 1$ for $t \ge T'$. Let $\pi^*$ be the optimal non-stationary policy for the infinite-horizon problem, $\hat{\pi}^*$ the policy returned by Algorithm~\ref{alg:infinite-horizon-approx} with planning horizon $T'$, and $\Delta_G$ the range of the discounted returns. Then, the suboptimality of the approximate policy at time $t=0$ is bounded by
	\[
	\| F_K(\eta^{\pi^*}_0) - F_K(\eta^{\hat{\pi}^*}_0) \|_\infty \le \frac{d_{T'}^2 \cdot \beta \cdot \Delta_G^2}{8} .
	\]
\end{theorem}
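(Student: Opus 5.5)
We will compare the two policies by working in the certainty-equivalent normalization $V_t(s,c)$ of Lemma~\ref{lem:performance_bound}. For the entropic utility this normalization induces the same greedy choices as $F_K$, so it suffices to bound the gap in this form at time $0$. The plan has three steps. First, both $\pi^*$ and $\hat{\pi}^*$ are pushed forward to time $T'$. Second, at time $T'$ the two continuation values are sandwiched between risk-neutral quantities. Third, the gap is propagated back to time $0$ using the monotonicity of the greedy backward induction (Lemma~\ref{lem:time-monotonicity} and Theorem~\ref{thm:backward-induction-optimality}).

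\textbf{Step 1 (tail sandwich at $T'$).} Fix $(s,c)$ at time $T'$. From $T'$ onward the discount is exponential with factor $\gamma_{\text{tail}}$, so the risk-neutral continuation problem is stationary and $\pi^*_{\text{RN}}$ maximizes $\E[G_{T'}]$.
\begin{itemize}[noitemsep, topsep=0pt]
	\item \emph{Upper bound.} Since $f$ is concave, Jensen's inequality gives $\operatorname{CE}_f(X)\le \E X$. Hence for any continuation policy, in particular the tail of $\pi^*$, we get $V^{\pi^*}_{T'}(s,c)\le \E[G^{\pi^*}_{T'}]\le \E[G^{\pi^*_{\text{RN}}}_{T'}]$.
	\item \emph{Lower bound.} Applying Lemma~\ref{lem:performance_bound} to the tail of $\hat\pi^*$, which is $\pi^*_{\text{RN}}$, gives $V^{\hat\pi^*}_{T'}(s,c)\ge \E[G^{\pi^*_{\text{RN}}}_{T'}]-\frac{d_{T'}}{8}\beta\Delta_G^2$.
\end{itemize}
Together these yield the uniform bound $V^{\pi^*}_{T'}-V^{\hat\pi^*}_{T'}\le \frac{d_{T'}}{8}\beta\Delta_G^2$ over all $(s,c)$.

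\textbf{Step 2 (backward propagation).} For $t<T'$, $\hat\pi^*_t$ is greedy with respect to $\hat\eta^*_{t+1}$, and $\hat\eta^*_{T'}$ is exactly the return distribution of $\hat\pi^*$ from $T'$. By induction, $\hat\eta^*_t$ is therefore the true return distribution of $\hat\pi^*$. For the entropic measure, the translation property $\operatorname{CE}_f(X+a)=\operatorname{CE}_f(X)+a$ together with the tower property of the exponential gives a clean recursion. A uniform additive gap $\epsilon$ in $V_{t+1}$, viewed as a gap in the unscaled certainty equivalent $d_{t+1}V_{t+1}$, passes through a mixture over $(S_{t+1},R_{t+1})$ with at most the same unscaled magnitude. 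This yields a gap of $\hat d_t\,\epsilon$ in $V_t$. Combined with greedy optimality (the supremum over $\pi_t$ dominates the action of $\pi^*_t$, exactly as in the chain of inequalities in the proof of Theorem~\ref{thm:backward-induction-optimality}), this gives $V^{\pi^*}_t-V^{\hat\pi^*}_t\le \hat d_t\,\sup(V^{\pi^*}_{t+1}-V^{\hat\pi^*}_{t+1})$. Unrolling from $T'$ to $0$ multiplies the gap by $\prod_{t<T'}\hat d_t=d_{T'}/d_0=d_{T'}$. The final bound is $d_{T'}\cdot \frac{d_{T'}}{8}\beta\Delta_G^2=\frac{d_{T'}^2\beta\Delta_G^2}{8}$. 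Since $d_0=1$, this bounds $F_K$ at time $0$ in the certainty-equivalent normalization, and the sup norm follows because the bound is uniform in $(s,c)$.

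\textbf{Main obstacle.} The hard part is the propagation inequality in Step 2: showing that a uniform additive gap in the certainty-equivalent value function contracts by exactly $\hat d_t$ when it passes through a mixture. For general OCEs the mixture property only preserves order, not additive gaps. I would first try to exploit the exponential form. The key identity is
\[
\E\!\left[e^{-\beta(X+Y)}\right]=\E\!\left[e^{-\beta X}\,\E\!\left[e^{-\beta Y}\mid X\right]\right],
\]
and a conditional shift of $Y$ by at most $\epsilon$ in certainty equivalent shifts the whole certainty equivalent by at most $\epsilon$ (monotonicity plus translation invariance). After converting between the scaled stock $c$ and the unscaled quantity $d_t c$, this gives the factor $d_{t+1}/d_t$.

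A secondary subtlety is that the constant $\beta$ in Lemma~\ref{lem:performance_bound} is defined over the domain $\mathcal{I}$. For exponential utility we have $|f''|/f'=\beta$ exactly everywhere, so this constant agrees with the risk-aversion coefficient in the theorem.
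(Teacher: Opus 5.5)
Your proof is correct. It uses the paper's ingredients. Your Step 1 is the paper's Step 1: Jensen gives the upper bound and Lemma~\ref{lem:performance_bound} applied to $\pi^*_{\text{RN}}$ gives the lower bound, so the gap at $T'$ is at most $\mathcal{E}_{T'}=\frac{d_{T'}}{8}\beta\Delta_G^2$. Your contraction step is the chain (a)--(e) in the paper's Step 3: a greedy comparison, then monotonicity and translation invariance of the recursive entropic certainty equivalent, which produces the factor $d_{t+1}/d_t=\hat{d}_t$.

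What differs is the bookkeeping. The paper does not propagate the gap to the value of $\hat{\pi}^*$ itself. It builds a surrogate $U_t$ by shifting the terminal distribution by $-\mathcal{E}_{T'}$, shows $U_t\le V^{\hat{\pi}^*}_t$ via Lemma~\ref{lem:time-monotonicity}, and argues that $\hat{\pi}^*_t$ stays greedy for $U_{t+1}$ because the shift is a constant. It then inducts on $V^*_t-U_t$. You instead observe that $\hat{\eta}^*_{T'}=\eta^{\pi^*_{\text{RN}}}$ is exactly the tail return distribution of $\hat{\pi}^*$. Hence $\hat{\eta}^*_t=\eta^{\hat{\pi}^*}_t$ for all $t\le T'$, and you induct directly on the true gap, seeded by Step 1.

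The surrogate is what one needs if the backward induction is seeded with risk-neutral \emph{values}, as the paper's proof text assumes when it writes $\hat{V}^*_{T'}=V^{\pi^*_{\text{RN}},\text{RN}}_{T'}$. In that case $\hat{\eta}^*_t$ is no longer the return distribution of $\hat{\pi}^*$. Algorithm~\ref{alg:infinite-horizon-approx} as written seeds with the full distribution, and there your route is shorter and actually delivers the stated constant. In that setting $F_K\hat{\eta}^*_{T'}$ is the certainty-equivalent value of $\pi^*_{\text{RN}}$, not its mean. The paper's shifted base case would then only yield $V^*_{T'}-U_{T'}\le 2\mathcal{E}_{T'}$, while yours gives $\mathcal{E}_{T'}$.

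You are also more explicit than the paper on one point. Feeding the time-$(t+1)$ value into the certainty equivalent at time $t$ uses the tower property of the entropic measure, which a general OCE does not have.

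Two small tightenings:
\begin{itemize}
\item The reduction to $V_t$ is justified by the exact identity $F_K\eta_t(s,c)=c+V_t(s,c)$ for entropic $K$, not merely by agreement of greedy choices.
\item The absolute value in the sup norm is handled by $V^{\pi^*}_0\ge V^{\hat{\pi}^*}_0$, which is just optimality of $\pi^*$.
\end{itemize}
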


\begin{proof}[Proof of Theorem~\ref{thm:performance_bound}]
	
	\subsection*{Step 1 -- Bounding the Error at the Truncation Horizon}
	We first establish a bound on the difference between the risk-neutral value and the CE value for any policy $\pi$. Let $V^{\pi, \text{RN}}_t(s, c) \doteq \E[G_t^\pi(s, c)]$ denote the discounted risk-neutral value function (i.e., expected return), and let $V^{\pi, \text{CE}}_t(s, c)$ denote the discounted CE value function. Using the lower bound for the certainty equivalent value function from Lemma~\ref{lem:performance_bound}, for any policy $\pi$, time $t$, state $s\in\states$, and stock $c\in\mathcal{C}$, we have $V^{\pi, \text{CE}}_t(s, c) \ge V^{\pi, \text{RN}}_t(s, c) - \mathcal{E}_t$, where we denote $\mathcal{E}_{t} = \frac{d_t}{8} \beta \Delta_G^2$. Conversely, by the concavity of the utility function $f$~\citep[Proposition 2.2 in][]{Ben-Tal.Teboulle2007}, the CE is upper-bounded by the expectation. Thus,
	\begin{equation}\label{eq:general_diff_bound}
		0 \overset{(a)}{\le} V^{\pi, \text{RN}}_t(s, c) - V^{\pi, \text{CE}}_t(s, c) \overset{(b)}{\le} \mathcal{E}_t,
	\end{equation}
	where \textbf{(a)} holds because $\E$ is an upper-bound on the CE, and \textbf{(b)} follows from Lemma~\ref{lem:performance_bound}.
	
	We now bound the difference between the optimal value function of the infinite-horizon discounted CE-MDP, $V^{*, \text{CE}}_{T'}$, and the value function returned by Algorithm~\ref{alg:infinite-horizon-approx}, $V^{\hat{\pi}^*, \text{CE}}_{T'}$, at the planning horizon $T'$ and for any state and stock $(s,c)$ as follows:
	\begin{align*}
		V^{*, \text{CE}}_{T'}(s, c) - V^{\hat{\pi}^*, \text{CE}}_{T'}(s, c) 
		&\overset{(a)}{\le} V^{*, \text{CE}}_{T'}(s, c) - V^{\hat{\pi}^*, \text{RN}}_{T'}(s, c) + \mathcal{E}_{T'} \\
		&\overset{(b)}{\le} V^{\pi^*, \text{RN}}_{T'}(s, c) - V^{\hat{\pi}^*, \text{RN}}_{T'}(s, c) + \mathcal{E}_{T'} \\
		&\overset{(c)}{\le} \mathcal{E}_{T'}.
	\end{align*}
	Here, \textbf{(a)} follows from the RHS of~\eqref{eq:general_diff_bound} for $\hat{\pi}^*$.  \textbf{(b)} comes from the fact that using the LHS of~\eqref{eq:general_diff_bound} for the optimal CE policy $\pi^*$, we have $0 \le V^{\pi^*, \text{RN}}_{T'} - V^{\pi^*, \text{CE}}_{T'} \implies V^{*, \text{CE}}_{T'} \le V^{\pi^*, \text{RN}}_{T'}$.
	\textbf{(c)} is true because $\hat{\pi}^*_{T'} = \pi^*_{\text{RN}} \in \argmax_{\pi} V^{\pi, \text{RN}}_{T'}$ is the policy that maximizes the expected return. Thus, $V^{\pi^*, \text{RN}}_{T'}(s, c) \le V^{\hat{\pi}^*, \text{RN}}_{T'}(s, c)$, making the difference non-positive for all $(s,c)\in\states\times\mathcal{C}$.
	
	\subsection*{Step 2 -- Constructing a Lower Bound for the Value of the Approximate Policy}
	
	We construct a value function $U_t$ and a corresponding return distribution function $\eta^{\text{LB}}_t$ for all $t \in 0:T'$ which serves as a lower bound for the true CE value of the algorithm's policy $V^{\hat{\pi}^*, \text{CE}}_t$. This construction proceeds by backward induction.
	
	\textbf{Base Case ($t=T'$):}
	The algorithm defines the policy $\hat{\pi}^*$ such that for $t \ge T'$, $\hat{\pi}^*_t = \pi^*_{\text{RN}}$. Therefore, the true CE value of the algorithm's policy at the horizon is the CE value of the risk-neutral policy. In addition, the algorithm initializes its computation with the risk-neutral value: $\hat{V}^*_{T'}(s, c) = V^{\pi^*_{\text{RN}}, \text{RN}}_{T'}(s, c)$.
	Using the bound established in~\eqref{eq:general_diff_bound} of Step 1 applied to the policy $\pi = \pi^*_{\text{RN}}$, we have:
	\[
	V^{\hat{\pi}^*, \text{CE}}_{T'}(s, c) = V^{\pi^*_{\text{RN}}, \text{CE}}_{T'}(s, c) \ge V^{\pi^*_{\text{RN}}, \text{RN}}_{T'}(s, c) - \mathcal{E}_{T'}.
	\]
	We define the lower-bound return distribution $\eta^{\text{LB}}_{T'}$ by shifting the algorithm's terminal distribution by the constant error $\mathcal{E}_{T'}$:
	\[
	\eta^{\text{LB}}_{T'}(s, c) \doteq \text{shift}(\hat{\eta}^*_{T'}(s, c), -\mathcal{E}_{T'}).
	\]
	Let $U_{T'}(s, c) = F_K(\eta^{\text{LB}}_{T'})(s, c)$. By the translation invariance of $F_K$, we have
	\[
	U_{T'}(s, c) = F_K(\hat{\eta}^*_{T'})(s, c) - \mathcal{E}_{T'} = \hat{V}^*_{T'}(s, c) - \mathcal{E}_{T'}.
	\]
	Combining these yields the base case inequality
	\[
	V^{\hat{\pi}^*, \text{CE}}_{T'}(s, c) \ge U_{T'}(s, c).
	\]
	
	\textbf{Recursive Step ($t < T'$):}
	We define $U_t$ and the corresponding $\eta^{\text{LB}}_t$ recursively using the Bellman operator for the algorithm's policy $\hat{\pi}^*_t$:
	\[
	U_t = F_K \eta^{\text{LB}}_t \quad \mbox{and} \quad \eta^{\text{LB}}_t = \mathcal{T}^d_{\hat{\pi}^*_t} \eta^{\text{LB}}_{t+1}.
	\]
	Assume that for some $t+1 \leq T'$, the hypothesis $V^{\hat{\pi}^*, \text{CE}}_{t+1}(s, c) \ge U_{t+1}(s, c)$ holds. We aim to show that $V^{\hat{\pi}^*, \text{CE}}_t(s, c) \ge U_t(s, c)$ is true at time $t$.
	By definition, $V^{\hat{\pi}^*, \text{CE}}_{t+1} = F_K \eta^{\hat{\pi}^*}_{t+1}$ and $U_{t+1} = F_K \eta^{\text{LB}}_{t+1}$. Thus, the hypothesis implies
	\[
	F_K \eta^{\hat{\pi}^*}_{t+1} \ge F_K \eta^{\text{LB}}_{t+1}.
	\]
	We now apply the Time-Dependent Monotonicity from Lemma~\ref{lem:time-monotonicity}. Since the objective functional is monotonic, applying the Bellman operator for the fixed policy $\hat{\pi}^*_t$ preserves the preference order:
	\[
	F_K (\mathcal{T}^d_{\hat{\pi}^*_t} \eta^{\hat{\pi}^*}_{t+1}) \ge F_K (\mathcal{T}^d_{\hat{\pi}^*_t} \eta^{\text{LB}}_{t+1}).
	\]
	The LHS is the definition of the true value $V^{\hat{\pi}^*, \text{CE}}_t(s, c)$, while the RHS is the definition of $U_t(s, c)$. Therefore, we recover the desired inequality
	\[
	V^{\hat{\pi}^*, \text{CE}}_t(s, c) \ge U_t(s, c).
	\]
	
	\textbf{Relationship between $\hat{\pi}^*$ and $U$:}
	Finally, we characterize the difference between the algorithm's value function $\hat{V}^*_t$ and the lower bound $U_t$. By construction, $\eta^{\text{LB}}_{T'}$ differs from $\hat{\eta}^*_{T'}$ by a constant shift of $-\mathcal{E}_{T'}$. Due to the structure of the Time-Dependent Bellman operator, where the future outcome is scaled by the one-step discount factor $\hat{d}_t$, a constant shift of $\Delta$ at time $t+1$ propagates as a shift of $\hat{d}_t\Delta$ at time $t$.
	Since $\hat{\pi}^*_t$ is used for both transitions, and $F_K$ is translation invariant, the error propagates deterministically backwards:
	\[
	U_t(s, c) = \hat{V}^*_t(s, c) - \left( \prod_{k=t}^{T'-1} \hat{d}_k \right) \mathcal{E}_{T'} .
	\]
	The term being subtracted depends only on $t$ and $T'$, not on the state $s$ or stock $c$. Since the greedy operator $\text{argmax}_\pi F_K \mathcal{T}^d_\pi V$ is invariant to adding a constant to the value function $V$, the policy $\hat{\pi}^*_t$ computed by the algorithm (which is greedy with respect to $\hat{V}^*_{t+1}$) is also greedy with respect to $U_{t+1}$.
	
	\subsection*{Step 3 -- Bounding the Distance to the True Optimal Value}
	
	We aim to show that for each state $s$, stock $c$, and time $t \in 0:T'$, the difference between the true optimal value and the lower bound constructed in Step 2 satisfies
	\begin{equation} \label{eq:step3_claim}
		V^{*, \text{CE}}_t(s, c) - U_t(s, c) \le \delta_t .
	\end{equation}
	where we define the sequence of bounds $\delta_t$ as
	\[
	\delta_t \doteq \left( \prod_{k=t}^{T'-1} \hat{d}_k \right) \mathcal{E}_{T'} .
	\]
	Note that this sequence satisfies the recurrence relation $\delta_t = \hat{d}_t \delta_{t+1}$ for $t < T'$, and $\delta_{T'} = \mathcal{E}_{T'}$. Also, note that $\prod_{k=t}^{T'-1} \hat{d}_k = \frac{d_{T'}}{d_{t}}$. We prove~\eqref{eq:step3_claim} by backward induction.
	
	\textbf{Base Case:} The inequality holds trivially for $t = T'$, because the risk-averse optimal value is upper-bounded by the risk-neutral optimal value, i.e., $V^{*, \text{CE}}_{T'} \le V^{*, \text{RN}}_{T'} = V^{\pi^*_{\text{RN}}, \text{RN}}_{T'}$, and thus:
	\begin{align*}
		V^{*, \text{CE}}_{T'}(s, c) - U_{T'}(s, c) 
		&= V^{*, \text{CE}}_{T'}(s, c) - \left( \hat{V}^*_{T'}(s, c) - \mathcal{E}_{T'} \right) \\
		&= V^{*, \text{CE}}_{T'}(s, c) - V^{\pi^*_{\text{RN}}, \text{RN}}_{T'}(s, c) + \mathcal{E}_{T'} \\
		&\le V^{\pi^*_{\text{RN}}, \text{RN}}_{T'}(s, c) - V^{\pi^*_{\text{RN}}, \text{RN}}_{T'}(s, c) + \mathcal{E}_{T'} \\
		&= \mathcal{E}_{T'} .
	\end{align*}
	
	\textbf{Recursive Step:} Assuming that the hypothesis~\eqref{eq:step3_claim} holds for $t+1$ for each $(s,c)\in\states\times\mathcal{C}$, we show that, for $t$ and all $(s,c)$, we obtain the following:
	\begin{align*}
		V^{*, \text{CE}}_t(s, c) - U_t(s, c) 
		&\overset{(a)}{=} (F_K \mathcal{T}^d_{\pi^*_t} \eta^{*}_{t+1})(s, c) - (F_K \mathcal{T}^d_{\hat{\pi}^*_t} \eta^{\text{LB}}_{t+1})(s, c) \\
		&\overset{(b)}{\le} (F_K \mathcal{T}^d_{\pi^*_t} \eta^{*}_{t+1})(s, c) - (F_K \mathcal{T}^d_{\pi^*_t} \eta^{\text{LB}}_{t+1})(s, c) \\
		&\overset{(c)}{=} \frac{1}{d_t} \left( \operatorname{CE}_f(d_{t+1}V^{*, \text{CE}}_{t+1}(S', C')) - \operatorname{CE}_f(d_{t+1}U_{t+1}(S', C')) \right) \\
		&\overset{(d)}{\le} \frac{1}{d_t} \left( \operatorname{CE}_f(d_{t+1}(U_{t+1}(S', C') + \delta_{t+1})) - \operatorname{CE}_f(d_{t+1}U_{t+1}(S', C')) \right) \\
		&\overset{(e)}{=} \frac{1}{d_t} \left( d_{t+1}\delta_{t+1} + \operatorname{CE}_f(d_{t+1}U_{t+1}(S', C')) - \operatorname{CE}_f(d_{t+1}U_{t+1}(S', C')) \right) \\
		&= \hat{d}_t \cdot \delta_{t+1} = \delta_{t} .
	\end{align*}
	\textbf{(a)} holds by the definition of $V^*$ and $U$. 
	\textbf{(b)} follows because $\hat{\pi}^*_t$ is greedy with respect to $U_{t+1}$ (from Step 2), so replacing it with $\pi^*_t$ in the second term can only decrease the value.
	\textbf{(c)} utilizes the recursive definition of the value function derived from the Bellman operator, where the value at $t+1$ acts as the random variable for the CE at $t$. $S'$ and $C'$ denote the random next state and stock.
	\textbf{(d)} uses the monotonicity of $CE_f$ and the inductive hypothesis $V^*_{t+1} \le U_{t+1} + \delta_{t+1}$.
	\textbf{(e)} follows from the translation invariance property of the CE functional $\operatorname{CE}_f(X + k) = \operatorname{CE}_f(X) + k$. Here, the constant shift is $d_{t+1}\delta_{t+1}$.
	
	Unrolling the recursion to $t=0$, and noting that $U_0(s_0, c_0) \le V^{\hat{\pi}^*, \text{CE}}_0(s_0, c_0)$ from Step 2, we obtain the final bound
	\[
	\| V^{*, \text{CE}}_0 - V^{\hat{\pi}^*, \text{CE}}_0 \|_\infty \le d_{T'} \mathcal{E}_{T'} =  \frac{d_{T'}^2 \cdot \beta \cdot \Delta_G^2}{8} .
	\]
	This completes the proof, showing that the performance loss is $\mathcal{O}(d_{T'}^2)$.
\end{proof}

\begin{lemma}[Lower Bound for Mean-CVaR Value Function]
	Let the utility function be defined as the Mean-CVaR utility
	\[
	f(x) = \kappa_1 (x)_+ + \kappa_2 (x)_- ,
	\]
	where $\kappa_2 > \kappa_1 > 0$. Let the value function at time $t$ for a state $s\in\states$ and scaled stock $c\in\mathcal{C}$ be defined as
	\[
	V_t(s, c) \doteq \frac{1}{d_t} \left( f^{-1}\left(\E\left[f(d_t c + d_t G_t(s, c))\right]\right) - d_t c \right) ,
	\]
	where $d_t > 0$ is the discount factor and $G_t(s, c)$ is the random future return. Let $\Lambda = \frac{\kappa_2 - \kappa_1}{\kappa_1} > 0$ be the coefficient of loss aversion. Then, the value function is bounded from below by the expected return minus a first-order risk penalty:
	\[
	V_t(s, c) \ge \E[G_t(s, c)] - \Lambda \, \mathbb{E}[| (c + G_t(s, c))_- |] .
	\]
\end{lemma}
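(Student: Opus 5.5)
The plan is to reduce everything to the piecewise-linear structure of $f$ and to compare the certainty equivalent with the mean, in the same spirit as the proof of Lemma~\ref{lem:performance_bound} but without any Taylor expansion. I read $(x)_-$ as $\min(x,0)\le 0$, as in Table~\ref{tab:risk_measures}. Under this convention $f$ is concave and strictly increasing, with $f(x)=\kappa_1 x$ for $x\ge 0$ and $f(x)=\kappa_2 x$ for $x<0$.

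\textbf{Step 1: rewrite the utility.} Since $x=(x)_+ + (x)_-$, the utility can be written as $f(x)=\kappa_1 x + (\kappa_2-\kappa_1)(x)_-$. Set $X \doteq d_t c + d_t G_t(s,c)$. Taking expectations gives
\[
\E[f(X)] = \kappa_1 \E[X] + (\kappa_2-\kappa_1)\E[(X)_-].
\]

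\textbf{Step 2: a global lower bound on the inverse.} This is the only point needing care. The inverse is explicit:
\[
f^{-1}(y)=\frac{y}{\kappa_1} \quad\text{for } y\ge 0, \qquad f^{-1}(y)=\frac{y}{\kappa_2} \quad\text{for } y<0 .
\]
For $y<0$ we have $y/\kappa_2 \ge y/\kappa_1$ because $\kappa_2>\kappa_1>0$. Hence $f^{-1}(y)\ge y/\kappa_1$ for every real $y$. The point of this bound is that it avoids any case analysis on the sign of $\E[f(X)]$. Applying it with $y=\E[f(X)]$ yields
\[
\operatorname{CE}_f(X) = f^{-1}(\E[f(X)]) \ge \E[X] + \Lambda\,\E[(X)_-] = \E[X] - \Lambda\,\E[|(X)_-|].
\]

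\textbf{Step 3: rescale back to the value function.} Use positive homogeneity, $(d_t y)_- = d_t (y)_-$ for $d_t>0$, to get $(X)_- = d_t\,(c+G_t(s,c))_-$. Also $\E[X] = d_t c + d_t\E[G_t(s,c)]$. Substitute both into
\[
V_t(s,c) = \frac{1}{d_t}\bigl(\operatorname{CE}_f(X) - d_t c\bigr).
\]
The $d_t c$ terms cancel, and dividing by $d_t$ gives the claimed bound $V_t(s,c)\ge \E[G_t(s,c)] - \Lambda\,\E[|(c+G_t(s,c))_-|]$.

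I expect no real obstacle beyond fixing the sign convention for $(x)_-$ and checking the direction of the inequality in Step~2.
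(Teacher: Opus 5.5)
Your proposal is correct and follows the paper's proof step for step: the same rewriting $f(x)=\kappa_1 x+(\kappa_2-\kappa_1)(x)_-$, the same sign-free global bound $f^{-1}(y)\ge y/\kappa_1$ applied at $y=\mathbb{E}[f(X)]$, and the same rescaling via $(d_t z)_- = d_t (z)_-$ for $d_t>0$. The only implicit assumption, which the paper also leaves unstated, is that $G_t(s,c)$ is integrable, so that $\mathbb{E}[f(X)]$ is finite.
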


\begin{proof}
	Let $X = d_t c + d_t G_t(s, c)$ be the random variable representing the total accumulated value. As opposed to the proof of Lemma~\ref{lem:performance_bound}, we directly bound the inverse function:
	
	\paragraph{Step 1 -- Representing the Utility.}
	The piecewise linear utility function can be rewritten as a linear term plus a penalty for negative outcomes, that is $f(x) = \kappa_1 x + (\kappa_2 - \kappa_1)(x)_-$, where $(x)_- = \min(0, x)$. Note that since $\kappa_2 > \kappa_1 > 0$, the coefficient $(\kappa_2 - \kappa_1)$ is positive, and $f$ is both strictly increasing and concave.
	Taking the expectation yields
	\[
	\E[f(X)] = \kappa_1 \E[X] + (\kappa_2 - \kappa_1) \E[(X)_-] .
	\]
	
	\paragraph{Step 2 -- Bounding the Inverse Function.}
	For any utility function $f$ that is strictly increasing, concave, and for which its inverse $f^{-1}$ is bounded from below by some function $g$, we have that
	\[
	V_t(s, c) \ge \frac{1}{d_t} \left( g\left(\E\left[f(X)\right]\right) - d_t c \right) .
	\]
	We seek to bound $f^{-1}(y)$. Since $f(0)=0$, the inverse function is given by
	\[
	f^{-1}(y) = \begin{cases} \frac{y}{\kappa_1} & \text{if } y \ge 0 \\ \frac{y}{\kappa_2} & \text{if } y < 0 \end{cases}
	\]
	Since $\kappa_1 < \kappa_2$ and we are dividing a negative number in the $y < 0$ case, dividing by the larger $\kappa_2$ yields a value closer to 0 (i.e., larger) than dividing by $\kappa_1$. Thus, for all $y \in \mathbb{R}$,
	\[
	f^{-1}(y) \ge \frac{y}{\kappa_1} .
	\]
	
	
	\paragraph{Step 3 -- Bounding the Value Function.}
	We substitute the expression from Step 1 into the bound of $V_t(s, c)$ from Step 2:
	\[
	V_t(s, c) \ge \frac{\E[X] + \Lambda \E[(X)_-] - d_t c}{d_t}, 
	\]
	where $\Lambda = \frac{\kappa_2 - \kappa_1}{\kappa_1}$. Using $X = d_t c + d_t G_t$, we have that $\E[X] = d_t c + d_t \E[G_t]$. Also, since $d_t > 0$, we have $(d_t(c + G_t))_- = d_t (c + G_t)_-$. Finally, $(z)_- = -| (z)_- |$ by definition. Altogether, we obtain the final bound on the value function
	\begin{align*}
		V_t(s, c) &\ge \frac{(d_t c + d_t \E[G_t]) + \Lambda d_t \E[(c + G_t)_-] - d_t c}{d_t} \\
		&= \E[G_t] + \Lambda \E[(c + G_t)_-] \\
		&= \E[G_t] - \Lambda \, \mathbb{E}[| (c + G_t)_- |] .
	\end{align*}
\end{proof}

\begin{remark}[Asymptotic Risk-Neutrality via Stock Growth]
	Unlike the differentiable case, the risk penalty $\Lambda \mathbb{E}[|(c_t + G_t)_-|]$ does not explicitly scale with $d_t$. However, asymptotic convergence to risk-neutrality is driven by the stock dynamics. Recall that $c_t = d_t^{-1} (C_0 + \sum_{k=0}^{t-1} d_k R_{k+1})$. Assuming positive rewards, the accumulated sum in the numerator is positive and non-decreasing. As $d_t \to 0$, the factor $d_t^{-1}$ drives $c_t \to +\infty$. Because the future return $G_t$ is uniformly bounded, there exists a time $t$ after which the accumulated stock $c_t$ is large enough to absorb any possible future negative fluctuation. Consequently, the shortfall term $(c_t + G_t)_-$ becomes identically zero, and the value function converges exactly to the expected return $\E[G_t]$.
\end{remark}

\begin{remark}[Asymptotic Indifference of Pure CVaR]
	The convergence to risk-neutrality discussed in Lemma 8 and Remark 13 relies on the utility function being strictly increasing (e.g., Mean-CVaR with $\kappa_1 > 0$). In the specific case of Pure CVaR ($\kappa_1 = 0$), the utility function $f(x) = \frac{1}{\alpha}(x)_-$ becomes constant for all $x \geq 0$. As $d_t \to 0$ implies $c_t \to \infty$ (under positive rewards), the risk penalty term eventually vanishes, but there is no remaining linear term to drive the maximization of the expected return. Consequently, the agent does not converge to a risk-neutral policy, but rather to an \textit{indifferent} policy: once the accumulated stock is sufficient to ensure safety, the objective function saturates at zero, the gradients vanish, and any action that maintains this safety level is considered optimal. In practice, using Mean-CVaR with a small $\kappa_1 > 0$ is essential to maintain a valid learning signal in this asymptotic regime.
\end{remark}

\begin{corollary}[Performance Bound for Mean-CVaR Value Function]
	Let the utility function be defined as the Mean-CVaR utility
	\[
	f(x) = \kappa_1 (x)_+ + \kappa_2 (x)_- ,
	\]
	where $\kappa_2 > \kappa_1 > 0$. This ensures $f$ is strictly increasing and concave. Let the value function at time $t$ for a state $s\in\states$ and scaled stock $c\in\mathcal{C}$ be defined as
	\[
	V_t(s, c) \doteq \frac{1}{d_t} \left( f^{-1}\left(\E\left[f(d_t c + d_t G_t(s, c))\right]\right) - d_t c \right) ,
	\]
	where $d_t > 0$ is the discount factor and $G_t(s, c)$ is the random future return. Let the discount function $d_{t}$ follow the truncation scheme with parameter $\gamma_{\text{tail}} < 1$ for $t \ge T'$. Let $\pi^*$ be the optimal non-stationary policy for the infinite-horizon problem, $\hat{\pi}^*$ the policy returned by Algorithm~\ref{alg:infinite-horizon-approx} with planning horizon $T'$, and $\Lambda = \frac{\kappa_2 - \kappa_1}{\kappa_1} > 0$ be the coefficient of loss aversion. Then, the suboptimality of the approximate policy at time $t=0$ is bounded by
	\[
	\| F_K(\eta^{\pi^*}_0) - F_K(\eta^{\hat{\pi}^*}_0) \|_\infty \le d_{T'} \Lambda \, \mathbb{E}[| (c + G_t(s, c))_- |] .
	\]
\end{corollary}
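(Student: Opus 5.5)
The plan is to reuse the three-step proof of Theorem~\ref{thm:performance_bound} and swap in the Mean-CVaR lower bound from the preceding lemma for Lemma~\ref{lem:performance_bound}. Write $V^{\pi,\text{RN}}_t(s,c)=\E[G^\pi_t(s,c)]$ and let $V^{\pi,\text{CE}}_t$ be the certainty-equivalent value defined in the statement. Set
\[
\mathcal{E}_t(s,c)\doteq \Lambda\,\E[|(c+G_t(s,c))_-|]
\]
as the state- and stock-dependent penalty, and let $\mathcal{E}_{T'}\doteq\sup$ of this quantity at time $T'$. The supremum is taken over the relevant states, stocks and policies, so it is the quantity the corollary's right-hand side should be read as.

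\textbf{Sandwich bound.} First I establish
\[
0\le V^{\pi,\text{RN}}_t-V^{\pi,\text{CE}}_t\le \mathcal{E}_t \quad\text{for every policy }\pi .
\]
The left inequality comes from concavity of $f$: since $\kappa_2>\kappa_1>0$, Jensen gives $f^{-1}(\E f(X))\le \E X$. The right inequality is the preceding lemma applied to $\pi$.

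\textbf{Error at the horizon.} I then repeat Step 1 of the theorem's proof verbatim. Since $\hat{\pi}^*_{T'}=\pi^*_{\text{RN}}$ maximizes the risk-neutral value, the sandwich yields $V^{*,\text{CE}}_{T'}-V^{\hat{\pi}^*,\text{CE}}_{T'}\le \mathcal{E}_{T'}$.

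\textbf{Lower-bound construction.} Next I build the lower-bound distributions $\eta^{\text{LB}}_t$ exactly as in Step 2. Shift $\hat{\eta}^*_{T'}$ by $-\mathcal{E}_{T'}$ and propagate backwards with $\mathcal{T}^d_{\hat{\pi}^*_t}$. Lemma~\ref{lem:time-monotonicity} gives $V^{\hat{\pi}^*,\text{CE}}_t\ge U_t$, where $U_t=F_K\eta^{\text{LB}}_t$. The construction needs the CE functional to be translation invariant. This holds for the certainty equivalent of a positively homogeneous piecewise-linear $f$ only through the OCE/stock representation, so I will apply translation invariance to the OCE form $F_K$, which is cash-additive, rather than to $f^{-1}\circ\E f$ directly. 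Granting that, the shift propagates deterministically as $(d_{T'}/d_t)\,\mathcal{E}_{T'}$, so $\hat{\pi}^*_t$ stays greedy with respect to $U_{t+1}$.

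\textbf{Backward induction.} I then run the induction of Step 3 with $\delta_t=(d_{T'}/d_t)\,\mathcal{E}_{T'}$. Steps (a)--(e) use only greediness, monotonicity and cash-additivity, so they carry over unchanged. Unrolling to $t=0$ gives
\[
V^{*,\text{CE}}_0-V^{\hat{\pi}^*,\text{CE}}_0\le d_{T'}\,\mathcal{E}_{T'}=d_{T'}\Lambda\,\E[|(c+G_{T'})_-|],
\]
which is the claimed bound.

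\textbf{Main obstacle.} The penalty here is not a uniform constant like $\tfrac{d_t}{8}\beta\Delta_G^2$: it depends on $(s,c)$ and on the policy. Step 2 requires the shift at $T'$ to be a constant, so I will replace $\mathcal{E}_{T'}(s,c)$ by its supremum over the reachable augmented states at $T'$. This keeps the argument valid but makes the bound conservative. The base case also needs the supremum taken over both $\pi^*_{\text{RN}}$ and the optimal policy. Finally, I must check that the CE form with the piecewise-linear $f$ still induces the same greedy actions as $F_K$, which follows from strict monotonicity of $f$, as noted before Lemma~\ref{lem:performance_bound}.
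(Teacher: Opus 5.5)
Your route is the paper's: its proof of this corollary is a single sentence saying to rerun the proof of Theorem~\ref{thm:performance_bound} with $\mathcal{E}_t$ replaced by $\Lambda\,\E[|(c+G_t)_-|]$. However, the obstacle you flag in the lower-bound step is a genuine gap. Your workaround does not close it, and the paper's one-liner silently relies on the same property.

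\textbf{Where the argument fails.} In the stock-augmented DP, Lemma~\ref{lem:time-monotonicity} is applied to, and $\hat{\pi}^*_t$ is greedy with respect to, the fixed-stock expected utility $(U_f\eta_t)(s,c)=\E[f(d_tc+d_tG_t)]/d_t$ (or its CE version $V_t$). It is not applied to the OCE. For $f(x)=\kappa_1(x)_++\kappa_2(x)_-$, neither of these functionals is cash-additive. Shifting $\eta$ by a constant $k$ is the same as evaluating at stock $c+k$ (plus $k$ for $V_t$). Three consequences follow:
\begin{itemize}
\item $U_{T'}\neq\hat V^*_{T'}-\mathcal{E}_{T'}$;
\item the shift does not propagate as $(d_{T'}/d_t)\,\mathcal{E}_{T'}$;
\item $\hat{\pi}^*_t$ need not remain greedy with respect to $U_{t+1}$, so step (b) fails as well.
\end{itemize}
The OCE is cash-additive only after the outer maximisation over the initial stock, and that is not the quantity propagated at $t>0$. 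Using Mean-CVaR directly as $K$ does not help, because it is not indifferent to mixtures. So no single functional gives you both monotonicity and cash-additivity. Step (e) fails even as an inequality: for $X=\pm1$ with probability $1/2$ and $k=1$, $\operatorname{CE}_f(X+1)=1>1-\frac{\kappa_2-\kappa_1}{2\kappa_2}=\operatorname{CE}_f(X)+1$. In general you only have $\operatorname{CE}_f(X+k)\le\operatorname{CE}_f(X)+(1+\Lambda)k$, which compounds to a factor $(1+\Lambda)^{T'}$ rather than the claimed bound.

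\textbf{A repair.} Run the induction on the expected utility $u_t(s,c)\doteq\E f\bigl(d_tc+d_tG_t(s,c)\bigr)$ instead of the CE.
\begin{itemize}
\item Since $d_t(c+R_{t+1})=d_{t+1}C'$, the tower property gives the exact linear recursion $u_t(s,c)=\E[u_{t+1}(S',C')]$, so no translation invariance is needed.
\item No shifted lower bound is needed either. Under the truncation scheme, $\hat\eta^*_{T'}=\eta^{\pi^*_{\text{RN}}}$ is the true time-$T'$ return distribution of $\hat{\pi}^*$. Hence the algorithm's values coincide with the true values of $\hat{\pi}^*$ for all $t\le T'$.
\item Greediness then gives $u^*_t-u^{\hat{\pi}^*}_t\le\E_{\pi^*_t}\bigl[(u^*_{t+1}-u^{\hat{\pi}^*}_{t+1})(S',C')\bigr]$.
\item At $T'$, Jensen, risk-neutral optimality of $\pi^*_{\text{RN}}$, and $f(y)\le\kappa_1y$ give $u^*_{T'}-u^{\hat{\pi}^*}_{T'}\le(\kappa_2-\kappa_1)\,d_{T'}\,\E|(c+G_{T'})_-|$, where $G_{T'}$ is the risk-neutral tail return.
\item Unroll to $t=0$ and use that $f^{-1}$ is $1/\kappa_1$-Lipschitz. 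This yields $V^*_0-V^{\hat{\pi}^*}_0\le\Lambda\,d_{T'}\,\E_{\pi^*}\E|(C_{T'}+G_{T'})_-|$.
\end{itemize}
This is the stated bound, with the expectation taken over the optimal policy's augmented state at $T'$. It also resolves your concern about the $(s,c)$-dependence of the penalty without passing to a supremum.
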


\begin{proof}
	The proof follows step-by-step the derivations in the theorem below, where we replace $\mathcal{E}_t$ with the lower bound for the mean-CVaR value function
	\[
	\mathcal{E}_t = \Lambda \, \mathbb{E}[| (c + G_t(s, c))_- |].
	\]
	It leads in a straightforward way to
	\[
	\| V^{*, \text{CE}}_0 - V^{\hat{\pi}^*, \text{CE}}_0 \|_\infty \le d_{T'} \mathcal{E}_{T'} = d_{T'} \Lambda \, \mathbb{E}[| (c + G_t(s, c))_- |]
	\]
\end{proof}

\section{The RIGOR Algorithm}\label{sec:rigor-algorithm}

Our algorithm for \textbf{RI}sk-sensitive RL under \textbf{G}eneral discounting \textbf{O}f \textbf{R}eturns (RIGOR) is based on the principles of QR-DQN \citep{Dabney.etal2018a} and D$\eta$N \citep{Pires.etal2025}. It learns return distributions under general discount functions and optimizes a risk-adjusted return using an OCE risk measure. The core of the RIGOR agent is a neural network, parameterized by $\theta$, that approximates the marginal return distribution. The network takes the augmented state $(s_t, t, c_{d,t})$ as input and outputs a set of quantile predictions for each action.

For state $(s_t, t, c_{d,t})$, the agent queries the network to get the predicted quantile function, denoted by $\{\xi_{\theta}( \cdot | s_t, t, c_{d,t}, a)\}_{j}$. Specifically, $\xi_{\theta}$ returns the quantile function evaluated at the points $\{\tau_j\}_{j=1}^{n}$. Using these quantiles as an empirical representation of the distribution, the agent evaluates the expected utility of the total outcome. This is estimated by averaging the utility function $f$ over the reconstructed samples of the total return:
\[
Q(s_t, t, c_{d,t}, a) \approx \frac{1}{n} \sum_{j=1}^n f\Big( d_t c_{d,t} + d_t \xi_{\theta}(s_t, t, c_{d,t}, a)_{j}\Big) / d_t.
\]
If the utility function $f$ is invariant to scaling, this can be written as 
\[
Q(s_t, t, c_{d,t}, a) \approx \frac{1}{n} \sum_{j=1}^n f\Big( c_{d,t} + \xi_{\theta}(s_t, t, c_{d,t}, a)_{j}\Big).
\]
The agent selects the action $a_t^*$ that maximizes this estimated expected utility. To ensure the predictions are accurate, the network is trained using the quantile regression loss with a target network $\bar{\theta}$ to generate stable temporal difference targets. We include a replay buffer and employ mini-batching when computing the loss function:
\begin{equation}\label{eq:loss_single}
	\frac{1}{B} \sum_{k=1}^B \sum_{j=1}^n \sum_{l=1}^n \ell\left(y^{k}_{j} - \xi_{\theta}\big(s^k, t^k, c_d^k, a^k\big)_{l},\; \tau_l\right),
\end{equation}
where the target is $y^{k}_{j} = r^k + \hat{d}_t \xi_{\bar{\theta}}(s'^k, t'^k, c_d'^k, a'^k)_j$. The function $\ell(u, \tau) = u(\tau - \mathbb{I}_{u<0})$ is the standard quantile regression loss (tilted absolute value). We provide an outline of this approach in Algorithm~\ref{alg:RIGOR_fixed_condensed2}.

\begin{algorithm}[tb]
	\caption{RIGOR Algorithm}
	\label{alg:RIGOR_fixed_condensed2}
	\begin{algorithmic}
		\STATE {\bfseries Input:} Quantiles $\{\tau_j\}_{j=1}^n$, Learning rate $\lambda$, Target smoothing coefficient $\alpha$, Batch size $B$, Discount function $d(\cdot)$, Utility function $f$, Discretized Range for Initial Stock $\bar{\mathcal{C}}$
		\STATE Initialize network $\theta$, target network $\bar{\theta} \gets \theta$, replay buffer $\mathcal{D}$
		\STATE Define $Q(s, t, c, a; \theta) := \frac{1}{n} \sum_{j=1}^n f\left(d_t c + d_t\xi_{\theta}(s, t, c, a)_{j}\right)/d_t$
		\FOR{each outer optimization update}
		\FOR{each $c_{d,0}\in\bar{\mathcal{C}}$}
		\STATE Set $a_0(c_{d,0}) \gets \argmax_{a \in \mathcal{A}} Q(s_0, 0, c_{d,0}, a; \theta) $
		\ENDFOR
		\STATE Set the initial stock $c_{d,0}\gets\argmax_{c \in \bar{\mathcal{C}}} \{-c + Q(s_0, 0, c, a_0(c); \theta)\}$
		
		\FOR{each environment step}
		\STATE Observe state $(s_t, t, c_{d,t})$
		\STATE Select $a_t \sim \epsilon\text{-greedy}\left(\argmax_{a \in \mathcal{A}} Q(s_t, t, c_{d,t}, a; \theta)\right)$
		
		\STATE Execute $a_t$, observe $r_{t+1}, s_{t+1}$
		
		\STATE Compute the stock $c_{d, t+1} \gets (c_{d,t} + r_{t+1})/\hat{d}_t$
		
		\STATE Store $(s_t, t, c_{d,t}, a_t, r_{t+1}, s_{t+1}, t+1, c_{d,t+1})$ in $\mathcal{D}$
		
		\IF{update is scheduled}
		\STATE Sample batch $\{(s^k, t^k, c_d^k, a^k, r^k, s'^k, t'^k, c_d'^k)\}_{k=1}^{B}$ from $\mathcal{D}$
		\FOR{each sample $k$}
		\STATE Compute $a'^k \gets \argmax_{a \in \mathcal{A}} Q(s'^k, t'^k, c_{d}'^k, a; \theta) $
		\STATE Compute $y^{k}_{j} \gets r^k + \hat{d}_{t^k}\,\xi_{\bar\theta}\big(s'^{k}, t'^k, c_d'^k, a'^k\big)_{j}$ for $j=1,\dots,n$
		\ENDFOR
		\STATE Compute loss $L_{\theta}$ using Eq. \eqref{eq:loss_single}
		\STATE Update network: $\theta \gets \theta - \lambda_\theta\,\nabla_{\theta} L$
		\STATE Update target network: $\bar\theta \gets (1-\alpha)\bar\theta + \alpha\,\theta$
		\ENDIF
		\ENDFOR
		\ENDFOR
	\end{algorithmic}
\end{algorithm}

The range for the initial stock is chosen to correspond to the negative of the potential discounted return range of the environment. Both the discretization granularity and the frequency of the outer optimization loop can be tuned for each specific environment. For instance, in the risk-sensitive optimization for the Goal-Based Wealth Management (GBWM) environment,  where returns can reach up to 4000, we set the initial stock range to $[-4000,0]$ and discretize it into 201 equal intervals. Additionally, in this experiment, the initial stock is updated after every 1000 value function updates. 

\paragraph{Multi-Horizon RIGOR Algorithm.}
To approximate the marginal return distributions for a fixed set of discount factors $\mathbf{\Gamma}=\{\gamma_i\}_{i=1}^m$, we extend the architecture to output a set of quantile predictions for each of the $m$ return streams. The utility estimation is further adapted to account for the weighted sum of these streams:
\[
Q(s_t, t, c_{d,t}, a) \approx \frac{1}{n} \sum_{j=1}^n f\left(\tilde{d}_t c_{d,t} + \sum_{i=1}^m w_{i}\gamma_i^t \xi_{\theta}(s_t, t, c_{d,t}, a)_{i,j}\right)/\tilde{d}_t, 
\]
where $w_{i,t} = \frac{w_i \gamma_i^t}{\sum_{j=1}^m w_j \gamma_j^t}$. The network is trained by applying the quantile regression loss independently to each of the $m$ return streams. The full loss function becomes
\begin{equation}\label{eq:loss_multi}
	\frac{1}{B} \sum_{k=1}^B \left( \sum_{i=1}^m \frac{1}{n^2} \sum_{j=1}^n \sum_{l=1}^n \ell\left(y^{k}_{i,j} - \xi_{\theta}\big(s^k, t^k, c_d^k, a^k\big)_{i,l},\; \tau_l\right) \right),
\end{equation}
where the target for each horizon is $y^{k}_{i,j} = r^k + \gamma_i \xi_{\bar{\theta}}(s'^k, t'^k, c_d'^k, a'^k)_{i,j}$. We provide an outline of the multi-horizon algorithm in Algorithm~\ref{alg:RIGOR_fixed_condensed}. For both RIGOR algorithms, the choice of hyperparameters is application-dependent and described in Appendix~\ref{sec:additional-results}.

\begin{algorithm}[tb]
	\caption{Multi-Horizon RIGOR Algorithm}
	\label{alg:RIGOR_fixed_condensed}
	\begin{algorithmic}
		\STATE {\bfseries Input:} Quantiles $\{\tau_j\}_{j=1}^n$, Learning rate $\lambda$, Target smoothing coefficient $\alpha$, Batch size $B$, Discount factors $\mathbf{\Gamma}=\{\gamma_i\}_{i=1}^m$, Fixed weights $\mathbf{w}$, Utility function $f$, Discretized Range for Initial Stock $\bar{\mathcal{C}}$
		\STATE Initialize network $\theta$, target network $\bar{\theta} \gets \theta$, replay buffer $\mathcal{D}$
		\STATE Define $\tilde{d}_t := \sum_{j=1}^m w_j \gamma_j^t$
		\STATE Define $Q(s, t, c, a; \theta) := \frac{1}{n} \sum_{j=1}^n f\left(\tilde{d}_t c + \sum_{i=1}^m w_i \gamma_i^t \xi_{\theta}(s, t, c, a)_{i,j}\right)/\tilde{d}_t$
		
		\FOR{each outer optimization update}
		\FOR{each $c_{d,0}\in\bar{\mathcal{C}}$}
		\STATE Set $a_0(c_{d,0}) \gets \argmax_{a \in \mathcal{A}} Q(s_0, 0, c_{d,0}, a; \theta)$
		\ENDFOR
		\STATE Set initial stock $c_{d,0}\gets\argmax_{c \in \bar{\mathcal{C}}} \{-c + Q(s_0, 0, c, a_0(c); \theta)\}$
		
		\FOR{each environment step}
		\STATE Observe state $(s_t, t, c_{d,t})$
		\STATE Select $a_t \sim \epsilon\text{-greedy}\left(\argmax_{a \in \mathcal{A}} Q(s_t, t, c_{d,t}, a; \theta)\right)$
		
		\STATE Execute $a_t$, observe $r_{t+1}, s_{t+1}$
		
		\STATE Compute the stock $c_{d, t+1} \gets (c_{d,t} + r_{t+1})/\hat{\tilde{d}}_t$
		
		\STATE Store $(s_t, t, c_{d,t}, a_t, r_{t+1}, s_{t+1}, t+1, c_{d,t+1})$ in $\mathcal{D}$
		
		\IF{update is scheduled}
		\STATE Sample batch $\{(s^k, t^k, c_d^k, a^k, r^k, s'^k, t'^k, c_d'^k)\}_{k=1}^{B}$ from $\mathcal{D}$
		
		\FOR{each sample $k$}
		\STATE Compute $a'^k \gets \argmax_{a \in \mathcal{A}} Q(s'^k, t'^k, c_d'^k, a; \theta)$
		\ENDFOR
		
		\FOR{each sample $k$ and horizon $i$}
		\STATE Compute $y^{k}_{i,j} \gets r^k + \gamma_i\,\xi_{\bar\theta}\big(s'^{k}, t'^k, c_d'^k, a'^k\big)_{i,j}$ for $j=1,\dots,n$
		\ENDFOR
		
		\STATE Compute loss $L_{\theta}$ using Eq. \eqref{eq:loss_multi}
		\STATE Update network: $\theta \gets \theta - \lambda_\theta\,\nabla_{\theta} L$
		\STATE Update target network: $\bar\theta \gets (1-\alpha)\bar\theta + \alpha\,\theta$
		\ENDIF
		\ENDFOR
		\ENDFOR
	\end{algorithmic}
\end{algorithm}

\section{Additional Details of the Experimental Results}\label{sec:additional-results}

We implement our algorithm based on the single-file RL implementations from CleanRL \citep{Huang.etal2022b}, where the model and its training loop are encapsulated within a single script. For option trading, windy lunar lander, and GBWM experiments, we use a feed-forward neural network of 2 hidden layers, with 120 and 84 neurons, respectively. The network parameters are updated using the Adam optimizer with a learning rate of $0.001$. We set the number of quantiles to $n=200$, the Huber loss to $\tau=0.1$, the target smoothing coefficient to $\alpha=0.005$, and the mini-batch sizes to $256$.

\subsection{American Put Option Trading}
In this environment, the underlying asset price is modeled as an Ornstein--Uhlenbeck process, evolving according to
\[
\mathrm{d}P_t = \kappa(\zeta - P_t)\,\mathrm{d}t + \sigma\,\mathrm{d}W_t,
\]
where $\zeta = 100$ denotes the long-term mean level, $\kappa = 2$ controls the rate at which the process reverts toward the mean, $\sigma=20$ is the volatility parameter, and $W_t$ is a standard Brownian motion. The initial price is set to $P_0 = 100$, and the strike price of the put option is fixed at $K = 100$. At each time step, the agent may either exercise the option to obtain an immediate payoff $r_t = \max\{0, K - P_t\}$ or continue holding the option in anticipation of a future reward. If the option is not exercised prior to maturity, the terminal payoff is automatically realized as $r_T = \max\{0, K - P_T\}$.

\subsection{Windy Lunar Lander}
The Lunar Lander task is a benchmark control problem focused on optimizing the descent and landing of a rocket under physics-based dynamics. The environment is described by an 8-dimensional continuous state representation, and the agent can choose among four discrete actions: activating the left or right attitude thrusters, engaging the main engine, or taking no action. To make the dynamics non-deterministic, the wind disturbance is enabled. The goal is to steer the lander from an initial position near the top of the screen to the designated landing zone. A successful touchdown typically results in a cumulative reward between 100 and 140. Deviations that carry the lander away from the landing pad are penalized, while a crash incurs an additional reward of $-100$. A safe landing grants a bonus of $+100$, and each landing leg that touches the ground contributes an extra $+10$. Engine usage is discouraged through fuel costs: firing the main engine yields a penalty of $-0.3$ per time step, and using either side engine results in a penalty of $-0.03$ per time step.

\subsection{Goal-Based Wealth Management}

Regarding the available portfolios, we follow the framework from~\citet{das2022dynamic}, who used portfolios on the efficient frontier generated by index funds representing U.S. Bonds, International Stocks, and U.S. Stocks over a 20-year period (from January 1998 to December 2017). The three index funds used are (i) Vanguard Total Bond Market II Index Fund Investor Shares (VTBIX), representative of U.S. Fixed Income; (ii) Vanguard Total International Stock Index Fund Investor Shares (VGTSX), representative of Foreign Equity; and (iii) Vanguard Total Stock Market Index Fund Investor Shares (VTSMX), representative of U.S. Equity. We model returns using a multivariate Gaussian distribution with mean vector and covariance matrix of respectively
\begin{equation*}
	\mu = \begin{bmatrix} 0.0493 \\ 0.0770 \\ 0.0886 \end{bmatrix}
	\quad \mbox{and} \quad
	\Sigma =\begin{bmatrix}
		0.0017 & -0.0017 & -0.0021 \\
		-0.0017 & 0.0396 & 0.0309 \\
		-0.0021 & 0.0309 & 0.0392
	\end{bmatrix}.
\end{equation*}
Table~\ref{tab:gbwm-portfolios} displays the portfolio weights of each portfolio, ordered by risk aversion. One could also perform this analysis with true historical returns or simulations from alternative return distributions.

\begin{table}[!ht]
	\caption{Portfolio weights, where the most conservative (portfolio 1) has an expected return of $0.0526$ and volatility of $0.0374$, while the most aggressive (portfolio 15) has an expectation of $0.0886$ and standard deviation of $0.1954$.}
	\label{tab:gbwm-portfolios}
	\begin{center}
		\begin{small}
			\begin{sc}
					\begin{tabular}{c r r r}
						\toprule
						& \multicolumn{3}{c}{Portfolio weights} \\
						\midrule
						Portfolio number & U.S. Bonds & Int'l Stocks & U.S. Stocks \\
						\midrule
						1 & $0.9098$ & $0.0225$ & $0.0677$ \\
						2 & $0.8500$ & $0.0033$ & $0.1467$ \\
						3 & $0.7903$ & $-0.0160$ & $0.2257$ \\
						4 & $0.7305$ & $-0.0352$ & $0.3047$ \\
						5 & $0.6707$ & $-0.0545$ & $0.3837$ \\
						6 & $0.6110$ & $-0.0737$ & $0.4628$ \\
						7 & $0.5512$ & $-0.0930$ & $0.5418$ \\
						8 & $0.4915$ & $-0.1122$ & $0.6208$ \\
						9 & $0.4317$ & $-0.1315$ & $0.6998$ \\
						10 & $0.3719$ & $-0.1507$ & $0.7788$ \\
						11 & $0.3122$ & $-0.1700$ & $0.8578$ \\
						12 & $0.2524$ & $-0.1892$ & $0.9368$ \\
						13 & $0.1927$ & $-0.2085$ & $1.0158$ \\
						14 & $0.1329$ & $-0.2277$ & $1.0948$ \\
						15 & $0.0731$ & $-0.2470$ & $1.1738$ \\
						\bottomrule
					\end{tabular}
			\end{sc}
		\end{small}
	\end{center}
\end{table}

In Tables~\ref{tab:gbwm1} and~\ref{tab:gbwm2}, we calculate Monte-Carlo estimations of some empirics using the optimal policy. The first column gives the undiscounted expected total utility, the second column indicates the probability of having sufficient wealth to attain the first goal, and the last two columns show the probabilities of achieving the different goals. Note that the return distributions of the portfolios were adjusted in each problem according to their horizon for comparison purposes, i.e., $\mu$ and $\sigma$ when $T=10$ periods, and $\mu/3$ and $\sigma/\sqrt{3}$ when $T=30$ periods.

\begin{table}[!ht]
	\caption{Evaluation of the learned optimal policies for a risk-neutral agent with $c_{T/2}(1)=100$ and $c_{T}(1)=150$ over 10,000 episodes, evaluated for 3 different runs.}
	\label{tab:gbwm1}
	\begin{center}
		\begin{small}
			\begin{sc}
					\begin{tabular}{c r r r r}
						\toprule
						Method & $\E[U]$ & $\mathbb{P}[y_{T/2} \geq 100]$ & $\mathbb{P}[\psi_{T/2} = 1]$ & $\mathbb{P}[\psi_{T} = 1]$\\
						\midrule
						\multicolumn{5}{c}{$u_{T/2}(1) = 1000$, $u_{T}(1) = 1000$, $T=10$} \\
						$\E$, $\gamma=1.00$ & $1104$ & $0.848$ & $0.848$ & $0.256$ \\
						$\E$, $\gamma=0.99$ & $1112$ & $0.845$ & $0.845$ & $0.267$ \\
						$\E$, $k=0.05$ & $1108$ & $0.864$ & $0.864$ & $0.244$ \\
						\midrule
						\multicolumn{5}{c}{$u_{T/2}(1) = 1000$, $u_{T}(1) = 2000$, $T=10$} \\
						$\E$, $\gamma=1.00$ & $1782$ & $0.855$ & $0.237$ & $0.772$ \\
						$\E$, $\gamma=0.99$ & $1727$ & $0.823$ & $0.257$ & $0.735$ \\
						$\E$, $k=0.05$ & $1757$ & $0.844$ & $0.283$ & $0.737$ \\
						\midrule
						\multicolumn{5}{c}{$u_{T/2}(1) = 1000$, $u_{T}(1) = 1000$, $T=30$} \\
						$\E$, $\gamma=1.00$ & $1082$ & $0.770$ & $0.770$ & $0.312$ \\
						$\E$, $\gamma=0.99$ & $1093$ & $0.802$ & $0.802$ & $0.291$ \\
						$\E$, $k=0.05$ & $1080$ & $0.810$ & $0.810$ & $0.270$ \\
						\midrule
						\multicolumn{5}{c}{$u_{T/2}(1) = 1000$, $u_{T}(1) = 2000$, $T=30$} \\
						$\E$, $\gamma=1.00$ & $1702$ & $0.770$ & $0.228$ & $0.737$ \\
						$\E$, $\gamma=0.99$ & $1670$ & $0.759$ & $0.275$ & $0.698$ \\
						$\E$, $k=0.05$ & $1612$ & $0.735$ & $0.438$ & $0.587$ \\
						\bottomrule
					\end{tabular}
			\end{sc}
		\end{small}
	\end{center}
\end{table}

\begin{table}[!ht]
	\caption{Evaluation of the learned optimal policies for a risk-sensitive agent with $c_{T/2}(1)=100$ and $c_{T}(1)=150$ over 10,000 episodes, evaluated for 3 different runs.}
	\label{tab:gbwm2}
	\begin{center}
		\begin{small}
			\begin{sc}
					\begin{tabular}{c r r r r}
						\toprule
						Method & $\E[U]$ & $\mathbb{P}[y_{T/2} \geq 100]$ & $\mathbb{P}[\psi_{T/2} = 1]$ & $\mathbb{P}[\psi_{T} = 1]$\\
						\midrule
						\multicolumn{5}{c}{$u_{T/2}(1) = 1000$, $u_{T}(1) = 1000$, $T=10$} \\
						$\operatorname{CVaR}_{0.1}$, $\gamma=1.00$ & $999$ & $0.998$ & $0.998$ & $0.001$ \\
						$\operatorname{CVaR}_{0.1}$, $k=0.05$ & $999$ & $0.994$ & $0.994$ & $0.005$ \\
						\midrule
						\multicolumn{5}{c}{$u_{T/2}(1) = 1000$, $u_{T}(1) = 2000$, $T=10$} \\
						$\operatorname{CVaR}_{0.1}$, $\gamma=1.00$ & $1007$ & $0.986$ & $0.986$ & $0.010$ \\
						$\operatorname{CVaR}_{0.1}$, $k=0.05$ & $1020$ & $0.977$ & $0.977$ & $0.022$ \\
						\midrule
						\multicolumn{5}{c}{$u_{T/2}(1) = 1000$, $u_{T}(1) = 1000$, $T=30$} \\
						$\operatorname{CVaR}_{0.1}$, $\gamma=1.00$ & $920$ & $0.848$ & $0.848$ & $0.073$ \\
						$\operatorname{CVaR}_{0.1}$, $k=0.05$ & $921$ & $0.832$ & $0.832$ & $0.089$ \\
						\midrule
						\multicolumn{5}{c}{$u_{T/2}(1) = 1000$, $u_{T}(1) = 2000$, $T=30$} \\
						$\operatorname{CVaR}_{0.1}$, $\gamma=1.00$ & $1014$ & $0.895$ & $0.889$ & $0.062$ \\
						$\operatorname{CVaR}_{0.1}$, $k=0.05$ & $1015$ & $0.842$ & $0.842$ & $0.087$ \\
						\bottomrule
					\end{tabular}
			\end{sc}
		\end{small}
	\end{center}
\end{table}

\subsection{Atari 2600}
\label{app:atari}
The Atari 2600 benchmark, accessed via the Arcade Learning Environment \citep{bellemare13arcade}, consists of a suite of classic video games with varied dynamics and reward structures. 
Our goal in this experiment is to illustrate the effect of the time-consistent formulation rather than to achieve state-of-the-art performance. Accordingly, and due to computational limitations, we run these experiments for 10 million steps and exclude particularly challenging games such as \textit{Skiing}, \textit{Solaris}, \textit{Montezuma's Revenge}, \textit{Pitfall}, and \textit{Private Eye} \citep{Badia.etal2020}. 

In our experiments, we employ the standard Atari preprocessing pipeline, including frame skipping, grayscale conversion, resizing, frame stacking, and reward clipping. We adopt a convolutional architecture similar to \citet{Mnih.etal2015} to encode visual observations. The input is a stack of four frames processed by three convolutional layers with 32 filters of size $8\times8$ (stride 4), 64 filters of size $4\times4$ (stride 2), and 64 filters of size $3\times3$ (stride 1), each followed by ReLU activations. The resulting features are flattened into a 3136-dimensional vector. To augment the time step, a scalar time input is embedded via a fully connected layer with 64 units and ReLU activation and concatenated with the visual features. The combined representation is passed through a fully connected layer with 512 units and ReLU activation, followed by an output layer with $n_{\text{act}} \times m \times n_{\text{quantiles}}$ units, representing quantile values for each action and discount factor.

We consider a multi-horizon risk-neutral agent, as described in Section \ref{sec:multihorizon}, with a hyperbolic discount function as the time preference. The discretization of discount factors and the calculation of weights $w(\gamma)$ are adopted directly from the open-source implementation provided by \citet{Fedus.etal2019a}, available at \url{https://github.com/google-research/google-research/tree/master/hyperbolic_discount}. Their implementation consists of two steps: determining the set of evaluation discount factors and aggregating the resulting Q-values via a Riemann sum.

To ensure that the approximation covers the relevant range of time horizons up to a maximum effective discount $\gamma_{\max}$, the discount factors are selected using a power-law spacing. Specifically, for a hyperbolic coefficient $k$ and $m$ discount factors, the \textit{evaluation gammas} (denoted $\gamma_{\text{eval}}$) are generated as $\gamma_{\text{eval}}^{(i)} = 1 - b^i$ for $i = 0, \dots, m-1$, where the base $b$ is chosen to satisfy $(1 - b^m)^k = \gamma_{\max}$. This construction ensures that the largest $\gamma$ used for Bellman updates corresponds to the target $\gamma_{\max}$ after exponentiation.

The final hyperbolic Q-value is estimated as a weighted sum of the Q-values learned for each individual $\gamma \in \Gamma$. The weights correspond to the widths of the intervals in the Riemann sum. We use a lower Riemann sum approximation, where the weight $w_i$ for the $i$-th Q-value is given by the difference between consecutive evaluation gammas: $w_i = \gamma_{\text{eval}}^{(i+1)} - \gamma_{\text{eval}}^{(i)}$. In our time-consistent formulation, these weights are updated dynamically as $w_{i,t} = \frac{w_i \gamma_i^t}{\sum_{j=1}^m w_j \gamma_j^t}$.

To ensure that differences in performance between our time-consistent agent and the time-inconsistent baseline are attributable solely to the evolving weight function and not simply to augmenting the state with the time step, we include a third approach in our analysis. In this method, which we call Time-Aware Time-Inconsistent, time is added to the state, but the agent continues to use the time-inconsistent weight functions.

In our experiments, we use $m=10$, $\gamma_{\max}=0.999$, and $k=0.05$. Figure \ref{fig:improvement_time} shows the performance improvement of our approach compared to the time-aware baseline; we observe that our approach outperforms this baseline in 33 out of 50 games, with an average improvement of 15.73\%. Figure \ref{fig:atari_full} shows the learning curves of the three approaches across 50 Atari games.

\begin{figure*}[!t]
	\centering
	\includegraphics[width=\linewidth]{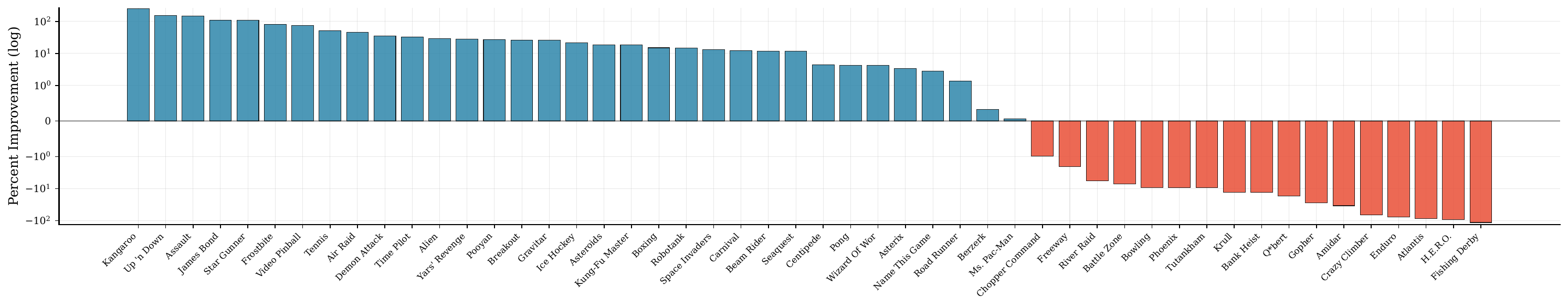}
	\caption{\textbf{Relative performance improvement of our Time-Consistent algorithm across 50 Atari games.} Each bar shows the percentage improvement in mean return for the Time-Consistent policy over the Time-aware Time-Inconsistent baseline, averaged over 3 seeds per game. The Time-Consistent policy outperforms the baseline in 33 out of 50 games. Across all games, it achieves a mean improvement of 15.73\% and a median improvement of 8.02\%, demonstrating the benefits of maintaining time-consistency under hyperbolic discounting.} 
	\label{fig:improvement_time}
\end{figure*}

\begin{figure}[!ht]
	\centering
	\includegraphics[width=0.95\linewidth]{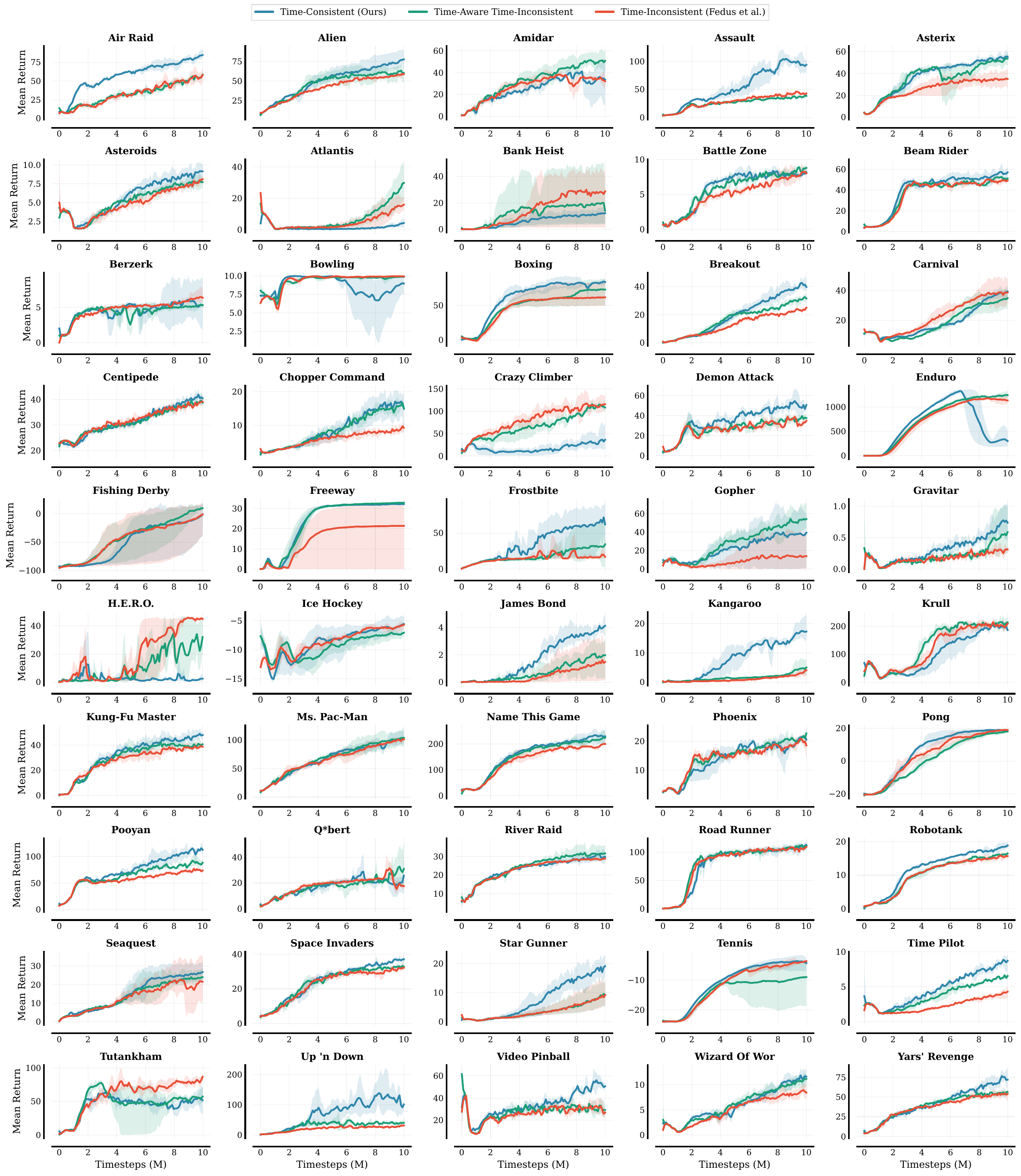}
	\caption{\textbf{Comparison on Atari 2600.} Learning curves over 10 million steps (40 million frames) for 50 Atari games. Our Time-Consistent agent (Blue) is compared against the Time-Inconsistent hyperbolic baseline from~\citet{Fedus.etal2019a} (Red) and a time-aware variant of this baseline (Green). Shaded regions represent the standard deviation across 3 seeds. The results demonstrate that properly handling the non-stationarity of the optimal policy is critical for effective learning with general discount functions.}
	\label{fig:atari_full}
\end{figure}


\end{document}